\def\Figref#1{Figure~\ref{#1}}
\def\eqref#1{equation~\ref{#1}}
\def\1{\bm{1}}
\def\eps{{\epsilon}}
\DeclareMathAlphabet{\mathsfit}{\encodingdefault}{\sfdefault}{m}{sl}
\SetMathAlphabet{\mathsfit}{bold}{\encodingdefault}{\sfdefault}{bx}{n}
\newtheorem{proposition}{Proposition}
\newtheorem{propositionappendix}{Proposition}
\newtheorem{lemma}{Lemma}
\newcommand\defeq{\stackrel{\mathclap{\small{\text{def}}}}{=}}
\newcommand{\cmmnt}[1]{}
\title{A Unified Approach to Interpreting and Boosting Adversarial Transferability}
\author{Xin Wang$^{a*}$, Jie Ren$^{a}$\thanks{Equal contribution}, Shuyun Lin$^{a}$, Xiangming Zhu$^{a}$, Yisen Wang$^{b}$, Quanshi Zhang$^{a}$\thanks{Correspondence. This study is conducted under the supervision of Dr. Quanshi Zhang. \texttt{zqs1022@sjtu.edu.cn}. Quanshi Zhang is with the John Hopcroft Center and the MoE Key Lab of Artificial Intelligence, AI Institute, at
the Shanghai Jiao Tong University, China.} \\
$^{a}$Shanghai Jiao Tong University \\
$^{b}$Key Lab. of Machine Perception (MoE), School of EECS, Peking University, Beijing, China
}
\begin{document}

\maketitle

\begin{abstract}
In this paper, we use the interaction inside adversarial perturbations to explain and boost the adversarial transferability.
We discover and prove the negative correlation between the adversarial transferability and the interaction inside adversarial perturbations.
The negative correlation is further verified through different DNNs with various inputs.
Moreover, this negative correlation can be regarded as a unified perspective to understand current transferability-boosting methods.
To this end, we prove that some classic methods of enhancing the transferability  essentially decease interactions inside adversarial perturbations.
Based on this, we propose to directly penalize interactions during the attacking process, which significantly improves the adversarial transferability.
Our code is available online\footnote{ \small \url{https://github.com/xherdan76/A-Unified-Approach-to-Interpreting-and-Boosting-Adversarial-Transferability}}.
% \textit{We will release the code when the paper is accepted.}
\end{abstract}

\section{Introduction} \label{sec:intro}
Adversarial examples of deep neural networks (DNNs) have attracted increasing attention in recent years~\citep{ma2018characterizing,pgd2018,wang2019dynamic,ilyas2019adversarial,duan2020adversarial,wu2020adversarial,ma2021understanding}. 
\citet{goodfellow2014explaining} found the transferability of adversarial perturbations, and used perturbations generated on a source DNN to attack other target DNNs. Although many methods have been proposed to enhance the transferability of adversarial perturbations~\citep{mim,wu2018understanding,Wu2020Skip}, the essence of the improvement of the transferability is still unclear.

This paper considers the interaction inside adversarial perturbations as a new perspective to interpret adversarial transferability.
Interactions inside adversarial perturbations are defined using the Shapley interaction index proposed in game theory~\citep{grab1999An,shapley1953value}.
Given an input sample $x\in\mathbb{R}^n$, the adversarial attack aims to fool the DNN by adding an imperceptible perturbation $\delta\in\mathbb{R}^n$ on $x$.
Each unit in the perturbation map is termed a \textit{perturbation unit}.
Let $\phi_i$ denote the importance of the $i$-th perturbation unit $\delta_i$ to attacking. $\phi_i$ is implemented as the Shapley value, which will be explained later.
The interaction between perturbation units $\delta_i,\delta_j$ is defined as 
the change of the  $i$-th unit's importance $\phi_i$  when the $j$-th unit is perturbed \emph{w.r.t} the case when the $j$-th unit is not perturbed. If the perturbation $\delta_j$ on the $j$-th unit increases the importance $\phi_i$ of the $i$-th unit, then there is a positive interaction between $\delta_i$ and $\delta_j$.
If the perturbation $\delta_j$ decreases the importance $\phi_i$, it indicates a negative interaction.

In this paper, we discover and partially prove a clear negative correlation between the transferability and the interaction between adversarial perturbation units, \emph{i.e.} adversarial perturbations with lower transferability tend to exhibit larger interactions between perturbation units. 
We verify such a correlation based on both the theoretical proof and comparative studies.
Furthermore, based on the correlation, we propose to penalize interactions during attacking to improve the transferability. 

In fact, our research group led by Dr. Quanshi Zhang has proposed game-theoretic interactions, including interactions of different orders~\citep{zhang2020game} and multivariate interactions~\citep{zhang2021interpreting}.
As a basic metric, the interaction can be used to explain signal processing in trained DNNs from different perspectives. 
For example, we have build up a tree structure to explain the hierarchical interactions between words in NLP models~\citep{zhang2021building}.
We have also used interactions to explain the generalization power of DNNs~\citep{dropout2020zhang}. 
The interaction can also explain the utility of adversarial training~\citep{ren2021game}. 
As an extension of the system of game-theoretic interactions, in this study, we explain the adversarial transferability based on interactions.

In this paper, the background for us to investigate the correlation between adversarial transferability and the interaction is as follows.
First, we prove that multi-step attacking usually generates perturbations with larger interactions than single-step attacking. 
Second, according to \citep{xie2019improving}, multi-step attacking tends to generate more over-fitted adversarial perturbations with lower transferability than single-step attacking. 
We consider that the more dedicated interaction reflects more over-fitting towards the source DNN, which hurts adversarial transferability.
{In this way, we propose the hypothesis that \textit{the transferability and the interaction are negatively correlated.}}

\textbullet{
\textbf{Comparative studies} are conducted to verify this negative correlation through different DNNs.}

\textbullet{ 
\textbf{Unified explanation.}
Such a negative correlation provides a unified view to understand current transferability-boosting methods. 
We theoretically prove that some classic transferability-boosting methods~\citep{mim,wu2018understanding,Wu2020Skip} essentially decrease interactions between perturbation units, 
which also verifies the hypothesis of the negative correlation.
}

\textbullet{ 
\textbf{Boosting adversarial transferability.}
Based on above findings, we propose a loss to decrease interactions between perturbation units during attacking, namely the interaction loss, in order to enhance the adversarial transferability.
{The effectiveness of the interaction loss further proves the negative correlation between the adversarial transferability and the interaction inside adversarial perturbations.}
Furthermore, we also try to only use the interaction loss to generate perturbations without the loss for the classification task.
We find that such perturbations still exhibit moderate adversarial transferability for attacking. 
Such perturbations may decrease interactions encoded by the DNN, thereby damaging the inference patterns of the input.}

Our contributions are summarized as follows. 
(1) We reveal the negative correlation between the transferability and the interaction inside adversarial perturbations.
(2) We provide a unified view to understand current transferability-boosting methods. 
(3) We propose a new loss to penalize interactions inside adversarial perturbations and enhance the adversarial transferability.

\section{Related work}
\textbf{Adversarial transferability.}
Attacking methods can be roughly divided into two categories, \emph{i.e.} white-box attacks~\citep{szegedy2013intriguing, goodfellow2014explaining,jsma,carlini2017towards,bim,Su2019Onepixel,pgd2018} and black-box attacks~\citep{liu2016delving,substitute,zoo, bhagoji2018practical,ilyas2018black,bai2020improving}.
A specific type of the black-box attack is based on the adversarial transferability~\citep{mim,wu2018understanding,xie2019improving,Wu2020Skip}, which transfers adversarial perturbations on a surrogate/source DNN to a target DNN.

Thus, some previous studies focused on the transferability of adversarial attacking.
\citet{liu2016delving} demonstrated that non-targeted attacks were easy to transfer, while the targeted attacks were difficult to transfer.
\citet{wu2018understanding} and \citet{demontis2019adversarial} explored factors influencing the transferability, such as network architectures, model capacity, and gradient alignment.
Several methods have been proposed to enhance the transferability of adversarial perturbations.
The momentum iterative attack  (MI Attack)~\citep{mim} incorporated the momentum of gradients to boost the transferability. 
The variance-reduced attack (VR Attack)~\citep{wu2018understanding} used the smoothed gradients to craft perturbations with high transferability.
The diversity input attack (DI Attack)~\citep{xie2019improving} applied the adversarial attacking to randomly transformed input images, which included random resizing and padding with a certain probability.
The skip gradient method (SGM Attack)~\citep{Wu2020Skip} used the gradients of the skip connection to improve the transferability.
\citet{dong2019evading} proposed the translation-invariant attack (TI Attack) to evade robustly trained DNNs.
\citet{li2020learning} used the dropout erosion and the skip connection erosion to improve the transferability.
% In comparison, we explain the transferability based on game theory, and discover the negative correlation between the transferability and interactions.
% Besides, the enhancement of the transferability of the MI, DI, and SGM Attacks can also be explained using such a negative correlation.
In comparison, we explain the transferability based on game theory, and discover the negative correlation between the transferability and interactions as a unified explanation for some above methods.

\textbf{Interaction.}
The interaction between input variables has been widely investigated.
\citet{grab1999An} proposed the Shapley interaction index based on the Shapley value~\citep{shapley1953value} in game theory.
\citet{Sorokina2018detecting} defined the interaction of $K$ input variables of additive models.
\citet{Lundberg2017tree} quantified interactions between each pair of input variables for tree-ensemble models.
Some studies mainly focused on interactions to analyze DNNs.
\citet{tsang2018detecting} measured statistical interactions based on DNN weights.
\citet{Murdoch2018beyond} proposed to extract interactions in LSTMs by disambiguating information of different gates, and \citet{Sigh2019hirachical} extended this method to CNNs.
\citet{jin2020hierachical} quantified the contextual independence of words to hierarchically explain the LSTMs.
\citet{janizek2020explaining} extended the method of Integrated Gradients~\citep{sundararajan2017axiomatic} to quantify pairwise interactions of input features based on the Hessian matrix, which required the DNN to use the SoftPlus operation replace the ReLU operation.
\citet{chen2020generating} extended the attribution in~\citep{chen2020learning} to use the Shapley interaction index to generate hierarchical explanations of NLP tasks.
In comparison, in this study, we use the Shapley interaction index to explain and improve the transferability of adversarial perturbations.

\section{The relationship between transferability and interactions}
\textbf{Preliminaries: the Shapley value.}
The Shapley value was first proposed in game theory~\citep{shapley1953value}. 
Considering multiple players in a game, each player aims to win a high reward. 
The Shapley value is considered as a unique and unbiased approach to fairly allocating the total reward gained by all players to each player~\citep{weber1988probabilistic}. The Shapley value satisfies four {desirable properties}, \emph{i.e.} the \textit{linearity}, \textit{dummy}, \textit{symmetry}, and \textit{efficiency} (please see the Appendix~\ref{sec:aximos} for details).
Let {\small $\Omega = \{1, 2, \dots, n\}$} denote the set of all players, and let $v(\cdot)$ denote the reward function.
$v(S)$ represents the reward obtained by a set of players $S\subseteq\Omega$.
The Shapley value $\phi(i|\Omega)$  unbiasedly measures the contribution of the $i$-th player to the total reward gained by all players in $\Omega$, as follows.
\begin{equation}
{\sum}_{i}\phi(i|\Omega)=v(\Omega)-v(\emptyset),\quad\quad\phi(i|\Omega)=\sum\nolimits_{{S \subseteq \Omega \backslash\{i\}}} \frac{|S| !(n-|S|-1) !}{n !}(v(S \cup\{i\})-v(S)). \label{eq:shapley}
\end{equation}

\textbf{Adversarial attack.}
Given an input sample $x\in[0, 1]^n$ with the true label $y\in\{1, 2, \dots, C\}$, we use $h(x)\in\mathbb{R}^C$ to denote the output of the DNN before the softmax layer.
To simplify the story, in this study, we mainly focus on the untargeted adversarial attack.
The goal of the untargeted adversarial attack is to add a human-imperceptible perturbation $\delta\in\mathbb{R}^n$ on the sample $x$, and make the DNN classify the perturbed sample $x'=x+\delta$ into an incorrect category, \emph{i.e.} $\arg\max_{y'} h_{y'}(x')\ne y$.
The objective of adversarial attacking is usually formulated as follows. 
\begin{equation}
\begin{split} \label{eq:attack}
    &\underset{\delta}{\text{maximize}}\quad \ell(h(x+\delta), y) \quad \text{s.t.} \quad \|\delta\|_p\le \epsilon,\; x+\delta\in[0, 1]^n,
\end{split}
\end{equation}where $\ell(h(x+\delta), y)$ is referred to the classification loss, and $\epsilon$ is a constant of the norm constraint.
Please see Appendix~\ref{sec:attack} for technical details of solving Equation~(\ref{eq:attack}).

\subsection{Theoretical understanding of the adversarial attack in game theory.} \label{sec:algo}
In adversarial attacking, given the perturbation $\delta\in\mathbb{R}^n$, we use $\Omega=\{1, 2, \dots, n\}$  to denote all units/dimensions in the perturbation.
We use the Shapley value in Equation~(\ref{eq:shapley}) to measure the contribution of each perturbation unit $i\in\Omega$ to the attack. To this end, it requires us to define the utility of a subset of perturbation units $S\subseteq \Omega$ for attacking, which can be formulated as  $v(S) = \max_{y^\prime\ne y}h_{y^\prime}(x + \delta^{(S)}) - h_y(x + \delta^{(S)})$, according to Equation~(\ref{eq:attack}).
$h_y(\cdot)$ is the value of the $y$-th element of $h(\cdot)\in\mathbb{R}^C$.
$\delta^{(S)}\in \mathbb{R}^n$ is the perturbation which only contains perturbation units in $S$, \emph{i.e.} $\forall i\in S$, $\delta^{(S)}_i=\delta_i$;  $\forall i\notin S$, $\delta^{(S)}_i=0$. 
In this way, $v(\Omega)=\max_{y^\prime\ne y}h_{y^\prime}(x + \delta) - h_y(x + \delta)$ denotes the utility of all perturbation units, and $v(\emptyset)=\max_{y^\prime\ne y}h_{y^\prime}(x) - h_y(x)$ denotes the baseline score without perturbations.
Thus, the overall contribution of perturbation units can be measured as $v(\Omega)-v(\emptyset)$. We apply the Shapley value in Equation (\ref{eq:shapley}) to assign the overall contribution to each perturbation unit as $\sum_i \phi(i|\Omega)=v(\Omega)-v(\emptyset)$,
where $\phi(i|\Omega)$ denotes the contribution of the $i$-th perturbation unit.

\textbf{Interactions.} Perturbation units do not contribute to the adversarial utility independently.
For example, perturbation units may form a certain pattern, \emph{e.g.} an edge in the image.
Thus, perturbations units in the edge must appear together.
The absence of a few units in the pattern may invalidate this pattern. 
Let us consider two perturbation units $i,j$. 
According to~\citep{grab1999An}, the Shapley interaction index between units $i,j$ is defined as the additional contribution as follows.
\begin{equation}
    I_{i j}(\delta) = \phi(S_{i j}|\Omega') - \left[\phi(i|\Omega\setminus\{j\}) + \phi(j|\Omega\setminus\{i\})\right], \label{eq:interaction}
\end{equation}
where $\phi(i|{\Omega\setminus\{j\}})$ and $\phi(j|\Omega\setminus\{i\})$ represent the individual contributions of units $i$ and $j$, respectively, when the perturbation units $i, j$ work individually.
Note that {\small $ \phi(i|{\Omega\setminus\{j\}})$} is computed in the scenario of considering the unit $j$ always absent.
{\small $\sum_i \phi(i|{\Omega\setminus\{j\}}) = v(\Omega\setminus\{j\})-v(\emptyset)$}, due to the absence of perturbation unit $j$.
$\phi(S_{i j}|\Omega')$ denotes the joint contribution of $i, j$, when perturbation units $i, j$ are regarded as a singleton unit $S_{i j}=\{i, j\}$.
In this case, units $i,j$ are supposed to be always perturbed or not perturbed simultaneously, and we can consider that there are  only $n-1$ players in the game.
Thus, the set of all perturbation units is considered as $\Omega'=\Omega \setminus \{i,j\} \cup S_{i j}$.
The joint contribution of $S_{i j}$ is denoted by  {\small $\phi(S_{i j}|\Omega')$}, s.t. {\small $\sum_{i'\in\Omega'\setminus \{S_{i j}\}}\phi(i'|\Omega') + \phi(S_{i j}|\Omega') =v(\Omega')-v(\emptyset)$}.

The interaction defined in Equation~(\ref{eq:interaction}) is equivalent to the change of the $i$-th unit's importance $\phi_i$ when the unit $j$ exists \emph{w.r.t} the case when the unit $j$ is absent.
Please see Appendix~\ref{sec:eq_interaction} for details.

If $I_{i j}(\delta) > 0$, it means $\delta_i$ and $\delta_j$ cooperate with each other, \emph{i.e.} the interaction is positive; 
if $I_{i j}(\delta) < 0$, it means  $\delta_i$ and $\delta_j$ conflict with each other, \emph{i.e.} the interaction is negative. 
The absolute value of $|I_{i j}(\delta)|$ indicates the interaction strength.
The interaction is symmetric that $I_{i j}(\delta) = I_{j i}(\delta)$.

We are given an input sample $x\in\mathbb{R}^n$ and a DNN $h(\cdot)$ trained for classification.
With the definition of interactions, in adversarial attacking, we have the following propositions:
\begin{proposition} \label{pro:1-1}
(Proof in {Appendix~\ref{sec:proof1}})
% Given an input sample $x\in\mathbb{R}^n$ and a DNN $h(\cdot)$ trained for classification, 
The adversarial perturbation generated by the multi-step attack via gradient descent is given as $\delta_{\textrm{multi}}^m = \alpha\sum_{t=0}^{m-1}\nabla_x\ell(h(x+\delta^{t}_{\textrm{multi}}), y)$, where $\delta^{t}_{\textrm{multi}}$ denotes the perturbation after the $t$-th step of updating, and $m$ is referred to as the total number of steps.
The adversarial perturbation generated by the single-step attack is given as
$\delta_{\textrm{single}} = \alpha m \nabla_x\ell(h(x), y)$.
Then, the expectation of interactions between perturbation units in $\delta_{\textrm{multi}}^m$, $\mathbb{E}_{a, b}[{I_{a b}(\delta_{\textrm{multi}}^m)}]$, is larger than $\mathbb{E}_{a, b}[{I_{a b}(\delta_{\textrm{single}})}]$, \emph{i.e.} $\mathbb{E}_{a, b}[{I_{a b}(\delta_{\textrm{multi}}^m)}]\ge\mathbb{E}_{a, b}[{I_{a b}(\delta_{\textrm{single}})}]$.
\end{proposition}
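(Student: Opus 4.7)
The plan is to carry out a two-level Taylor expansion: first expand the utility $v$ around $\delta=0$ to second order so that the Shapley machinery collapses to a closed form for $I_{ab}(\delta)$, and then expand the iterates $\delta^m_{\textrm{multi}}$ around $\alpha=0$ so that $\delta^m_{\textrm{multi}}$ and $\delta_{\textrm{single}}$ can be compared inside the resulting quadratic form.

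First I would write $v(S)=v(\emptyset)+g^{\top}\delta^{(S)}+\tfrac{1}{2}(\delta^{(S)})^{\top}H\delta^{(S)}+O(\|\delta\|^3)$ with $g=\nabla_{\delta}v|_0$ and $H=\nabla^2_{\delta}v|_0$, and invoke the linearity axiom of the Shapley value to split $I_{ab}$ into contributions from the constant, linear, and quadratic parts. The constant and linear parts distribute additively across units and therefore contribute $0$. For the quadratic part, the only piece coupling indices $a$ and $b$ is $H_{ab}\delta_a\delta_b\,\mathbb{1}[a\in S]\,\mathbb{1}[b\in S]$, a two-player unanimity game whose pairwise interaction is $H_{ab}\delta_a\delta_b$ by a direct application of the definition in Equation~(\ref{eq:interaction}). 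Diagonal terms involve only one index, so they produce no pairwise interaction. Hence, under a quadratic surrogate for $v$,
\begin{equation*}
I_{ab}(\delta)=H_{ab}\delta_a\delta_b,\qquad \sum_{a\ne b}I_{ab}(\delta)=\delta^{\top}H\delta-\sum_a H_{aa}\delta_a^2,
\end{equation*}
so $\mathbb{E}_{a,b}[I_{ab}(\delta)]$ is governed by $\delta^{\top}H\delta$ up to the diagonal correction.

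Next, letting $g_t=\nabla_x\ell(h(x+\delta^t_{\textrm{multi}}),y)$ and linearizing $g_t\approx g_0+H\delta^t_{\textrm{multi}}$, the recursion $\delta^{t+1}_{\textrm{multi}}=\delta^t_{\textrm{multi}}+\alpha g_t$ starting from $\delta^0_{\textrm{multi}}=0$ admits the closed-form expansion $\delta^m_{\textrm{multi}}=m\alpha g_0+\binom{m}{2}\alpha^2 Hg_0+O(\alpha^3)$, proved by a one-line induction on $t$, while $\delta_{\textrm{single}}=m\alpha g_0$ exactly. Substituting these into the quadratic form gives
\begin{equation*}
(\delta^m_{\textrm{multi}})^{\top}H\,\delta^m_{\textrm{multi}}-\delta_{\textrm{single}}^{\top}H\,\delta_{\textrm{single}}=2m\binom{m}{2}\alpha^3\|Hg_0\|^2+O(\alpha^4)\ge 0,
\end{equation*}
since $g_0^{\top}H^2 g_0=\|Hg_0\|^2\ge 0$. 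Combined with the closed form above, this delivers $\mathbb{E}_{a,b}[I_{ab}(\delta^m_{\textrm{multi}})]\ge\mathbb{E}_{a,b}[I_{ab}(\delta_{\textrm{single}})]$ in the small-$\alpha$ regime relevant to PGD-type attacks.

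The main obstacle is reconciling the two Hessians in play: the one that drives the gradient-descent recursion (that of $\ell\circ h$) versus the one that appears through $v$ in the definition of interactions (that of the classification margin). I would resolve this either by identifying $\ell$ with $v$, as is natural for an untargeted margin attack, or by a local smoothness/alignment assumption so that their Hessians agree at the order required. A secondary technical irritant is the diagonal correction $\sum_a H_{aa}\delta_a^2$, whose contribution to the interaction difference has a priori indeterminate sign at order $\alpha^3$; I would handle it by averaging only over distinct pairs $a\ne b$ and, if necessary, pushing the expansion one further order to confirm that the non-negative $\|Hg_0\|^2$ contribution dominates.
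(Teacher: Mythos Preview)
Your approach is essentially the paper's: Lemma~\ref{lemma:1} is exactly your quadratic-surrogate reduction $I_{ab}(\delta)\approx H_{ab}\delta_a\delta_b$, Lemma~\ref{lemma:induction} is your linearized recursion giving $\delta^m_{\textrm{multi}}=m\alpha g_0+\binom{m}{2}\alpha^2 Hg_0+O(\alpha^3)$, and the paper's key nonnegative term $\alpha^3 m^2(m-1)\,\mathbb{E}_{a,b}[U_{ab}]$ is, after summing over $b$, exactly your $m^2(m-1)\alpha^3\|Hg_0\|^2$ minus the diagonal correction you flag. The Hessian-identification issue is resolved in the paper by taking $\ell$ to be the margin $v$ itself (see the fairness discussion in Appendix~\ref{sec:fairness}).

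Your only real gap is the diagonal correction, and neither of your proposed fixes works. Averaging over distinct pairs $a\ne b$ is already what $\mathbb{E}_{a,b}[I_{ab}]$ means, so it does not remove the term; and pushing the expansion to higher order in $\alpha$ cannot help because the offending term $\sum_a H_{aa}\,g_a\,(Hg)_a$ sits at the \emph{same} order $\alpha^3$ as your $\|Hg_0\|^2$. The paper does not prove this away either: writing $A_b=(Hg)_b$ and $B_b=g_bH_{bb}$, the leading difference is $\sum_b A_b(A_b-B_b)$, and the paper simply asserts (and verifies empirically, Fig.~\ref{fig:assumption}(b)) that $|B_b|\ll|A_b|$ so that the sign is that of $\sum_b A_b^2\ge 0$. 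You should adopt the same smallness assumption rather than hope a higher-order expansion will rescue the sign.
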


Note that when we compare interactions inside different perturbations, magnitudes of these perturbations should be similar. It is because the comparison of interactions between adversarial perturbations of different magnitudes is not fair.
Therefore, we use the step size $\alpha m$ in the single-step attack to roughly (not accurately) balance the magnitude of perturbations.
The fairness is further discussed in Appendix~\ref{sec:fairness}.

Proposition~\ref{pro:1-1} shows that, in general, adversarial perturbations generated by the multi-step attack tend to exhibit larger interactions than those generated by the single-step attack.
In addition, Appendix~\ref{sec:verify_prop_1} shows that the multi-step attack usually generates  perturbations with larger interactions than noisy perturbations of the same magnitude.
Besides, \citet{xie2019improving} demonstrated that the multi-step attack tends to over-fit  the source DNN, which led to low transferability.
Intuitively, large interactions mean a strong cooperative relationship between perturbation units, which indicates the significant over-fitting towards adversarial perturbations oriented to the source DNN.
In this way, we propose the hypothesis that \textit{the adversarial transferability and the interactions inside adversarial perturbations are negatively correlated.}

\subsection{Empirical verification of the negative correlation} \label{sec:verify1}

\begin{figure}[t]
    \centering
    \includegraphics[width=1.0\linewidth]{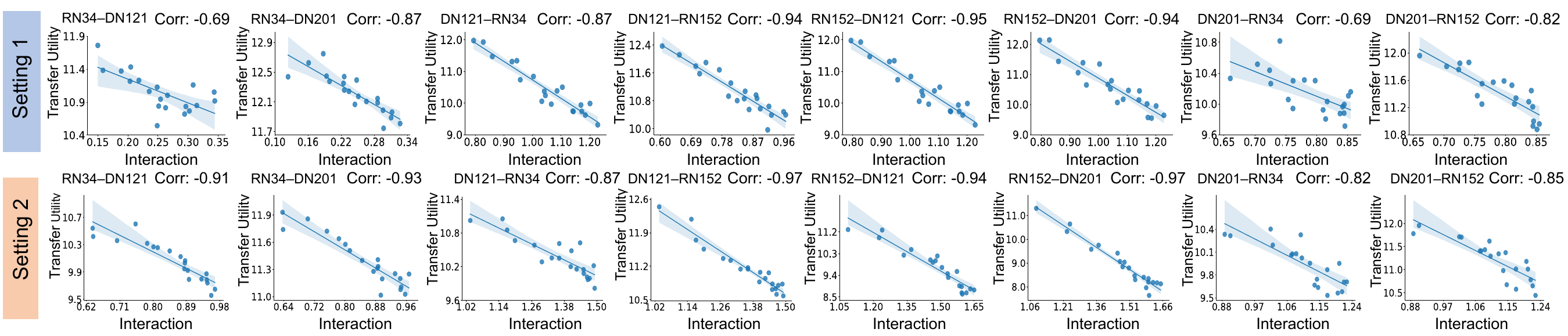}
    \vspace{-20pt}
    \caption{The negative correlation between the transfer utility and the interaction. The correlation is computed as the Pearson correlation. The blue shade in each subfigure represents the 95\% confidence interval of the linear regression.}
    \label{fig:relation}
    \vspace{-5pt}
\end{figure}
To verify the negative correlation between the transferability and interactions, we conduct experiments to examine whether adversarial perturbations with low transferability tend to exhibit larger interactions than those perturbations with high transferability.
Given a source DNN and an input sample $x$, we generate the adversarial example $x'=x+\delta$.
Then, given a target DNN $h^{(t)}$, we measure the transfer utility of  $\delta$ as $\textrm{\textit{Transfer Utility}}=[\max_{y^\prime\ne y}h^{(t)}_{y^\prime}(x + \delta) - h^{(t)}_y(x + \delta)] - [\max_{y^\prime\ne y}h^{(t)}_{y^\prime}(x) - h^{(t)}_y(x)]$ as mentioned in Section~\ref{sec:algo}.
The interaction is given as $\textit{Interaction}=\mathbb{E}_{i, j}\left[ I_{i j}(\delta)\right]$, which is computed on the source DNN.
Note that the computational cost of $I_{ij}(\delta)$ is NP-hard. However, we prove that we can simplify the computation of the average interaction over all pairs of units as follows, which significantly reduces the computational cost. Please see Appendix~\ref{sec:expectation} for the proof.
\begin{equation}
 \mathbb{E}_{i, j}\left[ I_{i j}(\delta)\right] = \frac{1}{n-1}\mathbb{E}_{i} \left[v(\Omega)-v(\Omega\setminus \{i\}) - v(\{i\}) + v(\emptyset)\right].   \label{eq:expect}
\end{equation}

Using 50 images randomly sampled from the validation set of the ImageNet dataset~\citep{imagenet2015}, we generate adversarial perturbations on four types of DNNs, including ResNet-34/152(RN-34/152)~\citep{he2016deep} and DenseNet-121/201(DN-121/201)~\citep{huang2017densely}.
We transfer adversarial perturbations generated on each ResNet to DenseNets.
Similarly, we also transfer adversarial perturbations generated on each DenseNet to ResNets.
\Figref{fig:relation} shows the negative correlation between the transfer utility and the interaction.
Each subfigure corresponds to a specific pair of source DNN and target DNN.
In each subfigure, each point represents the average transfer utility and the average interaction of adversarial perturbations through all testing images.
Different points represent the average interaction and the average transfer utility computed using different hyper-parameters. 
Given an input image $x$, adversarial perturbations are generated by solving the relaxed form of Equation~(\ref{eq:attack}) via the gradient descent, \emph{i.e.} $\min_\delta -\ell(h(x+\delta), y) + c\cdot \|\delta\|^p_p \; \text{s.t.} \; x+\delta\in[0, 1]^n$, where $c\in\mathbb{R}$ is a scalar.
In this way, we gradually change the value of $c$ and set different values of $p$\footnote{We set $p=2$ as the setting 1, and $p=5$ as the setting 2. To this end, the performance of adversarial perturbations is not the key issue in the experiment. Instead, we just randomly set the $p$ value to examine the trustworthiness of the negative correlation under various attacking conditions (even in extreme attacking conditions).} as different hyper-parameters to generate different adversarial perturbations, thereby drawing different points in each subfigure.
% The transferability of adversarial examples and the interaction inside perturbation units have a negative correlation.
Fair comparisons require adversarial perturbations generated with different hyper-parameters $c$ to be comparable with each other.
Thus, we select a constant $\tau$ and take $\|\delta\|_2=\tau$ as the stopping criteria of all adversarial attacks.
% Thereby, we controlled the $\|\delta\|_2$ as a constant as the stopping criteria.
% Besides, we conducted experiments with different norm constraints by setting $p=2$ or $5$. 
% The selection of $\tau$ is discussed in Appendix~\ref{sec:setting}.
Please see Appendix~\ref{sec:setting} for more details.

\section {Unified understanding of transferability-boosting attacks }\label{sec:explaining}
In this section, we prove that some classical methods of improving the adversarial transferability essentially decrease interactions between perturbation units, although these methods are not originally designed to decrease the interaction.
Without loss of generality, let us be given an input sample $x\in\mathbb{R}^n$ and a DNN $h(\cdot)$ trained for classification.

\textbullet{
\textbf{VR Attack}~\citep{wu2018understanding} smooths the classification loss with the Gaussian noise during attacking.
In the VR Attack, the gradient of the input sample is computed as follows.
$g^t=\mathbb{E}_{\xi\sim\mathcal{N}(0, \sigma^2I)}\left[ \nabla_x \ell(h(x + \delta^t + \xi), y)\right]$.}
The following proposition proves that the VR Attack is prone to decrease interactions inside perturbation units. 

\begin{proposition} \label{pro:3}
(Proof in {Appendix~\ref{sec:proof_vra}})
% Given an input image $x\in\mathbb{R}^n$ and a DNN $h(\cdot)$ trained for classification, 
The adversarial perturbation generated by the multi-step attack {is given as} $\delta_{\textrm{multi}}^m=\alpha\sum_{t=0}^{m-1}\nabla_x\ell(h(x+\delta^{t}_{\textrm{multi}}), y)$.
The adversarial perturbation generated by the VR Attack is computed as $\delta_{\text{vr}}^m=\alpha\sum_{t=0}^{m-1}\nabla_x\hat{\ell}(h(x+\delta^{t}_{\textrm{vr}}), y)$, where $\hat{\ell}(h(x+\delta^{t}_{\textrm{vr}}), y) = \mathbb{E}_{\xi\sim\mathcal{N}(0, \sigma^2I)}\left[\ell(h(x + \delta^{t}_{\textrm{vr}} + \xi), y)\right]$.
Perturbation units of $\delta_{\text{vr}}^m$ tend to exhibit smaller interactions than $\delta_{\text{multi}}$, \emph{i.e.}
$\mathbb{E}_x\mathbb{E}_{a, b}[I_{a b}(\delta_{\text{vr}}^m)] \le \mathbb{E}_x\mathbb{E}_{a, b}[I_{a b}(\delta_{\text{multi}}^m)]$.
\label{proposition3}
\end{proposition}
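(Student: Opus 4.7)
My plan is to reduce the comparison of average interactions to a comparison of Hessian quadratic forms in the perturbation, and then to argue that replacing $\ell$ by its Gaussian-smoothed version $\hat\ell$ in the gradient iterations shrinks this quadratic form in expectation over the data.

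\textbf{Step 1: interactions as a Hessian quadratic form.} Starting from the simplified expression in Equation~(\ref{eq:expect}) and second-order Taylor-expanding the utility $v(\delta^{(S)})$ in $\delta$ around $0$, the linear and diagonal quadratic terms cancel in the discrete second difference $v(\Omega)-v(\Omega\setminus\{i\})-v(\{i\})+v(\emptyset)$, yielding
\begin{equation*}
\mathbb{E}_{i,j}[I_{ij}(\delta)] \;=\; \frac{1}{n(n-1)}\Big[\delta^{\top} H_{v}\,\delta \,-\, \sum_{i}(H_{v})_{ii}\delta_{i}^{2}\Big] \;+\; O(\|\delta\|^{3}),
\end{equation*}
where $H_{v}$ is the Hessian of the utility at $x$. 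So it suffices to compare $\delta_{\textrm{vr}}^{\top}H_{v}\delta_{\textrm{vr}}$ with $\delta_{\textrm{multi}}^{\top}H_{v}\delta_{\textrm{multi}}$ up to diagonal corrections.

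\textbf{Step 2: expanding the iterates and invoking smoothing.} Reusing the Taylor-expansion technique from the proof of Proposition~\ref{pro:1-1}, unrolling the $m$-step recursion in the step size $\alpha$ gives
\begin{equation*}
\delta_{\textrm{multi}}^{m} = m\alpha\, g + \binom{m}{2}\alpha^{2}\, H g + O(\alpha^{3}),\qquad \delta_{\textrm{vr}}^{m} = m\alpha\, \hat g + \binom{m}{2}\alpha^{2}\, \hat H \hat g + O(\alpha^{3}),
\end{equation*}
where $g,H$ and $\hat g,\hat H$ are the gradients and Hessians of $\ell$ and $\hat\ell$ at $x$. Plugging into Step 1, the leading difference of average interactions is $\frac{m^{2}\alpha^{2}}{n(n-1)}(g^{\top}H_{v}g - \hat g^{\top}H_{v}\hat g)$, with analogous third-order cross terms. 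Gaussian convolution is a positive, self-adjoint operator that damps high-frequency components, so $\hat g$ is a weighted average of $g$ over a Gaussian neighborhood whose projection onto the top eigenspace of $H_{v}$ is attenuated; taking the expectation over the data distribution, I would derive $\mathbb{E}_{x}[\hat g^{\top}H_{v}\hat g] \le \mathbb{E}_{x}[g^{\top}H_{v}g]$ and an analogous bound for the cross term, which together yield the claimed inequality.

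\textbf{Main obstacle.} The crux is the sign of the smoothing correction. To leading order in $\sigma^{2}$, $\hat g^{\top}H_{v}\hat g - g^{\top}H_{v}g$ is governed by third derivatives of $\ell$ and is not pointwise sign-definite; the argument must therefore exploit the averaging over $x$, or invoke a mild regularity assumption on the typical curvature of trained DNNs along the adversarial direction. A secondary issue is that the quadratic approximation is only meaningful for perturbations of comparable magnitude, so an explicit fairness-style normalization (analogous to the discussion after Proposition~\ref{pro:1-1}) is needed to control the $O(\alpha^{3})$ and $O(\|\delta\|^{3})$ remainders uniformly in $\sigma$.
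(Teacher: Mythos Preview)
Your setup in Steps~1 and~2 matches the paper's: both reduce $\mathbb{E}_{a,b}[I_{ab}(\delta)]$ to a Hessian bilinear form $\delta_a H_{ab}\delta_b$ (Lemma~\ref{lemma:1}) and then expand the iterates to get $\delta^m = m\alpha g + \binom{m}{2}\alpha^2 Hg + \cdots$ for both attacks. The divergence is in where you look for the sign. You try to extract the inequality from the leading $O(\alpha^2)$ term $g^{\top}H_v g - \hat g^{\top}H_v\hat g$ via a spectral-damping heuristic, and you correctly flag in your ``Main obstacle'' that this difference is governed by third derivatives and is not sign-definite pointwise. That obstacle is real: your damping argument does not go through, because $H_v$ is not positive semidefinite and Gaussian convolution of $g$ has no reason to shrink an indefinite quadratic form.

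The paper resolves this with a different mechanism. It posits the statistical model $g_a(x)=\hat g_a(x)+\epsilon_{g_a}$ and $H_{ab}(x)=\hat H_{ab}(x)+\epsilon_{H_{ab}}$ with independent zero-mean noises (Lemma~\ref{lemma:2}), under which the $O(\alpha^2)$ leading difference $\mathbb{E}_x[\hat g_a\hat g_b\hat H_{ab}-g_ag_bH_{ab}]$ is exactly \emph{zero}, not negative. The actual sign comes from the next order, the $O(\alpha^3)$ term $\frac{\alpha^3(m-1)m^2}{2}\,\mathbb{E}_x[\hat g_a^2\hat H_{ab}^2 - g_a^2 H_{ab}^2]$, which is $\le 0$ by a straightforward variance argument: the unsmoothed squares carry the extra nonnegative $\mathbb{E}[\epsilon^2]$ contributions. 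So the missing idea in your proposal is twofold: (i)~model the relationship between $g,H$ and $\hat g,\hat H$ as ``smoothed plus independent noise'' rather than via spectral attenuation, and (ii)~recognize that the inequality lives at second order in $H$ (the cross term from $\delta = m\alpha g$ times $\binom{m}{2}\alpha^2 Hg$), not at the leading order you targeted.
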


Besides the theoretical proof, we also conduct experiments to compare interactions of perturbation units generated by the baseline multi-step attack (implemented as \citep{pgd2018}) with those of perturbation units generated by the VR Attack.
Table~\ref{tab:interaction} shows that the VR Attack exhibits lower interactions between perturbation units than the baseline multi-step attack.

\textbullet{
\textbf{MI Attack}~\citep{mim} incorporates the momentum of gradients when updating the adversarial perturbation.
In the MI Attack, the gradient used in step $t$ is computed as follows.
$g^{t}=\mu \cdot g^{t-1}+\nabla_x \ell\left(h\left(x+\delta^{t-1}\right), y\right)/\left\|\nabla_x \ell\left(h\left(x+\delta^{t-1}\right), y\right)\right\|_{1}$.

Note that the original MI Attack and the multi-step attack cannot be directly compared, since that magnitudes of the generated perturbations cannot be fairly controlled.
The values of interactions are sensitive to the magnitude of perturbations. Comparing perturbations with different magnitudes is not fair.
Thus, we slightly revise the MI Attack as $\forall t > 0, g_{\text{mi}}^t = {\mu} g_{\textrm{mi}}^{t-1} + {(1-\mu)} \nabla_x\ell(h(x+\delta^{t-1}_{\textrm{mi}}), y); g_{\textrm{mi}}^0=0$, where {\small $\mu={(t-1)} / {t}$}.
We investigate the interaction of adversarial perturbations generated by the original multi-step attack and the MI Attack.
We prove the following proposition, which shows that the MI Attack decreases the interaction between perturbation units in most cases.
}

\begin{proposition} \label{pro:2}
(Proof in {Appendix}~\ref{sec:proof_mi})
% Given an input sample $x\in\mathbb{R}^n$ and a DNN $h(\cdot)$ trained for classification, 
The adversarial perturbation generated by the multi-step attack {is given as} $\delta^m_{\textrm{multi}}=\alpha\sum_{t=0}^{m-1}\nabla_x\ell(h(x+\delta^{t}_{\textrm{multi}}), y)$.
The adversarial perturbation generated by the multi-step attack incorporating the momentum is computed as $\delta^m_{\text{mi}} = \alpha\sum_{t=0}^{m-1} g^t_{\textrm{mi}}$. 
Perturbation units of $\delta_{\text{mi}}^m$ exhibit smaller interactions than $\delta_{\text{multi}}^m$, \emph{i.e.} $\mathbb{E}_{a, b}[I_{a b}(\delta_{\text{mi}}^m)] \le \mathbb{E}_{a, b}[I_{a b}(\delta_{\text{multi}}^m)]$. 
\label{proposition2}
\end{proposition}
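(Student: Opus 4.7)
The plan is to Taylor-expand both attacks to second order around $x$ and then apply Equation~(\ref{eq:expect}) to compare the resulting interactions, mirroring the strategy used for Proposition~\ref{pro:1-1}.

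First I would unroll the momentum recursion. A straightforward induction on $t$ gives the running-average form $g^t_{\text{mi}} = \tfrac{1}{t}\sum_{s=0}^{t-1}\nabla_x \ell(h(x+\delta^s_{\text{mi}}), y)$ for $t\ge 1$, which isolates the essential difference from the multi-step attack (which uses only the most recent gradient). Let $g \defeq \nabla_x\ell(h(x),y)$ and let $H$ denote the Hessian of $\ell(h(\cdot),y)$ at $x$. Taylor-expanding $\nabla_x\ell(h(x+\delta^t), y) \approx g + H\delta^t$ and substituting the leading-order estimates $\delta^t_{\text{multi}} \approx \alpha t\, g$ and $\delta^t_{\text{mi}} \approx \alpha(t-1)\, g$ into their respective updates yields
\begin{equation*}
\delta^m_{\text{multi}} \approx \alpha m\, g + \tfrac{\alpha^2 m(m-1)}{2}\, H g,
\qquad
\delta^m_{\text{mi}} \approx \alpha(m-1)\, g + \tfrac{\alpha^2}{2}\Big(\textstyle\sum_{t=1}^{m-1}\tfrac{(t-1)(t-2)}{t}\Big) H g.
\end{equation*}
The linear terms in $g$ agree to leading order in $m$, but summing the series shows that the second-order drift along $Hg$ carries coefficient $\sim \alpha^2 m^2/2$ for the multi-step attack versus $\sim \alpha^2 m^2/4$ for MI; averaging past gradients roughly halves the cumulative Hessian buildup.

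Next I would apply Equation~(\ref{eq:expect}) together with a second-order Taylor expansion of $v(\cdot)$ around $x$. This yields $\mathbb{E}_{i,j}[I_{ij}(\delta)] = \frac{1}{n(n-1)}\big(\delta^\top \tilde H \delta - \sum_i \tilde H_{ii}\delta_i^2\big) + O(\|\delta\|^3)$, where $\tilde H$ is the Hessian of the attack utility $v$ at $x$. Plugging in each perturbation gives
\begin{equation*}
\delta^\top \tilde H \delta \;=\; \alpha^2 m^2\, g^\top \tilde H g \;+\; c\,\alpha^3 m^3\, g^\top H \tilde H g \;+\; O(\alpha^4),
\end{equation*}
with $c=1$ for the multi-step attack and $c=1/2$ for MI. The leading $\alpha^2 m^2$ term is identical for the two attacks and cancels, and the diagonal corrections $\sum_i \tilde H_{ii}\delta_i^2$ match at leading order and also cancel; the desired inequality therefore reduces to checking the sign of the $O(\alpha^3)$ gap $\tfrac{\alpha^3 m^3}{2}\, g^\top H\tilde H g$.

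The main obstacle is controlling this sign: while $H^2$ is automatically positive semidefinite, the Hessian $\tilde H$ of the utility $v$ is not identical to the Hessian $H$ of the classification loss, so $H\tilde H$ need not be PSD for every individual input. I would handle this by invoking the standard softmax-margin correspondence, under which $\tilde H$ and $H$ are positive scalar multiples of each other in a neighborhood of correctly classified inputs, so that $g^\top H\tilde H g \ge 0$; alternatively one can average over $x$ so the cross term becomes non-negative in expectation. Once this step is in place, the inequality $\mathbb{E}_{a,b}[I_{ab}(\delta^m_{\text{mi}})] \le \mathbb{E}_{a,b}[I_{ab}(\delta^m_{\text{multi}})]$ follows directly, and the residual sign ambiguity precisely accounts for the qualifier ``in most cases'' in the surrounding discussion.
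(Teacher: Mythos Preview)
Your overall strategy matches the paper's: Taylor-expand both iterations, show that the momentum averaging halves the Hessian drift coefficient (the paper obtains the exact coefficients $\tfrac{\alpha m(m-1)}{2}$ versus $\tfrac{\alpha m(m-1)}{4}$, agreeing with your asymptotic $m^2/2$ versus $m^2/4$), plug into a second-order formula for the mean interaction, and argue the $O(\alpha^3)$ gap has the correct sign. You are also more careful than the paper in one respect: you distinguish the Hessian $\tilde H$ of the utility $v$ from the Hessian $H$ of the training loss, whereas the paper silently uses the same symbol $H$ for both.

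There is, however, a genuine gap in your treatment of the diagonal term. You assert that the corrections $\sum_i \tilde H_{ii}\delta_i^2$ ``match at leading order and also cancel''. They match at $O(\alpha^2)$, but at the very $O(\alpha^3)$ order where your comparison lives they differ by $\tfrac{1}{2}\alpha^3 m^3\sum_i \tilde H_{ii}\, g_i\, (Hg)_i$, which has no automatic sign. The paper does not ignore this: it rewrites the relevant quantity as $\sum_b (A-B)A$ with $A=(Hg)_b$ and $B=g_b H_{bb}$, and then invokes an \emph{empirically verified} assumption that $|B/A|\ll 1$ (Figure~\ref{fig:assumption}(b)) so that $(A-B)A\approx A^2\ge 0$. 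Your proof needs an analogous step; the softmax--margin proportionality $\tilde H\propto H$ handles the main term $g^\top H\tilde H g\propto \|Hg\|^2\ge 0$ cleanly, but it does not by itself control the diagonal residual.

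A smaller point: with the indexing you use ($g^0_{\text{mi}}=0$), the leading term of $\delta^m_{\text{mi}}$ is $\alpha(m-1)g$ rather than $\alpha m g$, so the $O(\alpha^2)$ interaction contributions do not cancel exactly; they leave a residual of order $\alpha^2 m\, g^\top\tilde H g$. The paper's revised MI recursion is designed so that the leading terms coincide exactly (its Lemma on $\Delta x^t_{\text{mi}}$ gives $\delta^m_{\text{mi}}=\alpha m g+\tfrac{\alpha^2 m(m-1)}{4}Hg+\cdots$), which is what makes the comparison reduce purely to the $O(\alpha^3)$ term. You should either adopt that convention or argue explicitly that the $O(\alpha^2 m)$ residual is dominated by the $O(\alpha^3 m^3)$ gap in the regime of interest.
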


\textbullet{
\textbf{SGM Attack}~\citep{Wu2020Skip} exploits the gradient information of the skip connection in ResNets to improve the transferability of adversarial perturbations. 
The SGM Attack revises the gradient in the backpropagation, which can be considered as to add a specific dropout operation in the backpropagation. 
We notice that \citet{dropout2020zhang} has proved that the dropout operation can decrease the significance of interactions, so as to decrease the significance of the over-fitting of DNNs. Thus, this also proves that the SGM Attack decreases interactions between perturbation units. 
}

Besides the theoretical proof, we also conduct experiments to compare interactions of perturbation units generated by the baseline multi-step attack (implemented as ~\cite{pgd2018}) with those of perturbation units generated by the SGM Attack.
Table~\ref{tab:interaction}  shows that the SGM Attack exhibits lower interactions  than the baseline multi-step attack.

\section{The interaction loss for transferability enhancement}
\textbf{Interaction loss.}
Based on findings in previous sections,
we propose a loss to directly penalize interactions during attacking, in order to improve the transferability of adversarial perturbations. 
Based on Equation (\ref{eq:attack}), we jointly optimize the classification loss and the interaction loss to generate adversarial perturbations.
This method is termed the interaction-reduced attack (IR Attack).
\begin{equation}
    \begin{split}
        \underset{\delta}{\text{max}}\,  \big[ \ell(h(x+\delta), y) - \lambda \ell_{\textrm{interaction}}\big],\quad\ell_{\textrm{interaction}}= \mathbb{E}_{i, j}\left[ I_{i j}(\delta)\right]  \quad \text{s.t.}~ \|\delta\|_p\le \epsilon,\; x+\delta\in[0, 1]^n,
    \end{split}
     \label{eq:inte loss} 
\end{equation}
where $\ell_{\textrm{interaction}}$ is the interaction loss, and $\lambda$ is a constant weight for the interaction loss.
Although the computation of the interaction loss can be simplified according to Equation~{(\ref{eq:expect})}, the computational cost of the interaction loss is intolerable, when the dimension of images is high. 
Therefore, as a trade-off between the accuracy and the computational cost, we divide the input image into $16\times 16$ grids.
We measure and penalize interactions at the grid level, instead of the pixel level.
Moreover, we apply an efficient sampling method to approximate the expectation operation during the computation of interactions in Equation~(\ref{eq:expect}).
\Figref{fig:interaction} visualizes interactions between adjacent perturbation units at the grid level generated with and without the interaction loss.

\begin{figure}[t]
    \centering
    \includegraphics[width=\linewidth]{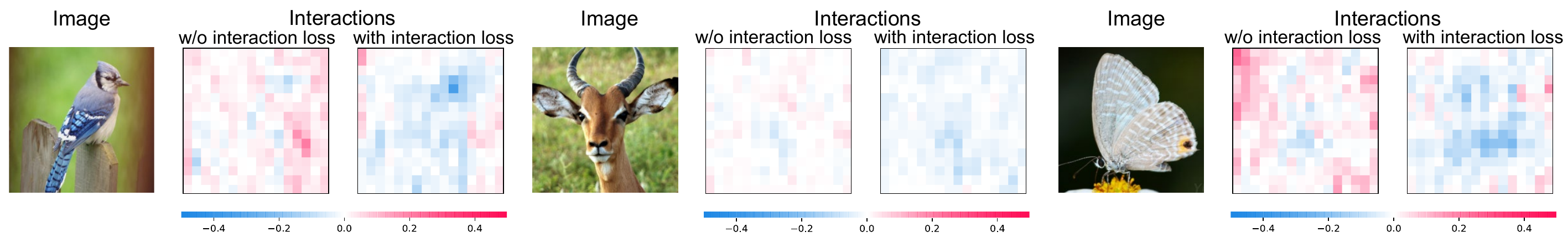}
    \vspace{-20pt}
    \caption{Visualization of interactions between neighboring perturbation units generated with and without the interaction loss.
    The color in the visualization is computed as $color[i] \propto E_{j\in N_i}\left[ I_{i j}(\delta)\right]$, where $N_i$ denotes the set of adjacent perturbation units of the perturbation unit $i$.
    Here, we ignore interactions between non-adjacent units to simplify the visualization.
    It is because adjacent units usually encode much more significant interactions than other units.
    The interaction loss forces the perturbation to encode more negative interactions.
    }
    \vspace{-10pt}
    \label{fig:interaction}
\end{figure}

\textbf{Experiments.}
For implementation, we generated adversarial perturbations on six different source DNNs, including Alexnet~\citep{krizhevsky2012imagenet}, VGG-16~\citep{simonyan2015very}, ResNet-34/152  (RN-34/152)~\citep{he2016deep} and DenseNet-121/201  (DN-121/201)~\citep{huang2017densely}.
For each source DNN, we tested the transferability of the generated perturbations on seven target DNNs, including VGG-16, ResNet-152 (RN-152), DenseNet-201 (DN-201), SENet-154 (SE-154)~\citep{hu2018squeeze}, InceptionV3 (IncV3)~\citep{szegedy2016rethinking}, InceptionV4 (IncV4)~\citep{szegedy2017inception}, and Inception-ResNetV2 (IncResV2)~\citep{szegedy2017inception}. 
In addition, three state-of-the-art DNNs, including the Dual-Path-Network (DPN-68)~\citep{chen2017dual}, the NASNet-LARGE (NASN-L)~\citep{zoph2018learning}, and the Progressive NASNet (PNASN)~\citep{liu2018progressive}, were used as target DNNs to evaluate the ensemble source model (will be introduced in the next paragraph).
Besides unsecured target DNNs mentioned above, we also used three secured target models for testing, which were learned via ensemble adversarial training:  IncV3$_{ens3}$ (ensemble of three IncV3 networks), IncV3$_{ens4}$ (ensemble of four IncV3 networks), and IncResV2$_{ens3}$ (ensemble of three IncResV2 networks), which were released by \citet{tramer2017ensemble}.

\begin{table}[!t]
\renewcommand{\arraystretch}{1.1}
\renewcommand\tabcolsep{3.0pt}
\small
\centering
\caption{
The success rates of $L_\infty$ and $L_2$ black-box attacks crafted on six source models, including AlexNet, VGG16, RN-34/152, DN-121/201, against seven target models.
Transferability of adversarial perturbations can be enhanced by penalizing interactions.
}
% \vspace{1pt}
\resizebox{0.7\linewidth}{!}{
\begin{tabular}{c|c|ccccccc}
\hline
  Source& Method & VGG-16 & RN152 & DN-201 & SE-154 & IncV3 & IncV4 & IncResV2  \\ \hline
  \multirow{2}{*}{{AlexNet}} & PGD $L_\infty$ & 67.0$\pm$1.6 & 27.8$\pm$1.1 & 32.3$\pm$0.4 & 28.2$\pm$0.7 & 29.1$\pm$1.5 & 23.0$\pm$0.4 & 18.6$\pm$1.5 \\ 
  & PGD $L_\infty$+IR & \textbf{78.7$\pm$1.0} & \textbf{42.0$\pm$1.5} & \textbf{50.3$\pm$0.4} & \textbf{41.2$\pm$0.6} & \textbf{43.7$\pm$0.5} & \textbf{36.4$\pm$1.5} & \textbf{29.0$\pm$1.0} \\\hline
  \multirow{2}{*}{{VGG-16}} & PGD $L_\infty$ & -- & 43.0$\pm$1.8 & 48.3$\pm$2.0 & 52.9$\pm$2.7 & 39.3$\pm$0.7 & 49.3$\pm$1.1 & 29.7$\pm$2.0 \\ 
  & PGD $L_\infty$+IR & -- & \textbf{63.1$\pm$1.6} & \textbf{70.0$\pm$1.1} & \textbf{71.2$\pm$1.5} & \textbf{57.6$\pm$1.0} & \textbf{68.6$\pm$3.2} & \textbf{49.2$\pm$1.2} \\\hline
  \multirow{2}{*}{{RN-34}} & PGD $L_\infty$ & 65.4$\pm$2.9 & 59.2$\pm$2.7 & 63.5$\pm$3.3 & 33.1$\pm$2.9 & 27.4$\pm$3.6 & 23.9$\pm$1.7 & 21.1$\pm$1.1 \\ 
  & PGD $L_\infty$+IR & \textbf{84.0$\pm$0.5} & \textbf{84.7$\pm$2.3} & \textbf{88.5$\pm$0.9} & \textbf{64.4$\pm$1.6} & \textbf{56.9$\pm$3.1} & \textbf{59.3$\pm$4.3} & \textbf{49.2$\pm$1.1} \\\hline
  \multirow{2}{*}{{RN-152}} & PGD $L_\infty$ &  51.6$\pm$3.2 & -- & 61.5$\pm$2.4 & 33.9$\pm$1.5 & 28.1$\pm$0.9 & 25.0$\pm$1.2 & 22.4$\pm$1.0 \\ 
  & PGD $L_\infty$+IR & \textbf{72.3$\pm$1.2} & -- & \textbf{82.1$\pm$1.3} & \textbf{61.1$\pm$0.9} & \textbf{53.6$\pm$0.8} & \textbf{50.6$\pm$3.5} & \textbf{46.0$\pm$2.3} \\ \hline
  \multirow{2}{*}{{DN-121}} & PGD $L_\infty$ & 68.6$\pm$1.1 & 63.6$\pm$3.2 & 86.9$\pm$1.5 & 46.1$\pm$1.5 & 37.3$\pm$1.6 & 37.1$\pm$2.1 & 28.9$\pm$2.8 \\ 
  & PGD $L_\infty$+IR & \textbf{85.0$\pm$0.3} & \textbf{84.8$\pm$0.4} & \textbf{95.1$\pm$0.2} & \textbf{70.3$\pm$1.7} & \textbf{61.1$\pm$2.5} & \textbf{62.1$\pm$2.0} & \textbf{53.5$\pm$0.3} \\ \hline
  \multirow{2}{*}{{DN-201}} & PGD $L_\infty$ & 64.4$\pm$1.4 & 67.8$\pm$0.2 & -- & 50.9$\pm$0.8 & 39.5$\pm$3.3 & 36.5$\pm$0.9 & 34.2$\pm$0.4 \\ 
  & PGD $L_\infty$+IR & \textbf{78.6$\pm$2.5} & \textbf{85.0$\pm$1.1} & -- & \textbf{73.9$\pm$0.5} & \textbf{61.6$\pm$1.8} & \textbf{63.7$\pm$0.6} & \textbf{56.4$\pm$2.1} \\ \hline \hline
  \multirow{2}{*}{AlexNet} & PGD $L_2$ & 85.1$\pm$1.5 & 58.9$\pm$1.0 & 60.2$\pm$2.1 & 55.1$\pm$1.5 & 56.0$\pm$3.7 & 49.6$\pm$3.4 & 44.6$\pm$3.3 \\ 
  & PGD $L_2$+IR & \textbf{91.6$\pm$1.1} & \textbf{72.0$\pm$1.6} & \textbf{76.8$\pm$1.0} & \textbf{69.0$\pm$1.0} & \textbf{73.0$\pm$0.8} & \textbf{63.1$\pm$2.1} & \textbf{59.4$\pm$1.9} \\ \hline
  \multirow{2}{*}{VGG-16} & PGD $L_2$ &  -- & 76.7$\pm$0.9 & 82.3$\pm$2.9 & 83.5$\pm$1.9 & 77.5$\pm$3.6 & 82.1$\pm$2.2 & 69.4$\pm$2.1 \\ 
  & PGD $L_2$+IR & -- & \textbf{86.5$\pm$0.9} & \textbf{88.9$\pm$1.5} & \textbf{89.6$\pm$1.2} & \textbf{85.2$\pm$1.1} & \textbf{88.3$\pm$1.4} & \textbf{80.4$\pm$0.4} \\ \hline
  \multirow{2}{*}{RN-34} & PGD $L_2$ & 88.2$\pm$1.4 & 86.2$\pm$0.4 & 89.6$\pm$1.3 & 66.9$\pm$1.1 & 64.2$\pm$2.9 & 60.0$\pm$1.9 & 55.2$\pm$1.8 \\ 
  & PGD $L_2$+IR & \textbf{95.2$\pm$0.2} & \textbf{95.4$\pm$0.1} & \textbf{96.7$\pm$0.6} & \textbf{86.7$\pm$1.2} & \textbf{84.3$\pm$0.6} & \textbf{81.8$\pm$1.9} & \textbf{80.4$\pm$1.9} \\ \hline
  \multirow{2}{*}{DN-121} & PGD $L_2$ & 89.4$\pm$1.1 & 86.8$\pm$1.0 & 97.6$\pm$1.0 & 75.6$\pm$1.7 & 70.1$\pm$2.9 & 70.4$\pm$4.4 & 66.5$\pm$4.7 \\ 
  & PGD $L_2$+IR & \textbf{94.2$\pm$0.1} & \textbf{93.3$\pm$0.8} & \textbf{97.7$\pm$0.3} & \textbf{87.8$\pm$0.7} & \textbf{84.5$\pm$0.7} & \textbf{84.2$\pm$0.1} & \textbf{82.4$\pm$0.1} \\  \hline
\end{tabular}}
\label{table:ir}
\vspace{-15pt}
\end{table}

\begin{table}[!t]
\renewcommand{\arraystretch}{1.1}
\renewcommand\tabcolsep{3.0pt}
\small
\centering
\caption{
The success rates of $L_\infty$ black-box attacks crafted on the ensemble model (RN-34+RN-152+DN-121) against nine target models.}
% \vspace{1pt}
\resizebox{\linewidth}{!}{
\begin{tabular}{c|c|ccccccccccc}
\hline
  Source& Method & VGG-16 & RN-152  & DN-201 & SE-154 & IncV3 & IncV4 & IncResV2 & DPN-68 & NASN-L & PNASN  \\ \hline
  \multirow{2}{*}{Ensemble} & PGD $L_\infty$ & 86.6$\pm$1.2 & \textbf{99.9$\pm$0.1}&  \textbf{97.0$\pm$0.7} & 70.7$\pm$1.6 & 64.2$\pm$0.3 & 57.7$\pm$2.4 & 53.1$\pm$0.7 & 61.6$\pm$0.5& 59.6$\pm$0.4& 72.3$\pm$0.3 \\ 
  & PGD $L_\infty$+IR & \textbf{91.5$\pm$0.1} & 92.4$\pm$1.6 &  92.1$\pm$1.7 & \textbf{86.1$\pm$0.3} & \textbf{81.6$\pm$0.9} & \textbf{79.9$\pm$1.7} & \textbf{78.4$\pm$1.3} & \textbf{82.5$\pm$1.0} & \textbf{82.3$\pm$1.6} & \textbf{85.6$\pm$0.5} \\ \hline
\end{tabular}}
\label{table:ensemble}
\vspace{-10pt}
\end{table}

\textit{Ensemble source model:} Besides above adversarial transferring from a single-source model, we also conducted the proposed IR Attack in the scenario of the ensemble-based attacking~\citep{liu2016delving}, in order to generate adversarial perturbations on the ensemble of RN-34, RN-152, and DN-121.

\textit{Baselines.} The first baseline method, the PGD Attack~\citep{pgd2018}, directly solved the Equation~(\ref{eq:attack}), which was widely used for adversarial attacks. Besides this baseline attack,
the other four baselines were the MI Attack~\citep{mim}, the VR Attack~\citep{wu2018understanding}, the SGM Attack~\citep{Wu2020Skip}, and the TI Attack~\citep{dong2019evading}.
Our method was implemented according to Equation~(\ref{eq:inte loss}), namely the IR Attack.
Because the SGM Attack was one of the top-ranked methods of boosting the adversarial transferability, we further added the interaction loss $\ell_{\textrm{interaction}}$ to the SGM Attack as another implementation of our method (namely the SGM+IR Attack).
We also used the interaction loss to boost the performance of the MI Attack and the VR Attack (namely MI+IR and VR+IR, respectively).
Please see Appendix~\ref{sec:ir+} for details.
Moreover, as Section~\ref{sec:explaining} states, the MI Attack, VR Attack, and SGM Attack also decrease interactions during attacking.
Thus, we combined the IR Attack with all these interaction-reducing techniques together as a new implementation of our method, namely the HybridIR Attack.
All attacks were conducted with 100 steps\footnote{Previous studies usually set the number of steps to 10 or 20. Here, we set the number of steps to 100 together with the leave-one-out validation for fair comparisons of different attacks.\label{fn:1}} on randomly selected 1000 images of the validation set in the ImageNet dataset.
We set $\epsilon=16/255$ for the $L_\infty$ attack, and set $\epsilon=16/255\sqrt{n}$ following the setting in \citep{mim} for the $L_2$ attack.
The step size was set to $2/255$ for all attacks.
Considering the efficiency of signal processing in DNNs with different depths, we set $\lambda=1$ for the IR Attack, when the source DNN was ResNet.
We set $\lambda=2$, for other source DNNs.
To enable fair comparisons, the transferability of each baseline was computed based on the best adversarial perturbation during the  100 steps via the leave-one-out (LOO) validation.
Please see the Appendix~\ref{sec:loo} for the motivation and the evidence of the LOO evaluation of transferability.
All attacks were conducted with three different random samplings of grids or different initial perturbations.

\begin{table}[t]
\renewcommand{\arraystretch}{1.1}
\renewcommand\tabcolsep{3.0pt}
\small
\centering
\caption{Transferability against the secured models: the success rates of $L_\infty$ black-box attacks crafted on RN-34 and DN-121 source models against three secured models.
}
\resizebox{\linewidth}{!}{
\begin{tabular}{c|c|rrr|c|c|rrr}
\hline
  Source& \!Method \!&\!\! { IncV3$_{ens3}$} \!\!&\!\! { IncV3$_{ens4}$} \!\!&\!\!{ IncRes$_{ens3}$} \!&\!\!Source \!&\!\! Method \!\!&\!\! {  IncV3$_{ens3}$} \!\!&\!\!{ IncV3$_{ens4}$} \!\!&\!\!{ IncRes$_{ens3}$} \\ \hline
\multirow{4}{*}{{\makecell{ RN-34} }} & PGD $L_\infty$ & 9.8$\pm$0.1 & 10.0$\pm$0.5 & 5.7$\pm$0.3  & \multirow{4}{*}{\makecell{DN-121}} & PGD $L_\infty$ & 12.8$\pm$0.1 & 11.2$\pm$1.7 & 6.9$\pm$1.0  \\
 & PGD $L_\infty$+IR & \textbf{26.5$\pm$2.9} & \textbf{22.1$\pm$1.3} & \textbf{14.3$\pm$0.4} & & PGD $L_\infty$+IR  & \textbf{28.0$\pm$1.8} & \textbf{26.5$\pm$2.1} & \textbf{17.4$\pm$1.3} \\ \cline{2-5} \cline{7-10}
 & TI\footnote{{The TI Attack was designed oriented to the secured DNNs which were robustly trained via adversarial training. 
 Thus, we applied the TI Attack to the secured models in Table~\ref{table:secured}.}\label{fn:TI}}   & 21.4$\pm$0.8 & 20.9$\pm$0.9 & 14.9$\pm$1.4 & & TI\footref{fn:TI}  & 26.8$\pm$1.3 & 26.1$\pm$1.5 & 19.4$\pm$1.6 \\
 & TI\footref{fn:TI} + IR & \textbf{33.6$\pm$0.4} &  \textbf{33.2$\pm$0.3} & \textbf{24.0$\pm$0.5} & & TI\footref{fn:TI} + IR  & \textbf{38.0$\pm$2.5} & \textbf{42.2$\pm$7.7} & \textbf{29.0$\pm$1.4} \\ \hline
\end{tabular}}
\vspace{-15pt}
\label{table:secured}
\end{table}

\begin{table}[!t]
\renewcommand{\arraystretch}{1.1}
\renewcommand\tabcolsep{3.0pt}
\small
\centering
\caption{
The success rates of $L_\infty$ black-box attacks crafted by different methods on four source models (RN-34/152, DN-121/201) against seven target models.
Transferability of adversarial perturbations can be enhanced by penalizing interactions.}
\vspace{1pt}
\resizebox{0.7\linewidth}{!}{
\begin{tabular}{c|c|ccccccc}
\hline
  Source& Method & VGG-16 & RN152 & DN-201 & SE-154 & IncV3 & IncV4 & IncResV2  \\ \hline
  \multirow{5}{*}{{RN-34}}
  & MI & 80.1$\pm$0.5 & 73.0$\pm$2.3 & 77.7$\pm$0.5 & 48.9$\pm$0.8 & 46.2$\pm$1.2 & 39.9$\pm$0.5 & 34.8$\pm$2.5 \\ 
  & VR & 88.8$\pm$0.2 & 86.4$\pm$1.6 & 87.9$\pm$2.4 & 62.1$\pm$1.5 & 58.4$\pm$3.0 & 56.3$\pm$2.3 & 49.7$\pm$0.9 \\
  & SGM & 91.8$\pm$0.6 & 89.0$\pm$0.9 & 90.0$\pm$0.4 & 68.0$\pm$1.4 & 63.9$\pm$0.3 & 58.2$\pm$1.1 & 54.6$\pm$1.2 \\ \cline{2-2}
  & SGM+IR & 94.7$\pm$0.6 & 91.7$\pm$0.6 & 93.4$\pm$0.8 & 72.7$\pm$0.4 & 68.9$\pm$0.9 & 64.1$\pm$1.3 & 61.3$\pm$1.0 \\
  & HybridIR & \textbf{96.5$\pm$0.1} & \textbf{94.9$\pm$0.3} & \textbf{95.6$\pm$0.6} & \textbf{79.7$\pm$1.0} & \textbf{77.1$\pm$0.8} & \textbf{73.8$\pm$0.1} & \textbf{70.2$\pm$0.5} \\ \hline
  \multirow{5}{*}{{RN-152}}
  & MI & 70.3$\pm$0.6 & -- & 74.8$\pm$1.4 & 51.7$\pm$0.8 & 47.1$\pm$0.9 & 40.5$\pm$1.6 & 36.8$\pm$2.7 \\ 
  & VR & 83.9$\pm$3.4 & -- & 91.1$\pm$0.9 & 70.0$\pm$3.7 & 63.1$\pm$0.9 & 58.8$\pm$0.1 & 56.2$\pm$1.3 \\
  & SGM & 88.2$\pm$0.5 & -- & 90.2$\pm$0.3 & 72.7$\pm$1.4 & 63.2$\pm$0.7 & 59.1$\pm$1.5 & 58.1$\pm$1.2 \\ \cline{2-2}
  & SGM+IR & 92.0$\pm$1.0 & -- & 92.5$\pm$0.4 & 79.3$\pm$0.1 & 69.6$\pm$0.8 & 66.2$\pm$1.0 & 63.6$\pm$0.9 \\
  & HybridIR & \textbf{95.3$\pm$0.4} & -- & \textbf{96.9$\pm$0.2} & \textbf{84.7$\pm$0.7} & \textbf{80.0$\pm$1.2} & \textbf{77.5$\pm$0.8} & \textbf{75.6$\pm$0.6} \\ \hline
  \multirow{5}{*}{{DN-121}}
  & MI & 83.0$\pm$4.9 & 72.0$\pm$0.7 & 91.5$\pm$0.2 & 58.4$\pm$2.6 & 54.6$\pm$1.6 & 49.2$\pm$2.4 & 43.9$\pm$1.5 \\ 
  & VR & 91.5$\pm$0.5 & 88.7$\pm$0.5 & 98.8$\pm$0.2 & 75.1$\pm$1.3 & 74.3$\pm$1.7 & 75.6$\pm$3.0 & 69.8$\pm$1.3 \\
  & SGM & 88.7$\pm$0.9 & 88.1$\pm$1.0 & 98.0$\pm$0.4 & 78.0$\pm$0.9 & 64.7$\pm$2.5 & 65.4$\pm$2.3 & 59.7$\pm$1.7 \\ \cline{2-2}
  & SGM+IR & 91.7$\pm$0.2 & 90.4$\pm$0.4 & 94.3$\pm$0.1 & 87.0$\pm$0.4 & 78.8$\pm$1.3 & 79.5$\pm$0.2 & 75.8$\pm$2.7 \\
  & HybridIR & \textbf{96.9$\pm$0.4} & \textbf{96.8$\pm$0.4} & \textbf{99.1$\pm$0.4} & \textbf{90.9$\pm$0.5} & \textbf{88.4$\pm$0.8} & \textbf{87.8$\pm$0.8} & \textbf{87.1$\pm$0.4} \\ \hline
  \multirow{5}{*}{{DN-201}}
  & MI & 77.3$\pm$0.8 & 74.8$\pm$1.4 & -- & 64.6$\pm$1.0 & 56.5$\pm$2.5 & 51.1$\pm$2.1 & 47.8$\pm$1.9 \\ 
  & VR & 87.3$\pm$1.1 & 90.4$\pm$1.2 & -- & 78.0$\pm$1.5 & 75.8$\pm$2.1 & 75.8$\pm$1.3 & 71.3$\pm$1.2 \\
  & SGM & 87.3$\pm$0.3 & 92.4$\pm$1.0 & -- & 82.9$\pm$0.2 & 72.3$\pm$0.3 & 71.3$\pm$0.6 & 68.8$\pm$0.5 \\ \cline{2-2}
  & SGM+IR & 89.5$\pm$0.9 & 91.8$\pm$0.7 & -- & 87.3$\pm$1.2 & 82.5$\pm$0.8 & 80.3$\pm$0.3 & 81.5$\pm$0.5 \\
  & HybridIR & \textbf{94.4$\pm$0.1} & \textbf{96.9$\pm$0.5} & -- & \textbf{91.7$\pm$0.2} & \textbf{89.6$\pm$0.6} & \textbf{88.3$\pm$0.3} & \textbf{87.3$\pm$0.7} \\ \hline
\end{tabular}}
\label{table:ir+}
\vspace{-15pt}
\end{table}

Table~\ref{table:ir} reports the success rates of the baseline attack (PGD~\citep{pgd2018}) and the IR Attack, namely PGD $L_{\infty}$+IR of $L_{\infty}$ attacks and PGD $L_{2}$+IR of $L_{2}$ attacks.
Compared with the baseline attack, the transferability was significantly improved by the interaction loss on various source models against different target models.
Let us focus on the $L_\infty$ attack.
For most source models and target models, the transferability enhancement brought by the interaction loss was more than 10\%.
In particular, when the source DNN and the target DNN were DN-201 and IncV4, respectively, the baseline attack achieved the transferability of 36.5\%.
With the interaction loss, the transferability was improved to 63.7\% ($>$~27\% gain).
As Table~\ref{table:ensemble} shows, in most cases, the IR Attack on the ensemble model generated more transferable perturbations than the PGD Attack.
Besides, as Table~\ref{table:secured} shows, our interaction loss also improved the transferability against the secured target DNNs.
Such improvement further verified the negative correlation between transferability and interactions.
Note that we did not use the LOO in Table~\ref{table:secured}, in order to make experimental settings in this table consistent with the evaluation used by \citet{tramer2017ensemble}.
Table~\ref{table:ir+} shows the improvement of the transferability obtained by the interaction loss on other attacking methods.
The interaction loss could further boost the transferability of state-of-the-art transfer attacks.
Without the interaction loss, the highest transferability made by the SGM Attack against the IncResV2 was 68.8\% (when the source is DN-201).
When the interaction loss was added, the transferability was improved to 81.5\%  ($>$ 12\% gain).
Moreover, the HybridIR Attack, which combined all methods of reducing interactions together, improved success rates from the range of 54.6\%$\sim$98.8\% to the range of 70.2\%$\sim$99.1\%.
% {Results of $L_2$ attacks are shown in Appendix~\ref{sec:l2}.}

We can understand behaviors of the proposed interaction loss as follows. Different methods generate adversarial perturbations in different manifolds, thereby exhibiting different transferability. Based on the current perturbation, the interaction loss can point out the optimization direction towards further decrease of interactions in a local manner due to its optimization power. Thus, the interaction loss further boosts the transferability.

To further demonstrate the broad applicability of the interaction loss, besides untargeted attacks on the ImageNet dataset, we also conducted targeted attacks on the CIFAR-10 dataset~\citep{cifar}. Experimental results  consistently showed that the adversarial transferability can be enhanced by reducing interactions in targeted attacks. Please see Appendix.~\ref{sec:cifar} for details.

\textit{Effects of the interaction loss.}
We tested the transferability of perturbations generated by the IR Attack with different weights of the interaction loss $\lambda$.
In particular, the baseline attack {(PGD)} can be considered as the IR Attack when $\lambda=0$.
% If $\lambda=0$, the IR Attack degraded to the PGD Attack.
We conducted attacks on two source DNNs (RN-34, DN-121), and transferred adversarial perturbations to seven target DNNs (VGG16, RN-152, DN-201, SE-154, IncV3, IncV4, IncResV2).
The attacks were conducted with 100 steps\footref{fn:1} on validation images in ImageNet .
\Figref{fig:lambda}~(a) shows the black-box success rates with different values of $\lambda$.
The transferability of the IR Attack increased along with the increase of the weight $\lambda$.

\begin{figure}[t]
    \centering
    \includegraphics[width=\linewidth]{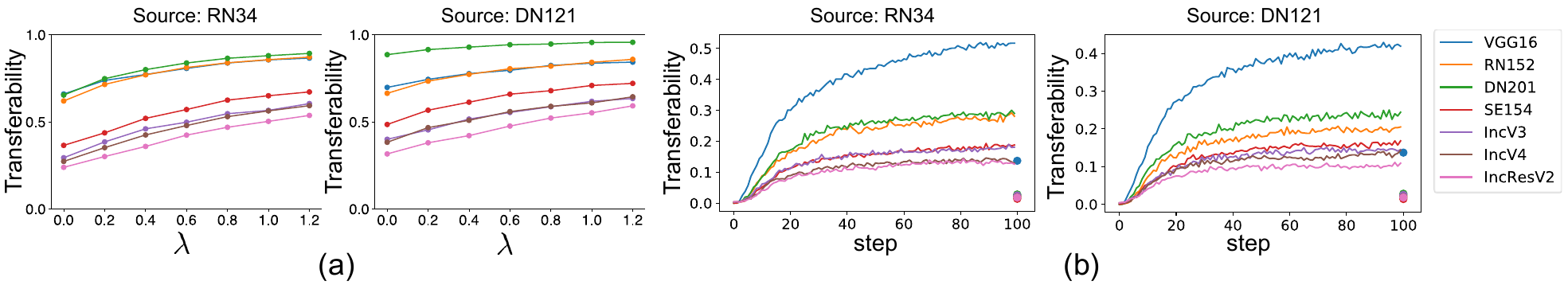}
    \vspace{-20pt}
    \caption{(a) The success rates of black-box attacks with the IR Attack using different values of $\lambda$. 
    The success rates increased, when the value of $\lambda$ increased. 
    (b) The transferability of adversarial perturbations generated by only using the interaction loss (without the classification loss). 
    Such adversarial perturbations still exhibited moderate adversarial transferability.
    Points localized at the last epoch represent the transferability of noise perturbations as the baseline.}
    \vspace{-10pt}
    \label{fig:lambda}
\end{figure}

\textit{Attack only with the interaction loss.}
To further understand the effects of the interaction loss, we generated perturbations by exclusively using the interaction loss (without the classification loss). 
We used the RN-34 and DN-121 as source DNNs and tested the transferability on seven target DNNs.
The attacks were conducted with 100 steps\footref{fn:1} on ImageNet validation images.
\Figref{fig:lambda}~(b) shows the curve of the transferability in different epochs.
We compared such adversarial perturbations with noise perturbations generated as $\epsilon\cdot\text{sign}(noise)$, where $noise \sim \mathcal{N}(0, \sigma^2 I)$, and $\epsilon=16/255$, which was the same as the value used in the $L_\infty$ attack. 
We found that perturbations generated by only using the interaction loss still exhibited moderate adversarial transferability.
This phenomenon may be explained as that such perturbations decrease most interactions in the DNN, thereby damaging the inference patterns in the input image.

\section{Conclusion}
In this paper, we have analyzed the transferability of adversarial perturbations from the perspective of interactions based on game theory.
We have proved that the multi-step attack tends to generate adversarial perturbations with large interactions.
We have discovered and partially proved the negative correlation between the transferability and interactions inside adversarial perturbations.
\emph{I.e.} adversarial perturbations with higher transferability usually exhibit more negative interactions.
We have proved that some classical methods of enhancing the transferability essentially decrease interactions between perturbation units, which provides a unified view to understand the enhancement of transferability.
Moreover, we have proposed a new loss to directly penalize interactions between perturbation units during attacking, which significantly improves the transferability of previous methods.
Furthermore, we have found that adversarial perturbations generated  only using the interaction loss without the classification loss still exhibited moderate transferability, which provides a new perspective to understand the transferability of adversarial perturbations.

\section*{Acknowledgments}
All members in Shanghai Jiao Tong university, including Xin Wang, Jie Ren, Shuyun Lin, Xiangming Zhu, and Dr. Quanshi Zhang are supported by National Natural Science Foundation of China (61906120 and U19B2043) and Huawei Technologies. Dr. Yisen Wang is partially supported by the National Natural Science Foundation of China under Grant 62006153, and CCF-Baidu Open Fund (OF2020002). Xin Wang is supported by Wu Wen Jun Honorary Doctoral Scholarship, AI Institute, Shanghai Jiao Tong University.
% \newpage
\bibliography{iclr2021_conference}

\begin{thebibliography}{67}
\providecommand{\natexlab}[1]{#1}
\providecommand{\url}[1]{\texttt{#1}}
\expandafter\ifx\csname urlstyle\endcsname\relax
  \providecommand{\doi}[1]{doi: #1}\else
  \providecommand{\doi}{doi: \begingroup \urlstyle{rm}\Url}\fi

\bibitem[Ancona et~al.(2019)Ancona, Oztireli, and Gross]{ancona2019explaining}
Marco Ancona, Cengiz Oztireli, and Markus Gross.
\newblock Explaining deep neural networks with a polynomial time algorithm for
  shapley value approximation.
\newblock In \emph{International Conference on Machine Learning}, pp.\
  272--281, 2019.

\bibitem[Bai et~al.(2020)Bai, Zeng, Jiang, Wang, Xia, and
  Guo]{bai2020improving}
Yang Bai, Yuyuan Zeng, Yong Jiang, Yisen Wang, Shu-Tao Xia, and Weiwei Guo.
\newblock Improving query efficiency of black-box adversarial attack.
\newblock In \emph{ECCV}, 2020.

\bibitem[Bhagoji et~al.(2018)Bhagoji, He, Li, and Song]{bhagoji2018practical}
Arjun~Nitin Bhagoji, Warren He, Bo~Li, and Dawn Song.
\newblock Practical black-box attacks on deep neural networks using efficient
  query mechanisms.
\newblock In \emph{European Conference on Computer Vision}, pp.\  158--174.
  Springer, 2018.

\bibitem[Carlini \& Wagner(2017)Carlini and Wagner]{carlini2017towards}
Nicholas Carlini and David Wagner.
\newblock Towards evaluating the robustness of neural networks.
\newblock In \emph{2017 ieee symposium on security and privacy (sp)}, pp.\
  39--57. IEEE, 2017.

\bibitem[Chen \& Ji(2020)Chen and Ji]{chen2020learning}
Hanjie Chen and Yangfeng Ji.
\newblock Learning variational word masks to improve the interpretability of
  neural text classifiers.
\newblock In \emph{EMNLP}, 2020.

\bibitem[Chen et~al.(2020)Chen, Zheng, and Ji]{chen2020generating}
Hanjie Chen, Guangtao Zheng, and Yangfeng Ji.
\newblock Generating hierarchical explanations on text classification via
  feature interaction detection.
\newblock In \emph{ACL}, 2020.

\bibitem[Chen et~al.(2018{\natexlab{a}})Chen, Song, Wainwright, and
  Jordan]{Chen2018L}
Jianbo Chen, Le~Song, Martin~J. Wainwright, and Michael~I. Jordan.
\newblock L-shapley and c-shapley: Efficient model interpretation for
  structured data.
\newblock \emph{In {arXiv:1808.02610}}, 2018{\natexlab{a}}.

\bibitem[Chen et~al.(2017{\natexlab{a}})Chen, Zhang, Sharma, Yi, and
  Hsieh]{zoo}
Pin-Yu Chen, Huan Zhang, Yash Sharma, Jinfeng Yi, and Cho-Jui Hsieh.
\newblock Zoo: Zeroth order optimization based black-box attacks to deep neural
  networks without training substitute models.
\newblock \emph{In {arXiv:1708.03999}}, 2017{\natexlab{a}}.

\bibitem[Chen et~al.(2018{\natexlab{b}})Chen, Sharma, Zhang, Yi, and
  Hsieh]{ead2018}
Pin-Yu Chen, Yash Sharma, Huan Zhang, Jinfeng Yi, and Cho-Jui Hsieh.
\newblock Ead: Elastic-net attacks to deep neural networks via adversarial
  examples.
\newblock \emph{In {AAAI}}, 2018{\natexlab{b}}.

\bibitem[Chen et~al.(2017{\natexlab{b}})Chen, Li, Xiao, Jin, Yan, and
  Feng]{chen2017dual}
Yunpeng Chen, Jianan Li, Huaxin Xiao, Xiaojie Jin, Shuicheng Yan, and Jiashi
  Feng.
\newblock Dual path networks.
\newblock In \emph{Advances in neural information processing systems}, pp.\
  4467--4475, 2017{\natexlab{b}}.

\bibitem[Daria~Sorokina(2008)]{Sorokina2018detecting}
Mirek~Riedewald Daria~Sorokina, Rich~Caruana.
\newblock Detecting statistical interactions with additive groves of trees.
\newblock \emph{In {ICML}}, 2008.

\bibitem[Demontis et~al.(2019)Demontis, Melis, Pintor, Jagielski, Biggio,
  Oprea, Nita-Rotaru, and Roli]{demontis2019adversarial}
Ambra Demontis, Marco Melis, Maura Pintor, Matthew Jagielski, Battista Biggio,
  Alina Oprea, Cristina Nita-Rotaru, and Fabio Roli.
\newblock Why do adversarial attacks transfer? explaining transferability of
  evasion and poisoning attacks.
\newblock In \emph{28th USENIX Security Symposium USENIX Security 19)}, pp.\
  321--338, 2019.

\bibitem[Dong et~al.(2018)Dong, Liao, Pang, Su, Hu, Li, , and Zhu]{mim}
Yinpeng Dong, Fangzhou Liao, Tianyu Pang, Hang Su, Xiaolin Hu, Jianguo Li, ,
  and Jun Zhu.
\newblock Boosting adversarial attacks with momentum.
\newblock \emph{In {CVPR}}, 2018.

\bibitem[Dong et~al.(2019)Dong, Pang, Su, and Zhu]{dong2019evading}
Yinpeng Dong, Tianyu Pang, Hang Su, and Jun Zhu.
\newblock Evading defenses to transferable adversarial examples by
  translation-invariant attacks.
\newblock In \emph{Proceedings of the IEEE Conference on Computer Vision and
  Pattern Recognition}, pp.\  4312--4321, 2019.

\bibitem[Duan et~al.(2020)Duan, Ma, Wang, Bailey, Qin, and
  Yang]{duan2020adversarial}
Ranjie Duan, Xingjun Ma, Yisen Wang, James Bailey, A~Kai Qin, and Yun Yang.
\newblock Adversarial camouflage: Hiding physical-world attacks with natural
  styles.
\newblock In \emph{CVPR}, 2020.

\bibitem[Goodfellow et~al.(2014)Goodfellow, Shlens, and
  Szegedy]{goodfellow2014explaining}
Ian~J Goodfellow, Jonathon Shlens, and Christian Szegedy.
\newblock Explaining and harnessing adversarial examples.
\newblock \emph{arXiv preprint arXiv:1412.6572}, 2014.

\bibitem[He et~al.(2016)He, Zhang, Ren, and Sun]{he2016deep}
Kaiming He, Xiangyu Zhang, Shaoqing Ren, and Jian Sun.
\newblock Deep residual learning for image recognition.
\newblock \emph{In {CVPR}}, 2016.

\bibitem[Hu et~al.(2018)Hu, Shen, and Sun]{hu2018squeeze}
Jie Hu, Li~Shen, and Gang Sun.
\newblock Squeeze-and-excitation networks.
\newblock In \emph{Proceedings of the IEEE conference on computer vision and
  pattern recognition}, pp.\  7132--7141, 2018.

\bibitem[Huang et~al.(2017)Huang, Liu, Van Der~Maaten, and
  Weinberger]{huang2017densely}
Gao Huang, Zhuang Liu, Laurens Van Der~Maaten, and Kilian~Q Weinberger.
\newblock Densely connected convolutional networks.
\newblock In \emph{Proceedings of the IEEE conference on computer vision and
  pattern recognition}, pp.\  4700--4708, 2017.

\bibitem[Huang et~al.(2019)Huang, Katsman, He, Gu, Belongie, and
  Lim]{huang2019enhancing}
Qian Huang, Isay Katsman, Horace He, Zeqi Gu, Serge Belongie, and Ser-Nam Lim.
\newblock Enhancing adversarial example transferability with an intermediate
  level attack.
\newblock In \emph{Proceedings of the IEEE International Conference on Computer
  Vision}, pp.\  4733--4742, 2019.

\bibitem[Ilyas et~al.(2018)Ilyas, Engstrom, Athalye, and Lin]{ilyas2018black}
Andrew Ilyas, Logan Engstrom, Anish Athalye, and Jessy Lin.
\newblock Black-box adversarial attacks with limited queries and information.
\newblock In \emph{ICML}, 2018.

\bibitem[Ilyas et~al.(2019)Ilyas, Santurkar, Tsipras, Engstrom, Tran, and
  Madry]{ilyas2019adversarial}
Andrew Ilyas, Shibani Santurkar, Dimitris Tsipras, Logan Engstrom, Brandon
  Tran, and Aleksander Madry.
\newblock Adversarial examples are not bugs, they are features.
\newblock In \emph{NeurIPS}, 2019.

\bibitem[Inkawhich et~al.(2019)Inkawhich, Wen, Li, and
  Chen]{inkawhich2019feature}
Nathan Inkawhich, Wei Wen, Hai~Helen Li, and Yiran Chen.
\newblock Feature space perturbations yield more transferable adversarial
  examples.
\newblock In \emph{Proceedings of the IEEE Conference on Computer Vision and
  Pattern Recognition}, pp.\  7066--7074, 2019.

\bibitem[Inkawhich et~al.(2020)Inkawhich, Liang, Carin, and
  Chen]{Inkawhich2020Transferable}
Nathan Inkawhich, Kevin Liang, Lawrence Carin, and Yiran Chen.
\newblock Transferable perturbations of deep feature distributions.
\newblock In \emph{International Conference on Learning Representations}, 2020.

\bibitem[Janizek et~al.(2020)Janizek, Sturmfels, and
  Lee]{janizek2020explaining}
Joseph~D Janizek, Pascal Sturmfels, and Su-In Lee.
\newblock Explaining explanations: Axiomatic feature interactions for deep
  networks.
\newblock \emph{arXiv preprint arXiv:2002.04138}, 2020.

\bibitem[Jin et~al.(2020)Jin, Wei, Du, Xue, and Ren]{jin2020hierachical}
Xisen Jin, Zhongyu Wei, Junyi Du, Xiangyang Xue, and Xiang Ren.
\newblock Towards hierarchical importance attribution: Explaining compositional
  semantics for neural sequence models.
\newblock \emph{In {ICLR}}, 2020.

\bibitem[Krizhevsky \& Hinton(2009)Krizhevsky and Hinton]{cifar}
Alex Krizhevsky and Geoffrey Hinton.
\newblock Learning multiple layers of features from tiny images.
\newblock Technical report, 2009.

\bibitem[Krizhevsky et~al.(2009)Krizhevsky, Hinton,
  et~al.]{krizhevsky2009learning}
Alex Krizhevsky, Geoffrey Hinton, et~al.
\newblock Learning multiple layers of features from tiny images.
\newblock 2009.

\bibitem[Krizhevsky et~al.(2012)Krizhevsky, Sutskever, and
  Hinton]{krizhevsky2012imagenet}
Alex Krizhevsky, Ilya Sutskever, and Geoffrey~E Hinton.
\newblock Imagenet classification with deep convolutional neural networks.
\newblock pp.\  1097--1105, 2012.

\bibitem[Kurakin et~al.(2017)Kurakin, Goodfellow, and Bengio]{bim}
Alexey Kurakin, Ian~J. Goodfellow, and Samy Bengio.
\newblock Adversarial examples in the physical world.
\newblock \emph{In {arXiv:1607.02533}}, 2017.

\bibitem[LeCun et~al.(1998)LeCun, Bottou, Bengio, and Haffner]{mnist}
Yann LeCun, L{\'e}on Bottou, Yoshua Bengio, and Patrick Haffner.
\newblock Gradient-based learning applied to document recognition.
\newblock \emph{Proceedings of the IEEE}, 86\penalty0 (11):\penalty0
  2278--2324, 1998.

\bibitem[Li et~al.(2020)Li, Bai, Zhou, Xie, Zhang, and Yuille]{li2020learning}
Yingwei Li, Song Bai, Yuyin Zhou, Cihang Xie, Zhishuai Zhang, and Alan Yuille.
\newblock Learning transferable adversarial examples via ghost networks.
\newblock In \emph{Proceedings of the AAAI Conference on Artificial
  Intelligence}, volume~34, 2020.

\bibitem[Liu et~al.(2018)Liu, Zoph, Neumann, Shlens, Hua, Li, Fei-Fei, Yuille,
  Huang, and Murphy]{liu2018progressive}
Chenxi Liu, Barret Zoph, Maxim Neumann, Jonathon Shlens, Wei Hua, Li-Jia Li,
  Li~Fei-Fei, Alan Yuille, Jonathan Huang, and Kevin Murphy.
\newblock Progressive neural architecture search.
\newblock In \emph{Proceedings of the European Conference on Computer Vision
  (ECCV)}, pp.\  19--34, 2018.

\bibitem[Liu et~al.(2016)Liu, Chen, Liu, and Song]{liu2016delving}
Yanpei Liu, Xinyun Chen, Chang Liu, and Dawn Song.
\newblock Delving into transferable adversarial examples and black-box attacks.
\newblock \emph{ICLR}, 2016.

\bibitem[Ma et~al.(2018)Ma, Li, Wang, Erfani, Wijewickrema, Schoenebeck, Song,
  Houle, and Bailey]{ma2018characterizing}
Xingjun Ma, Bo~Li, Yisen Wang, Sarah~M Erfani, Sudanthi Wijewickrema, Grant
  Schoenebeck, Dawn Song, Michael~E Houle, and James Bailey.
\newblock Characterizing adversarial subspaces using local intrinsic
  dimensionality.
\newblock In \emph{ICLR}, 2018.

\bibitem[Ma et~al.(2021)Ma, Niu, Gu, Wang, Zhao, Bailey, and
  Lu]{ma2021understanding}
Xingjun Ma, Yuhao Niu, Lin Gu, Yisen Wang, Yitian Zhao, James Bailey, and Feng
  Lu.
\newblock Understanding adversarial attacks on deep learning based medical
  image analysis systems.
\newblock \emph{Pattern Recognition}, 110:\penalty0 107332, 2021.

\bibitem[Madry et~al.(2018)Madry, Makelov, Schmidt, Tsipras, and
  Vladu]{pgd2018}
Aleksander Madry, Aleksandar Makelov, Ludwig Schmidt, Dimitris Tsipras, and
  Adrian Vladu.
\newblock Towards deep learning models resistant to adversarial attacks.
\newblock \emph{In {ICLR}}, 2018.

\bibitem[Michel \& Marc(1999)Michel and Marc]{grab1999An}
Grabisch Michel and Roubens Marc.
\newblock An axiomatic approach to the concept of interaction among players in
  cooperative games.
\newblock \emph{In {International Journal of Game Theory}}, 1999.

\bibitem[Murdoch et~al.(2018)Murdoch, Liu, and Yu]{Murdoch2018beyond}
W.~James Murdoch, Peter~J. Liu, and Bin Yu.
\newblock Beyond word importance: Contextual decomposition to extract
  interactions from lstms.
\newblock \emph{In {ICLR}}, 2018.

\bibitem[Papernot et~al.(2016)Papernot, McDaniel, Jha, Fredrikson, Celik, and
  Swami]{jsma}
Nicolas Papernot, Patrick McDaniel, Somesh Jha, Matt Fredrikson, Z.~Berkay
  Celik, and Ananthram Swami.
\newblock The limitations of deep learning in adversarial settings.
\newblock \emph{In {IEEE European Symposium on Security \& Privacy}}, 2016.

\bibitem[Papernot et~al.(2017)Papernot, McDaniel, Goodfellow, Jha, Celik, and
  Swami]{substitute}
Nicolas Papernot, Patrick McDaniel, Ian Goodfellow, Somesh Jha, Z.~Berkay
  Celik, and Ananthram Swami.
\newblock Practical black-box attacks against machine learning.
\newblock \emph{In {arXiv:1602.02697}}, 2017.

\bibitem[Paszke et~al.(2019)Paszke, Gross, Massa, Lerer, Bradbury, Chanan,
  Killeen, Lin, Gimelshein, Antiga, Desmaison, Kopf, Yang, DeVito, Raison,
  Tejani, Chilamkurthy, Steiner, Fang, Bai, and Chintala]{pytorch}
Adam Paszke, Sam Gross, Francisco Massa, Adam Lerer, James Bradbury, Gregory
  Chanan, Trevor Killeen, Zeming Lin, Natalia Gimelshein, Luca Antiga, Alban
  Desmaison, Andreas Kopf, Edward Yang, Zachary DeVito, Martin Raison, Alykhan
  Tejani, Sasank Chilamkurthy, Benoit Steiner, Lu~Fang, Junjie Bai, and Soumith
  Chintala.
\newblock Pytorch: An imperative style, high-performance deep learning library.
\newblock In H.~Wallach, H.~Larochelle, A.~Beygelzimer, F.~d\textquotesingle
  Alch\'{e}-Buc, E.~Fox, and R.~Garnett (eds.), \emph{Advances in Neural
  Information Processing Systems}, volume~32, pp.\  8026--8037. Curran
  Associates, Inc., 2019.

\bibitem[Ren et~al.(2021)Ren, Zhang, Wang, Chen, Zhou, Cheng, Wang, Chen, Shi,
  and Zhang]{ren2021game}
Jie Ren, Die Zhang, Yisen Wang, Lu~Chen, Zhanpeng Zhou, Xu~Cheng, Xin Wang,
  Yiting Chen, Jie Shi, and Quanshi Zhang.
\newblock Game-theoretic understanding of adversarially learned features.
\newblock \emph{arXiv preprint arXiv:2103.07364}, 2021.

\bibitem[Russakovsky et~al.(2015)Russakovsky, Deng, Su, Krause, Satheesh, Ma,
  Huang, Karpathy, Khosla, Bernstein, Berg, and Fei-Fei]{imagenet2015}
Olga Russakovsky, Jia Deng, Hao Su, Jonathan Krause, Sanjeev Satheesh, Sean Ma,
  Zhiheng Huang, Andrej Karpathy, Aditya Khosla, Michael Bernstein,
  Alexander~C. Berg, and Li~Fei-Fei.
\newblock Imagenet large scale visual recognition challenge.
\newblock \emph{In {International Journal of Computer Vision}}, 115\penalty0
  (3):\penalty0 211--252, 2015.

\bibitem[Scott~Lundberg(2017)]{Lundberg2017tree}
Su-In~Lee Scott~Lundberg.
\newblock Consistent feature attribution for tree ensembles.
\newblock \emph{In {ICML WHI Workshop}}, 2017.

\bibitem[Shapley(1953)]{shapley1953value}
Lloyd~S Shapley.
\newblock A value for n-person games.
\newblock \emph{In {Contributions to the Theory of Games}}, 2\penalty0
  (28):\penalty0 307--317, 1953.

\bibitem[Simonyan \& Zisserman(2015)Simonyan and Zisserman]{simonyan2015very}
Karen Simonyan and Andrew Zisserman.
\newblock Very deep convolutional networks for large-scale image recognition.
\newblock \emph{In {ICLR}}, 2015.

\bibitem[Singh et~al.(2019)Singh, Murdoch, and Yu]{Sigh2019hirachical}
Chandan Singh, W.~James Murdoch, and Bin Yu.
\newblock Hierarchical interpretations for neural network predictions.
\newblock \emph{In {ICLR}}, 2019.

\bibitem[Su et~al.(2017)Su, Vargas, and Kouichi]{Su2019Onepixel}
Jiawei Su, Danilo~Vasconcellos Vargas, and Sakurai Kouichi.
\newblock One pixel attack for fooling deep neural networks.
\newblock \emph{In {arXiv:1710.08864}}, 2017.

\bibitem[Sundararajan \& Najmi(2019)Sundararajan and
  Najmi]{sundararajan2019many}
Mukund Sundararajan and Amir Najmi.
\newblock The many shapley values for model explanation.
\newblock \emph{arXiv preprint arXiv:1908.08474}, 2019.

\bibitem[Sundararajan et~al.(2017)Sundararajan, Taly, and
  Yan]{sundararajan2017axiomatic}
Mukund Sundararajan, Ankur Taly, and Qiqi Yan.
\newblock Axiomatic attribution for deep networks.
\newblock In \emph{International Conference on Machine Learning}, pp.\
  3319--3328, 2017.

\bibitem[Szegedy et~al.(2013)Szegedy, Zaremba, Sutskever, Bruna, Erhan,
  Goodfellow, and Fergus]{szegedy2013intriguing}
Christian Szegedy, Wojciech Zaremba, Ilya Sutskever, Joan Bruna, Dumitru Erhan,
  Ian Goodfellow, and Rob Fergus.
\newblock Intriguing properties of neural networks.
\newblock \emph{arXiv preprint arXiv:1312.6199}, 2013.

\bibitem[Szegedy et~al.(2016)Szegedy, Vanhoucke, Ioffe, Shlens, and
  Wojna]{szegedy2016rethinking}
Christian Szegedy, Vincent Vanhoucke, Sergey Ioffe, Jon Shlens, and Zbigniew
  Wojna.
\newblock Rethinking the inception architecture for computer vision.
\newblock In \emph{Proceedings of the IEEE conference on computer vision and
  pattern recognition}, pp.\  2818--2826, 2016.

\bibitem[Szegedy et~al.(2017)Szegedy, Ioffe, Vanhoucke, and
  Alemi]{szegedy2017inception}
Christian Szegedy, Sergey Ioffe, Vincent Vanhoucke, and Alexander~A Alemi.
\newblock Inception-v4, inception-resnet and the impact of residual connections
  on learning.
\newblock In \emph{Thirty-first AAAI conference on artificial intelligence},
  2017.

\bibitem[Tram{\`e}r et~al.(2017)Tram{\`e}r, Kurakin, Papernot, Goodfellow,
  Boneh, and McDaniel]{tramer2017ensemble}
Florian Tram{\`e}r, Alexey Kurakin, Nicolas Papernot, Ian Goodfellow, Dan
  Boneh, and Patrick McDaniel.
\newblock Ensemble adversarial training: Attacks and defenses.
\newblock \emph{arXiv preprint arXiv:1705.07204}, 2017.

\bibitem[Tsang et~al.(2018)Tsang, Cheng, and Liu]{tsang2018detecting}
Michael Tsang, Dehua Cheng, and Yan Liu.
\newblock Detecting statistical interactions from neural network weights.
\newblock \emph{In {ICLR}}, 2018.

\bibitem[Wang et~al.(2019)Wang, Ma, Bailey, Yi, Zhou, and Gu]{wang2019dynamic}
Yisen Wang, Xingjun Ma, James Bailey, Jinfeng Yi, Bowen Zhou, and Quanquan Gu.
\newblock On the convergence and robustness of adversarial training.
\newblock In \emph{ICML}, 2019.

\bibitem[Weber(1988)]{weber1988probabilistic}
Robert~J Weber.
\newblock Probabilistic values for games.
\newblock \emph{The Shapley Value. Essays in Honor of Lloyd S. Shapley}, pp.\
  101--119, 1988.

\bibitem[Wu et~al.(2020{\natexlab{a}})Wu, Wang, Xia, Bailey, and
  Ma]{Wu2020Skip}
Dongxian Wu, Yisen Wang, Shu-Tao Xia, James Bailey, and Xingjun Ma.
\newblock Skip connections matter: On the transferability of adversarial
  examples generated with resnets.
\newblock In \emph{International Conference on Learning Representations},
  2020{\natexlab{a}}.

\bibitem[Wu et~al.(2020{\natexlab{b}})Wu, Xia, and Wang]{wu2020adversarial}
Dongxian Wu, Shu-Tao Xia, and Yisen Wang.
\newblock Adversarial weight perturbation helps robust generalization.
\newblock In \emph{NeurIPS}, 2020{\natexlab{b}}.

\bibitem[Wu et~al.(2018)Wu, Zhu, and Tai]{wu2018understanding}
Lei Wu, Zhanxing Zhu, and Cheng Tai.
\newblock Understanding and enhancing the transferability of adversarial
  examples.
\newblock \emph{arXiv preprint arXiv:1802.09707}, 2018.

\bibitem[Xie et~al.(2019)Xie, Zhang, Zhou, Bai, Wang, Ren, and
  Yuille]{xie2019improving}
Cihang Xie, Zhishuai Zhang, Yuyin Zhou, Song Bai, Jianyu Wang, Zhou Ren, and
  Alan~L Yuille.
\newblock Improving transferability of adversarial examples with input
  diversity.
\newblock In \emph{Proceedings of the IEEE Conference on Computer Vision and
  Pattern Recognition}, pp.\  2730--2739, 2019.

\bibitem[Zhang et~al.(2021{\natexlab{a}})Zhang, Zhou, Zhang, Bao, Huo, Chen,
  Cheng, Wu, and Zhang]{zhang2021building}
Die Zhang, Huilin Zhou, Hao Zhang, Xiaoyi Bao, Da~Huo, Ruizhao Chen, Xu~Cheng,
  Mengyue Wu, and Quanshi Zhang.
\newblock Building interpretable interaction trees for deep nlp models.
\newblock In \emph{AAAI}, 2021{\natexlab{a}}.

\bibitem[Zhang et~al.(2020)Zhang, Cheng, Chen, and Zhang]{zhang2020game}
Hao Zhang, Xu~Cheng, Yiting Chen, and Quanshi Zhang.
\newblock Game-theoretic interactions of different orders.
\newblock \emph{arXiv preprint arXiv:2010.14978}, 2020.

\bibitem[Zhang et~al.(2021{\natexlab{b}})Zhang, Li, Ma, Li, Xie, and
  Zhang]{dropout2020zhang}
Hao Zhang, Sen Li, Yinchao Ma, Mingjie Li, Yichen Xie, and Quanshi Zhang.
\newblock Interpreting and boosting dropout from a game-theoretic view.
\newblock \emph{In ICLR}, 2021{\natexlab{b}}.

\bibitem[Zhang et~al.(2021{\natexlab{c}})Zhang, Xie, Zheng, Zhang, and
  Zhang]{zhang2021interpreting}
Hao Zhang, Yichen Xie, Longjie Zheng, Die Zhang, and Quanshi Zhang.
\newblock Interpreting multivariate interactions in dnns.
\newblock In \emph{AAAI}, 2021{\natexlab{c}}.

\bibitem[Zoph et~al.(2018)Zoph, Vasudevan, Shlens, and Le]{zoph2018learning}
Barret Zoph, Vijay Vasudevan, Jonathon Shlens, and Quoc~V Le.
\newblock Learning transferable architectures for scalable image recognition.
\newblock In \emph{Proceedings of the IEEE conference on computer vision and
  pattern recognition}, pp.\  8697--8710, 2018.

\end{thebibliography}
\bibliographystyle{iclr2021_conference}

\newpage
\appendix

\section{Motivations for using the Shapley interaction index}
In this section, we discuss the motivations of using the Shapley interaction index to define the interaction.

\subsection{Four Properties of Shapley Values} \label{sec:aximos}
Let $\Omega=\{1, 2, \dots, n\}$ denote the set of all players, and the reward function is $v$.
Without ambiguity, we use $\phi(i|\Omega)$ to denote the Shapley value of the player $i$ in the game with all players $\Omega$ and reward function $v$, which is given as follows.
\begin{equation}
\phi(i|\Omega)=\sum_{S \subseteq \Omega \backslash\{i\}} \frac{|S| !(n-|S|-1) !}{n !}(v(S \cup\{i\})-v(S)). \label{eq:shapley_appendix}
\end{equation}
The Shapley value satisfies the following four properties~\citep{weber1988probabilistic}:

\textbullet{ \textit{Linearity property:} If there are two games and the corresponding reward functions are $v$ and $w$, \emph{i.e.} {\small $v(S)$} and {\small $w(S)$} measure the reward obtained by players in {\small $S$} in these two games. 
Let $\phi_{v}(i|\Omega)$ and $\phi_w(i|\Omega)$ denote the Shapley value of the player $i$ in the game $v$ and game $w$, respectively.
If these two games are combined into a new game, and the reward function becomes $\textit{reward}(S)=v(S)+w(S)$, then the Shapley value comes to be \small $\phi_{v+w}(i|\Omega)=\phi_{v}(i|\Omega) + \phi_{w}(i|\Omega)$} for each player $i$ in $\Omega$.

\textbullet{ \textit{Dummy property:} A player $i \in \Omega$ is referred to as a dummy player if {\small {$\forall S\subseteq \Omega\backslash \{i\}$}, $v(S\cup \{i\}) = v(S) + v(\{i\})$}. In this way, {\small $\phi(i|\Omega) = v(\{i\})-v(\emptyset)$}, which means that player $i$ plays the game independently. }

\textbullet{ \textit{Symmetry property:} If {\small {$\forall S\subseteq\Omega\setminus\{i,j\}$}, $v(S \cup \{i\}) = v(S \cup \{j\})$}, then Shapley values of player $i$ and $j$ are equal, \textit{i.e.} {\small $\phi(i|\Omega)=\phi(j|\Omega)$} .}

\textbullet{ \textit{Efficiency property:} The sum of each individual's Shapley value is equal to the reward won by the coalition $N$, \textit{i.e.} {\small $\sum_i \phi(i|\Omega) = v(\Omega)-v(\emptyset)$}. This property guarantees the overall reward can be allocated to each player in the game.}

\subsection{Motivations}
\textbf{Theoretical rigor}. We use the Shapley  interaction index defined based on the Shapley value, because the Shapley value has a solid theoretical foundation in the game theory, which is the \textit{unique} attribution satisfying the above four desirable axioms.

\textbf{Whether the metric depends on network architectures.} Because adversarial transferability is a general property for the attack, a convincing metric for adversarial transferability is supposed not to be directly related to the network architecture. To this end, the computation of the interaction defined on the Shapley value does not depend on the network architecture.
In comparison, previous definitions of the interaction are usually oriented to model architectures. For example, the interaction proposed by \citet{tsang2018detecting} requires the DNN to be fully-connected.
The two interaction metrics proposed by \citet{Murdoch2018beyond} and \citet{jin2020hierachical} are designed for LSTMs. 
The Hessian-based interaction \citep{janizek2020explaining}  requires the DNN to use the softPlus operation to replace the ReLU operation.

\textbf{Computational cost.}
The computational cost of the Shapley-based interaction-reduction loss is relatively low.
Because of the efficiency axiom of the Shapley value, we prove that the time cost of computing the interaction loss $\ell_{\textrm{interaction}}=\frac{1}{n-1} \mathbb{E}_{i}[v(\Omega)-v(\Omega \backslash\{i\})-v(\{i\})+v(\emptyset)]$ is linear, \emph{i.e.} $O(n)$, where $n$ is the dimension of features. The linear complexity makes it possible to apply the interaction to high-dimensional data and deep neural networks. In contrast, the complexity of computing all possible pairwise interactions defined in 
\citep{Sorokina2018detecting} is $O(n^2)$.

\section{Comparisons between Interactions inside Perturbations of Different Attacks} \label{sec:experimental_interaction}
\begin{table}[h!]
 \caption{The average interaction inside adversarial perturbations generated by different attacks.}
    \centering
    \begin{tabular}{c|cccc}
         \hline
         Method & RN-34 & RN-152 & DN-121 & DN-201 \\ \hline
        Baseline~(PGD Attack)  & \textbf{0.422} & \textbf{0.926}  & \textbf{0.909} & \textbf{0.784} \\  
        SGM Attack  & -0.012 & 0.037 & 0.395 & 0.308 \\  
        VR Attack  & 0.097 & 0.270 & 0.242 & 0.137 \\  
        \hline
    \end{tabular}
    \label{tab:interaction}
\end{table}
We have theoretically proved that some classical attacking methods of boosting the adversarial transferability essentially decrease interactions inside perturbations.
Besides the theoretical proof in Appendix~\ref{sec:proof_mi} and Appendix~\ref{sec:proof_vra}, we also conduct experiments to compare interactions of perturbation units when we generate adversarial perturbations with and without these attacking methods. Such experiments further verify that these methods of boosting the transferability essentially decrease interactions.
We conduct attacks with the validation set in the ImageNet dataset on four DNNs, and measure the average interaction inside perturbation units.
As Table~\ref{tab:interaction} shows,  the SGM Attack and the VR Attack decrease interactions inside perturbations.

\section{Adversarial Attack} \label{sec:attack}
In general, the objective of adversarial attacking can be formulated as the following optimization problem. 
\begin{equation}
\begin{split}
    \underset{\delta}{\text{maximize}}\quad \ell(h(x+\delta), y) \quad \text{s.t.} \quad \|\delta\|_p\le \epsilon,\; x+\delta\in[0, 1]^n, \label{eq:attack_appendix}
\end{split}
\end{equation}where $\ell(h(x+\delta), y)$ is the classification loss.
There are many ways to solve the above optimization problem under different norm constraints $\|\cdot\|_p$~\citep{goodfellow2014explaining,carlini2017towards, bim, pgd2018,ead2018,wang2019dynamic}.
% In this paper, we focus on the $L_\infty$ attack~\citep{pgd2018}, which crafts adversarial examples by the following iteration.

\textbf{Optimization-based approach.} 
One approach to approximately solving Equation~(\ref{eq:attack_appendix}) is to solve the following relaxed form:
\begin{equation} \label{eq:optimization}
    \underset{\delta}{\text{minimize}} \quad \{-\ell(h(x+\delta), y) + c\cdot \|\delta\|_p\} \quad \text{s.t.} \quad \; x+\delta\in[0, 1]^n,
\end{equation}
where $c>0$ is a scalar constant to balance the classification loss and the norm constraint. 
\citet{szegedy2013intriguing, carlini2017towards} have demonstrated the effectiveness of this method.

\textbf{Projected gradient descent (PGD)~\citep{pgd2018}.} 
The PGD Attack is usually considered as one of the simplest and the most widely used baseline for adversarial attacking.
In this paper, this method is called the \textit{Baseline}.
The PGD Attack directly optimizes the classification loss in Equation~(\ref{eq:attack_appendix}).
Considering the norm constraint, after each step of updating, the PGD Attack projects the adversarial perturbation $\delta$ back to the $\epsilon$-ball, if the perturbation goes beyond the ball.

PGD updates adversarial perturbations in each step with the following equation:
\begin{equation}
    {\delta}^{t+1}=
    \begin{cases}
    \Pi_{\epsilon}^{(\infty)}\left(\delta^{t}+\alpha \cdot \operatorname{sign}\left(\nabla \ell\left(h\left(x + \delta^t\right), y\right)\right)\right), & p=+\infty \\
    \Pi^{(2)}_{\epsilon}\left(\delta^{t}+\alpha \cdot \frac{\nabla \ell\left(h\left(x + \delta^t\right), y\right)}{\|\nabla \ell\left(h\left(x + \delta^t\right), y\right)\|_2}\right), & p=2,
    \end{cases}
\end{equation}
where $\delta^t$ denotes the perturbation of the $t$-th step. 
$\Pi_{\epsilon}^{(\infty)}$ and $\Pi_{\epsilon}^{(2)}$ are projection operations, which project the perturbation $\delta$ back to the $\epsilon$-ball, if the perturbation goes beyond the ball. $\alpha$ is the step size.
Given $\delta\in\mathbb{R}^n$, we have:
\begin{equation}
    \Pi^{(\infty)}_{\epsilon}(\delta_i)=\left\{\begin{array}{cl}
\epsilon\cdot\text{sign}(\delta_i)  , & \text { if } |\delta_i|>\epsilon \\
\delta_i, & \text { if } |\delta_i| \leq \epsilon \quad, 
\end{array}\right. \quad 
\Pi^{(2)}_{\epsilon}(\delta)=\left\{\begin{array}{cl}
\epsilon\frac{\delta}{\|\delta\|_2}  , & \text { if }\left\|\delta\right\|_2>\epsilon \\
\delta, & \text { if }\left\|\delta\right\|_2 \leq \epsilon \quad 
\end{array}\right..
\end{equation}

% \textbf{Interaction-reduced attack (IR Attack).}
% Note that our interaction-reduced attack (IR Attack) uses the similar way as PGD Attack to iteratively update perturbations.
% The objective function of IR Attack is given as follows.
% \begin{equation*}
%     \begin{split}
%         \underset{\delta}{\text{maximize}}&\quad \big[ \ell(h(x+\delta), y) - \lambda\cdot \mathbb{E}_{i, j}\left[ I_{i j}(\delta)\right] \big] \quad \text{s.t.} \quad \|\delta\|_p\le \epsilon,\; x+\delta\in[0, 1]^n, 
%     \end{split}
% \end{equation*}
% We optimize the objective of IR Attack as follows.
% \begin{equation}
%     {\delta}^{t+1}=
%     \begin{cases}
%     \Pi_{\epsilon}^{(\infty)}\left(\delta^{t}+\alpha \cdot \operatorname{sign}\left[\nabla \ell\left(h\left(x + \delta^t\right), y\right)-\lambda\cdot \nabla \mathbb{E}_{i j}(\delta^t) \right)\right], & p=+\infty \\
%     \Pi^{(2)}_{\epsilon}\left(\delta^{t}+\alpha \cdot \frac{\nabla \ell\left(h\left(x + \delta^t\right), y\right)-\lambda\cdot\nabla\mathbb{E}_{i j}(\delta^t) }{\|\nabla \ell\left(h\left(x + \delta^t\right), y\right)-\lambda\cdot\nabla\mathbb{E}_{i j}(\delta^t)\|_2}\right), & p=2
%     \end{cases}
% \end{equation}

\section{Equivalent forms of the interaction} \label{sec:eq_interaction}
In Section~\ref{sec:algo}, the interaction between units $i,j$ is defined as the additional contribution as follows.
\begin{equation} \label{eq:interaction_appendix}
    I_{i j}(\delta) = \phi(S_{i j}|\Omega') - \left[\phi(i|\Omega\setminus\{j\}) + \phi(j|\Omega\setminus\{i\})\right],
\end{equation}
where $\phi(S_{i j}|\Omega')$ denotes the joint contribution of $i, j$, when perturbation units $i, j$ are regarded as a singleton unit $S_{i j}=\{i, j\}$, as follows.
\begin{equation*}
    \phi(S_{i j}|\Omega') = \sum_{S \subseteq \Omega \backslash\{i, j\}} \frac{|S| !(n-|S|-2) !}{(n-1) !}(v(S \cup\{i,j\})-v(S)),
\end{equation*}
where $S_{i j} = \{i, j\}$ represents the coalition of perturbation units $i, j$.
In this game, because perturbation units $i, j$ are regarded as a singleton player, we can consider there are only $n-1$ players in the game, and consequently the set of players changes to $\Omega' = \Omega\setminus\{i, j\}\cup S_{i j}$.

$\phi(i|{\Omega\setminus\{j\}})$ and $\phi(j|\Omega\setminus\{i\})$ represent the individual contributions of units $i$ and $j$, respectively, when the perturbation units $i, j$ work individually.
The individual contribution of perturbation unit $i$, when perturbation unit $j$ is absent, is given as follows.
\begin{equation*}
    \phi(i|\Omega\setminus\{j\}) = \sum_{S \subseteq \Omega \backslash\{i, j\}} \frac{|S| !(n-|S|-2) !}{(n-1) !}(v(S \cup\{i\})-v(S)).
\end{equation*}
In this game, because the perturbation unit $j$ is always absent, we can consider there are only $n-1$ players in the game. Consequently the set of players changes to $\Omega\setminus\{j\}$.

Similarly, the individual contribution of perturbation unit $j$, when perturbation unit $i$ is absent, is given as follows.
\begin{equation*}
    \phi(j|\Omega\setminus\{i\}) = \sum_{S \subseteq \Omega \backslash\{i, j\}} \frac{|S| !(n-|S|-2) !}{(n-1) !}(v(S \cup\{j\})-v(S)).
\end{equation*}

In Section~\ref{sec:intro}, the interaction between perturbation units $\delta_i,\delta_j$ is defined as 
the change of the  importance $\phi_i$ of the $i$-th unit when the $j$-th unit $\delta_j$ is perturbed \emph{w.r.t} the case when the $j$-th unit $\delta_j$ is not perturbed. If the perturbation $\delta_j$ on the $j$-th unit increases the importance $\phi_i$ of the $i$-th unit, then there is a positive interaction between $\delta_i$ and $\delta_j$.
If the perturbation $\delta_j$ decreases the importance $\phi_i$, it indicates a negative interaction.
Mathematically, this definition can be written as follows.
\begin{equation} \label{eq:interaction'}
    I'_{i j}(\delta) = \phi_{i, w/\, j} - \phi_{i, w/o\, j},
\end{equation}
where $\phi_{i, w/\, j}$ represents the importance of $\delta_i$, when $\delta_j$ is always present; $\phi_{i, w/o\, j}$ represents the importance of $\delta_i$, when $\delta_j$ is always absent. 
When perturbation unit $j$ is always present, the contribution of perturbation unit $i$ is given as follows.
\begin{equation*}
     \phi_{i, w/\, j} = \sum_{S \subseteq \Omega \backslash\{i, j\}} \frac{|S| !(n-|S|-2) !}{(n-1) !}(v(S \cup\{i, j\})-v(S\cup \{j\})).
\end{equation*}
In this game, because the perturbation unit $j$ is always present, we can consider there are only $n-1$ players. 

When perturbation unit $j$ is always absent, the contribution of perturbation unit $i$ is given as follows.
\begin{equation*}
    \phi_{i, w/o\, j} = \sum_{S \subseteq \Omega \backslash\{i, j\}} \frac{|S| !(n-|S|-2) !}{(n-1) !}(v(S \cup\{i\})-v(S)).
\end{equation*}
In this game, because the perturbation unit $j$ is always absent, we can consider there are only $n-1$ players. 

The interaction in Equation~(\ref{eq:interaction_appendix}) is equal to the interaction in Equation~(\ref{eq:interaction'}), \emph{i.e.} 
\begin{equation*}
    I_{i j}(\delta) = I'_{i j}(\delta)
\end{equation*}

\section{Proof of Proposition~\ref{pro:1-1}} \label{sec:proof1}

To simplify the problem setting, we do not consider some tricks in adversarial attacking, such as gradient normalization and the clip operation.
In multi-step attacking, the final perturbation generated after $t$ steps is given as follows.
\begin{equation*}
    \delta_{\textrm{multi}}^t \defeq{\alpha\sum_{t'=0}^{t-1}\nabla_x\ell(h(x+\delta^{t'}_{\textrm{multi}}), y)},
\end{equation*}
where $\alpha$ represents the step size, and $\ell(h(x), y)$ is referred as the classification loss.

To simplify the notation, we use $g(x)$ to denote $\nabla_x\ell(h(x), y)$, \emph{i.e.} $g(x)\defeq{\nabla_x\ell(h(x), y)}$.
Furthermore, we define the update of the perturbation with the multi-step attack at each step $t$ as follows.
\begin{equation} \label{eq:delta_multi}
    \Delta x^t_{\textrm{multi}} \defeq{\alpha\cdot g(x + \delta_{\textrm{multi}}^{t-1})}.
\end{equation}
In this way, the perturbation can be written as follows.
\begin{equation} \label{eq:multi_update}
    \delta_{\textrm{multi}}^t = {\Delta x^1_{\textrm{multi}} + \Delta x^2_{\textrm{multi}} + \dots + \Delta x^{t-1}_{\textrm{multi}}}.
\end{equation}

\begin{lemma} \label{lemma:1}
Given the sample $x\in\mathbb{R}^n$ and the adversarial perturbation $\delta\in\mathbb{R}^n$, we use $\Omega = \{1,2,\dots, n\}$ to denote the set of all perturbation units. The score function is denoted by $v(S) = L(x + \delta^{(S)})$, where $\delta^{(S)}$ satisfies $\forall i \in S, \delta_i^{(S)}=\delta_i; \forall i \notin S, \delta_i^{(S)}=0$. The Shapley interaction between perturbation units $a, b$ can be written as $I_{ab} = \delta_a H_{ab}(x)\delta_b + \hat{R}_2(\delta)$, where $H_{a b}(x)=\frac{\partial L(x)}{\partial x_a \partial x_b}$ represents the element of the Hessian matrix, and
$\hat{R}_2(\delta)$ denotes terms with elements in $\delta$ of higher than the second order.
\end{lemma}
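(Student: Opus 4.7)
The plan is to plug a second order Taylor expansion of the utility function into the combinatorial formula for the Shapley interaction and show that, after the inclusion-exclusion in the $S$-summand, only the cross second derivative $H_{ab}(x)$ survives at leading order.

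\textbf{Step 1: Taylor expand the utility.} I would first write $v(S) = L(x + \delta^{(S)})$ and Taylor expand $L$ around $x$:
\begin{equation*}
    v(S) = L(x) + \sum_{i \in S} \delta_i \partial_i L(x) + \tfrac{1}{2}\sum_{i,j\in S} \delta_i \delta_j H_{ij}(x) + R(\delta^{(S)}),
\end{equation*}
where $R(\cdot)$ collects terms of order $\ge 3$ in $\delta$. Here I have used the fact that $\delta^{(S)}_i = \delta_i$ for $i\in S$ and $0$ otherwise, so both single and double sums reduce to sums over $S$.

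\textbf{Step 2: Apply inclusion-exclusion on the four-term combination.} Using the equivalent form of the interaction derived in Appendix~\ref{sec:eq_interaction},
\begin{equation*}
    I_{ab}(\delta) = \sum_{S\subseteq \Omega\setminus\{a,b\}} \frac{|S|!(n-|S|-2)!}{(n-1)!}\bigl[v(S\cup\{a,b\}) - v(S\cup\{a\}) - v(S\cup\{b\}) + v(S)\bigr].
\end{equation*}
I would then substitute the Taylor expansion and check term by term. The constant $L(x)$ cancels with alternating signs $(+1-1-1+1) = 0$. For the linear term, any index $i$ that is in $S$, or equals $a$, or equals $b$, appears in either $0$ or $2$ of the four sets with matching signs, so the linear contribution also vanishes. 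For the quadratic term, the same bookkeeping shows that a pair $(i,j)$ survives only when $\{i,j\}=\{a,b\}$: pairs fully inside $S$ cancel $(+1-1-1+1)$; pairs with one index in $S$ and the other equal to $a$ (resp.\ $b$) cancel $(+1-1+0+0)$; diagonal pairs $(a,a)$ and $(b,b)$ cancel $(+1-1+0+0)$ and $(+1+0-1+0)$ respectively. Only $(a,b)$ and $(b,a)$ contribute, each with coefficient $1$, giving
\begin{equation*}
    v(S\cup\{a,b\}) - v(S\cup\{a\}) - v(S\cup\{b\}) + v(S) = \delta_a H_{ab}(x)\delta_b + \rho(S,\delta),
\end{equation*}
using the symmetry $H_{ab}=H_{ba}$, where $\rho(S,\delta)$ collects the order-$\ge 3$ remainders from the four Taylor tails.

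\textbf{Step 3: Collapse the Shapley average and package the remainder.} Since the leading term $\delta_a H_{ab}(x)\delta_b$ does not depend on $S$, I can pull it out of the sum, and the standard identity
\begin{equation*}
    \sum_{S\subseteq\Omega\setminus\{a,b\}} \frac{|S|!(n-|S|-2)!}{(n-1)!} = \sum_{k=0}^{n-2}\binom{n-2}{k}\frac{k!(n-k-2)!}{(n-1)!} = 1
\end{equation*}
reduces the leading part to $\delta_a H_{ab}(x)\delta_b$. Defining
\begin{equation*}
    \hat{R}_2(\delta) \defeq \sum_{S\subseteq \Omega\setminus\{a,b\}} \frac{|S|!(n-|S|-2)!}{(n-1)!}\,\rho(S,\delta),
\end{equation*}
which is a convex combination of terms that are each of order $\ge 3$ in $\delta$, yields the desired decomposition $I_{ab}(\delta) = \delta_a H_{ab}(x)\delta_b + \hat{R}_2(\delta)$.

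\textbf{Main obstacle.} The only delicate step is the bookkeeping in Step~2: one must carefully track which of the four sets each pair $(i,j)$ belongs to so as to confirm that every pair except $\{a,b\}$ cancels. The rest is a clean Taylor expansion plus the standard Shapley-weight-normalization identity.
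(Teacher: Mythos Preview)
Your proposal is correct and follows essentially the same approach as the paper's proof: Taylor-expand $v(S)=L(x+\delta^{(S)})$ to second order, substitute into the four-term Shapley interaction formula, observe that constant and linear terms cancel while only the cross term $\delta_a H_{ab}(x)\delta_b$ survives from the quadratic part, and then use the identity $\sum_{k=0}^{n-2}\binom{n-2}{k}\frac{k!(n-k-2)!}{(n-1)!}=1$ to collapse the $S$-average. Your cancellation bookkeeping in Step~2 is in fact spelled out more carefully than the paper's, but the argument is identical in substance.
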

\begin{proof}
The Shapley interaction between perturbation units $a, b$ is
\begin{equation*}
    I_{ab}(\delta) = \sum_{S \subseteq \Omega \backslash\{a, b\}} \frac{|S| !(n-|S|-2) !}{(n-1) !}\big[v(S \cup\{a, b\})-v(S \cup\{b\})-v(S \cup\{a\})+v(S)\big],
\end{equation*}
where $v(S) = L(x + \delta^{(S)})$.
Here, the classification loss can be approximated as $L(x+\delta)= L(x) + g^T(x) \delta  + \frac{1}{2}\delta^T H(x) \delta + R_2(\delta)$ using Taylor series.
Thus, $\forall S'\subseteq\Omega$, 

\begin{equation*}
    v(S') = L(x) + \sum_{a\in S'} g_a(x) \delta_a + \frac{1}{2} \sum_{a, b\in S'} \delta_a  H_{a b}(x) \delta_b^{(S')} + R^{S'}_2(\delta).
\end{equation*}
where $ R^{S'}_2(\delta)$ denotes terms with elements in $\delta^{(S')}$ of higher than the second order.

In this way, the Shapley interaction $I_{ab}$ is given as
% \begin{small}
\begin{align*}
    I_{ab}(\delta) &= \sum_{S \subseteq \Omega \backslash\{a, b\}} \frac{|S| !(n-|S|-2) !}{(n-1) !}\big[v(S \cup\{a, b\})-v(S \cup\{b\})-v(S \cup\{a\})+v(S)\big] \\
   &= \sum_{S \subseteq \Omega \backslash\{a, b\}} \frac{|S| !(n-|S|-2) !}{(n-1) !} \big\{ [L(x) + \\
   &\sum_{a'\in S\cup\{a, b\}} g_{a'}(x) \delta_{a'} + \frac{1}{2} \sum_{a',b'\in S\cup\{a, b\}} \delta_{a'}  H_{a' b'}(x) \delta_{b'} + R_2^{(S\cup\{a, b\})}(\delta)] \\
   &-[L(x) + \sum_{a'\in S\cup\{b\}} g_{a'}(x) \delta_{a'} + \frac{1}{2} \sum_{a', b'\in S\cup\{b\}} \delta_{a'}  H_{a' b'}(x) \delta_{b'} + R_2^{(S\cup\{b\})}(\delta)]\\
   &-[L(x) + \sum_{a'\in S\cup\{a\}} g_{a'}(x) \delta_{a'} + \frac{1}{2} \sum_{a', b'\in S\cup\{a\}} \delta_{a'}  H_{a' b'}(x) \delta_{b'} + R_2^{(S\cup\{a\})}(\delta)]\\
    &+L(x) + \sum_{a'\in S} g_{a'}(x) \delta_{a'} + \frac{1}{2} \sum_{a', b'\in S} \delta_{a'}  H_{a' b'}(x) \delta_{b'} + R_2^{(S)}(\delta)
    \big\}\\
&=\sum_{S \subseteq \Omega \backslash\{a, b\}} \frac{|S| !(n-|S|-2) !}{(n-1) !} \big\{\delta_a H_{ab}(x) \delta_b\big\} \\
&+ \underbrace{\sum_{S \subseteq \Omega \backslash\{a, b\}} \frac{|S| !(n-|S|-2) !}{(n-1) !} [R_2^{(S\cup\{a, b\}}(\delta)) - R_2^{(S\cup\{a\})}(\delta) - R_2^{(S\cup\{b\})}(\delta) + R_2^{(S)}(\delta)]}_{\hat{R}_2(\delta)} \\
&= \left\{ \sum_{s=0}^{n-2}\sum_{\overset{S\subseteq\Omega\setminus\{a, b\},} {|S|=s}}\frac{s !(n-s-2) !}{(n-1) !}\left[  \delta_a H_{ab}(x) \delta_b  \right] \right\} + \hat{R}_2(\delta)  \quad \\
&= \left\{\sum_{s=0}^{n-2}\frac{(n-2) !}{ s ! (n-s-2) !}  \frac{s !(n-s-2) !}{(n-1) !}\left[  \delta_a H_{ab}(x) \delta_b \right] \right\}+ \hat{R}_2(\delta) \\
&= \delta_a H_{ab}(x) \delta_b+ \hat{R}_2(\delta),
\end{align*}
% \end{small}
where $\hat{R}_2(\delta)$ denotes terms with elements in $\delta$ of higher than the second order.
\end{proof}

\begin{lemma} \label{lemma:induction}
The update of the perturbation with the multi-step attack at step $t$ defined in Equation~(\ref{eq:delta_multi}) can be written as $\Delta x^t_{\textrm{multi}} =\alpha \left[I + \alpha H(x) \right]^{t-1} g(x) + \hat{R}_1^t$, where $g(x)\defeq{\nabla_x \ell(h(x), y)}$ represents the gradient, and $H(x) \defeq{\nabla_x^2 \ell(h(x), y)}$ represents the Hessian matrix. 
$\hat{R}_1^t$ denotes terms with elements in $\delta^{t-1}_{\textrm{multi}}$ of higher than the first order.
\end{lemma}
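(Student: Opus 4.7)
The plan is a straightforward induction on $t$, leveraging a first-order Taylor expansion of $g$ at each step and a matrix telescoping identity. For the base case $t=1$, since $\delta^{0}_{\textrm{multi}}=0$, we have $\Delta x^{1}_{\textrm{multi}} = \alpha g(x+0) = \alpha g(x)$, which equals $\alpha[I+\alpha H(x)]^{0} g(x)$ with remainder $\hat{R}_1^{1}=0$, matching the claim.

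For the inductive step, assume the formula holds for all $s \le t-1$. Starting from the definition $\Delta x^{t}_{\textrm{multi}} = \alpha\, g(x+\delta^{t-1}_{\textrm{multi}})$, I would Taylor-expand $g$ around $x$ to first order:
\[
g(x+\delta^{t-1}_{\textrm{multi}}) = g(x) + H(x)\,\delta^{t-1}_{\textrm{multi}} + O\bigl(\|\delta^{t-1}_{\textrm{multi}}\|^{2}\bigr),
\]
so that $\Delta x^{t}_{\textrm{multi}} = \alpha g(x) + \alpha H(x)\,\delta^{t-1}_{\textrm{multi}} + \alpha\,O(\|\delta^{t-1}_{\textrm{multi}}\|^{2})$. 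I would then substitute the decomposition $\delta^{t-1}_{\textrm{multi}} = \sum_{s=1}^{t-1}\Delta x^{s}_{\textrm{multi}}$ coming from Equation~(\ref{eq:multi_update}) and apply the inductive hypothesis to each $\Delta x^{s}_{\textrm{multi}}$. This yields
\[
\delta^{t-1}_{\textrm{multi}} = \alpha\,\sum_{s=0}^{t-2}[I+\alpha H(x)]^{s} g(x) + \sum_{s=1}^{t-1}\hat{R}_1^{s}.
\]

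Plugging this back into the expression for $\Delta x^{t}_{\textrm{multi}}$ produces the first-order piece $\alpha\bigl\{I + \alpha H(x)\sum_{s=0}^{t-2}[I+\alpha H(x)]^{s}\bigr\}g(x)$ plus remainder. The key algebraic identity is the matrix telescoping $(M-I)\sum_{s=0}^{t-2}M^{s} = M^{t-1}-I$ applied with $M = I+\alpha H(x)$ (so $M-I = \alpha H(x)$), which immediately gives $I + \alpha H(x)\sum_{s=0}^{t-2}[I+\alpha H(x)]^{s} = [I+\alpha H(x)]^{t-1}$. This reduces the first-order piece to the claimed $\alpha[I+\alpha H(x)]^{t-1}g(x)$.

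The main obstacle, which is more bookkeeping than technical, is correctly certifying that everything left over is genuinely "higher than first order in $\delta^{t-1}_{\textrm{multi}}$." The Taylor remainder $\alpha\,O(\|\delta^{t-1}_{\textrm{multi}}\|^{2})$ is clearly second order in $\delta^{t-1}_{\textrm{multi}}$. The term $\alpha H(x)\sum_{s=1}^{t-1}\hat{R}_1^{s}$ must be re-expressed in terms of $\delta^{t-1}_{\textrm{multi}}$: each $\hat{R}_1^{s}$ is higher than first order in $\delta^{s-1}_{\textrm{multi}}$, and since $\delta^{s-1}_{\textrm{multi}}$ is a partial sum contained inside $\delta^{t-1}_{\textrm{multi}}$, these terms are controlled by $\|\delta^{t-1}_{\textrm{multi}}\|^{2}$ or higher. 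Collecting all such contributions into $\hat{R}_1^{t}$ closes the induction. No nonstandard inequalities are needed; the argument is purely a Taylor-expansion-plus-geometric-series computation.
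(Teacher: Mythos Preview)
Your proposal is correct and follows essentially the same approach as the paper: induction on $t$, a first-order Taylor expansion of $g$ at $x$, substitution of the inductive hypothesis into $\delta^{t-1}_{\textrm{multi}}=\sum_{s=1}^{t-1}\Delta x^{s}_{\textrm{multi}}$, and then the geometric/telescoping identity to collapse $I+\alpha H(x)\sum_{s=0}^{t-2}[I+\alpha H(x)]^{s}$ to $[I+\alpha H(x)]^{t-1}$. Your remainder bookkeeping also matches the paper's definition $\hat{R}_1^{t}=\alpha H(x)\sum_{t'=1}^{t-1}\hat{R}_1^{t'}+R_1^{t-1}$.
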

\begin{proof}

If $t=1$, $\Delta x^1_{\textrm{multi}} = \alpha\cdot g(x)$.

Let $\forall t' < t, \Delta x^{t'}_{\textrm{multi}}= \alpha \left[I + \alpha H(x) \right]^{t'-1} g(x) + \hat{R}_1^{t'}$, then we have

\begin{dmath*}
\Delta x^t_{\textrm{multi}} = \alpha \cdot g(x+\delta_{\textrm{multi}}^{t-1}) \quad // \quad \text{According to Equation~(\ref{eq:delta_multi})} \\
= \alpha \cdot g(x+ \Delta x^{1}_{\textrm{multi}} + \Delta x^{2}_{\textrm{multi}} + \dots + \Delta x^{t-1}_{\textrm{multi}}) \quad // \quad \text{According to Equation~(\ref{eq:multi_update})} \\
    = \alpha \cdot g\left(x+\alpha\left[ I + \left[I+\alpha H(x) \right] + \left[I+\alpha H(x) \right]^2 +\dots + \left[I+\alpha H(x) \right]^{t-2} \right] g(x) +\sum_{t'=1}^{t-1}\hat{R}^{t'}_1\right),
\end{dmath*}
where $\hat{R}_1^{t'}$ denotes terms of elements in $\delta^{t'-1}_{\textrm{multi}}$ of higher than the first order.

Using the Taylor series, we get
\begin{equation} \label{eq:lemma_0}
    \begin{split}
    \Delta x^t_{\textrm{multi}} = \alpha\cdot g(x) + \alpha^2 H(x) T(x)+ \underbrace{\alpha H(x) \sum_{t'=1}^{t-1}\hat{R}^{t'}_1 + R_1^{t-1}}_{ \hat{R}_1^{t}},
    \end{split}
\end{equation}
where $R_1^{t-1}$ denotes terms with elements $\delta_{\textrm{multi}}^{t-1}$ of higher than the first order.
$T(x)$ in Equation~(\ref{eq:lemma_0}) is given as follows.
\begin{equation} \label{eq:lemma_1}
    T(x) = \left[ I + \left[I+\alpha H(x) \right] + \left[I+\alpha H(x) \right]^2 +\dots + \left[I+\alpha H(x) \right]^{t-2} \right] g(x) .
\end{equation}
Multiply $(I+\alpha H(x))$ on both sides of Equation~(\ref{eq:lemma_1}), and we get
\begin{align} \label{eq:lemma_2}
    (I+\alpha H(x))T(x)
    = \alpha \cdot \left[ \left[I+\alpha H(x) \right] + \left[I+\alpha H(x) \right]^2 +\dots + \left[I+\alpha H(x) \right]^{t-1} \right] g(x).
\end{align}
Then, according to Equation~(\ref{eq:lemma_2}) and Equation~(\ref{eq:lemma_1}), we get
\begin{align} \label{eq:lemma_3}
     H(x) T(x)
    = \left[ \left[I+\alpha H(x) \right]^{t-1} -I \right] g(x).
\end{align}
Substituting Equation~(\ref{eq:lemma_3}) back to Equation~(\ref{eq:lemma_0}), we have
\begin{equation*}
    \Delta x^t_{\textrm{multi}} = \alpha \left[I + \alpha H(x) \right]^{t-1} g(x) + \hat{R}_1^t.
\end{equation*}

In this way, we have proved that $\forall t \ge 1, \Delta x^t_{\textrm{multi}} =\alpha \left[I + \alpha H(x) \right]^{t-1} g(x) + \hat{R}_1^t$.
\end{proof}

\clearpage

\begin{propositionappendix} \label{prop:1_append}
% Given an input sample $x\in\mathbb{R}^n$ and a DNN $h(\cdot)$ trained for classification, 
The adversarial perturbation generated by the multi-step attack via gradient descent is given as $\delta_{\textrm{multi}}^m = \alpha\sum_{t=0}^{m-1}\nabla_x\ell(h(x+\delta^{t}_{\textrm{multi}}), y)$, where $\delta^{t}_{\textrm{multi}}$ denotes the perturbation after the $t$-th step of updating, and $m$ is referred to as the total number of steps.
The adversarial perturbation generated by the single-step attack is given as
$\delta_{\textrm{single}} = \alpha m \nabla_x\ell(h(x), y)$.
The expectation of interactions between perturbation units in $\delta_{\textrm{multi}}^m$, $\mathbb{E}_{a, b}[{I_{a b}(\delta_{\textrm{multi}}^m)}]$, is larger than $\mathbb{E}_{a, b}[{I_{a b}(\delta_{\textrm{single}})}]$, \emph{i.e.} $\mathbb{E}_{a, b}[{I_{a b}(\delta_{\textrm{multi}}^m)}]\ge\mathbb{E}_{a, b}[{I_{a b}(\delta_{\textrm{single}})}]$.
\end{propositionappendix}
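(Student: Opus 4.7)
The plan is to reduce both interaction quantities to quadratic forms in the Hessian $H(x) = \nabla_x^2 \ell(h(x), y)$ and then compare them spectrally. By Lemma 1, $\mathbb{E}_{a,b}[I_{ab}(\delta)] = \frac{1}{n^2}\,\delta^T H(x)\delta + \mathbb{E}_{a,b}[\hat{R}_2(\delta)]$, so up to higher-order remainders it suffices to compare $\delta_{\textrm{multi}}^T H \delta_{\textrm{multi}}$ against $\delta_{\textrm{single}}^T H \delta_{\textrm{single}}$. The single-step perturbation is simply $\delta_{\textrm{single}} = \alpha m\, g(x)$ with $g(x) = \nabla_x \ell(h(x), y)$, which is the first-order Taylor truncation of the multi-step iterate around $x$.

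Second, I would sum the per-step update provided by the induction lemma to obtain
\[\delta_{\textrm{multi}}^m = \alpha\sum_{t=0}^{m-1}[I + \alpha H(x)]^t g(x) + \sum_{t=1}^m \hat{R}_1^t.\]
Diagonalising $H(x) = U\Lambda U^T$ with $\Lambda = \mathrm{diag}(\lambda_1,\ldots,\lambda_n)$ and writing $\tilde g = U^T g(x)$, the leading quadratic forms split along the eigen-directions as
\[\delta_{\textrm{multi}}^T H \delta_{\textrm{multi}} = \sum_i \frac{((1+\alpha\lambda_i)^m - 1)^2}{\lambda_i}\,\tilde g_i^2, \qquad \delta_{\textrm{single}}^T H \delta_{\textrm{single}} = \alpha^2 m^2 \sum_i \lambda_i \tilde g_i^2.\]
The desired inequality therefore reduces, term by term, to $((1+\alpha\lambda_i)^m - 1)^2 \ge \alpha^2 m^2 \lambda_i^2$, which follows from Bernoulli's inequality $(1+x)^m \ge 1 + mx$ (equivalently, from the convexity of $x\mapsto (1+x)^m$ with matching value and derivative at $x=0$) followed by squaring.

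The main obstacle will be two coupled issues. First, the remainders $\hat{R}_1^t$ from the induction lemma and $\hat{R}_2(\delta)$ from Lemma 1 must be controlled; I would bound them as $O(\alpha^3\|g(x)\|^3)$ terms that are strictly dominated by the leading $O(\alpha^2)$ comparison for moderate step sizes, so they cannot flip the sign of the difference. Second, squaring Bernoulli's inequality is clean only on non-negative eigen-directions of $H(x)$: for $\lambda_i < 0$ with small $|\alpha\lambda_i|$ the multi-step contribution can momentarily fall below $\alpha^2 m^2 \lambda_i^2$. I would dispose of this by arguing that in the adversarial regime the iterate accumulates in the top (positive) spectrum of $H(x)$ — that is where $g(x)^T H(x)^k g(x)$ grows fastest under repeated ascent — so the positive-$\lambda_i$ contributions dominate the eigen-expansion in expectation, consistent with the empirical verification reported in Appendix C. Packaging these two approximations into an explicit error bound in $\alpha$ is where I expect most of the technical work to sit.
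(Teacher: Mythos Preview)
Your approach is genuinely different from the paper's and, in the leading-order comparison, cleaner. The paper does not diagonalise: it truncates the geometric series $\sum_t (I+\alpha H)^t$ at second order in $H$, writes the difference $\mathbb{E}_{a,b}[I_{ab}(\delta_{\textrm{multi}}^m)-I_{ab}(\delta_{\textrm{single}})]$ as $\tfrac{\alpha^3 m^2(m-1)}{2}\,\mathbb{E}_{a,b}[U_{ab}+U_{ba}]$ with $U_{ab}=g_a H_{ab}\sum_{a'} H_{a'b}g_{a'}$, and then argues $\mathbb{E}_{a,b}[U_{ab}]\ge 0$ via a semi-empirical claim that the single term $g_b H_{bb}$ is negligible against the full sum $\sum_a g_a H_{ab}$ (so that $(A-B)A\ge 0$). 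Your Bernoulli route replaces that sign heuristic by a clean convexity fact. In addition, your worry about negative eigenvalues is misplaced in your favour: for $\lambda_i<0$ with $|\alpha\lambda_i|<1$ the term-by-term inequality you actually need, after multiplying through by $\lambda_i<0$, is the \emph{reverse} one, $((1+\alpha\lambda_i)^m-1)^2\le \alpha^2 m^2\lambda_i^2$, and this also follows from Bernoulli since $m\alpha\lambda_i\le (1+\alpha\lambda_i)^m-1<0$ gives $|(1+\alpha\lambda_i)^m-1|\le m|\alpha\lambda_i|$. So the spectral comparison goes through for both signs of $\lambda_i$ without any appeal to dominance of the positive spectrum.

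The place you underestimate the difficulty is the remainder control. The $O(\alpha^2)$ parts of $\delta^T H\delta$ are identical for multi-step and single-step (both equal $\alpha^2 m^2 g^T H g$), so the \emph{difference} you are trying to bound below is itself $O(\alpha^3)$. The Taylor remainders $\hat R_1^t$ and $\hat R_2(\delta)$ are also $O(\alpha^3)$ in the step size (they carry third derivatives of $\ell$ contracted with three copies of $\delta$), so the claim that they are ``strictly dominated by the leading $O(\alpha^2)$ comparison'' does not settle the sign. The paper faces the same obstruction and resolves it not by an order-in-$\alpha$ argument but by an explicit smallness assumption on Hessian entries (their Assumption~1, verified empirically) together with the smallness of the perturbation. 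You would need a comparable structural assumption---essentially that third-derivative contributions are small relative to $\|H\|^2/\|g\|$---rather than a pure asymptotic bound in $\alpha$.
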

\subsection{Fairness of comparisons of interactions inside different perturbations} \label{sec:fairness}
Proposition~\ref{pro:1-1} is valid for different loss functions of generating of adversarial perturbations.
In this section, we discuss the fairness of comparisons of interactions inside different perturbations.

When we compare interactions inside different perturbations, magnitudes of these perturbations should be similar, because the comparison of interactions between adversarial perturbations of different magnitudes is not fair.
For fair comparisons, in Section~\ref{sec:algo}, this paper controls the magnitude of the single-step attack by setting the step size of the single-step attack as $\alpha m$, where $\alpha$ and $m$ denotes the step size and the total number of steps of the multi-step attack, respectively.
The equivalent step size $\alpha m$ makes the magnitude of perturbations generated by the single-step attack to be similar to that of perturbations generated by the multi-step attack, when we use the target score before the softmax layer to generate adversarial perturbations, such as $\Tilde{\ell}(h(x), y)= \max_{y'\ne y}h(x) - h_{y}(x)$.
In this case, the magnitude of the gradient $\nabla_x \Tilde{\ell}(h(x), y)$ is relatively stable.
In particular, this type of loss has been widely used.
For example, one of the most widely used attacking~\citep{carlini2017towards}, uses the score before the softmax layer for targeted attacking.

\subsection{Proof of Proposition~{\ref{prop:1_append}}}
\begin{proof}
According to Lemma~\ref{lemma:induction}, the update of the perturbation with the multi-step attack at the step $t$ is given as follows.
\begin{equation} \label{eq:lemma2}
    \Delta x^t_{\textrm{multi}} =\alpha \left[I + \alpha H(x) \right]^{t-1} g(x) + \hat{R}_1^t,
\end{equation}
where $\hat{R}_1^t$ denotes terms with elements in $\delta^{t-1}_{\textrm{multi}}$ of higher than the first order, and $\alpha$ represents the step size.

To simplify the notation without causing ambiguity, we write $g(x)$ and $H(x)$ as $g$ and $H$, respectively.
In this way, according to Equation~(\ref{eq:multi_update}) and Equation~(\ref{eq:lemma2}), $\delta_{\textrm{multi}}^m$ can be written as follows.
\begin{equation} \label{eq:sum_of_Delta}
    \begin{split}
        \delta_{\textrm{multi}}^m &= \alpha \left[ I + \left[I+\alpha H \right] + \left[I+\alpha H \right]^2 +\dots + \left[I+\alpha H \right]^{m-1} \right] g + \sum_{t=1}^m \hat{R}_1^t \\
    &= \alpha \left[m I + \frac{\alpha m(m-1)}{2} H + \dots \right]g + \sum_{t=1}^m \hat{R}_1^t,
    \end{split}
\end{equation}
where $m$ represents the total number of steps.
According to Lemma~\ref{lemma:1}, the Shapley interaction between perturbation units $a, b$ in $\delta^m_{\textrm{multi}}$ is given as follows.
\begin{align} \label{eq:multi_interaction}
    I_{a b}(\delta_{\textrm{multi}}^m) &=  \delta_{\textrm{multi},a}^m H_{ab} \delta_{\textrm{multi}, b}^m + \hat{R}_2(\delta_{\textrm{multi}}^m),
\end{align}
{where $\hat{R}_2(\delta_{\textrm{multi}}^m)$ denotes terms with elements in $\delta_{\textrm{multi}}^m$ of higher than the second order.}

According to Equation~(\ref{eq:sum_of_Delta}) and Equation~(\ref{eq:multi_interaction}), we have
\begin{equation} \label{eq:multi}
    \begin{split}
        I_{a b}(\delta_{\textrm{multi}}^m)
    &=  H_{ab}\Big[\alpha m g_a + \frac{\alpha^2 m(m-1)}{2}\sum_{b'=1}^n (H_{a b'} g_{b'}) + \dots + \sum_{t=1}^m {\underbrace{o(\delta^t_{\textrm{multi}, a})}_{
    \begin{array}{cc}
         &\textrm{terms of $\delta^t_{\text{multi}, a}$}  \\
         & \textrm{of higher than the first order,} \\ 
         & \textrm{which corresponds to the term of } \\
         & \textrm{$\hat{R}_1^t$ in Equation~(\ref{eq:sum_of_Delta})}
    \end{array}
    }} \Big]  \Big[ \\
    &\alpha m g_b +\frac{\alpha^2 m(m-1)}{2}\sum_{a'=1}^n (H_{a' b} g_{a'}) + \dots + {\underbrace{o(\delta^t_{\textrm{multi}, b})}_{
    \begin{array}{cc}
         &\textrm{terms of $\delta^t_{\text{multi}, b}$}  \\
         & \textrm{of higher than the first order,} \\ 
         & \textrm{which corresponds to the term of } \\
         & \textrm{$\hat{R}_1^t$ in Equation~(\ref{eq:sum_of_Delta})}
    \end{array}
    }}
    % \overset{\text{terms of $\delta^t_{\text{multi}, b}$}}{\text{of higher than the first order}}}}
    \Big] + \hat{R}_2(\delta_{\textrm{multi}}^m)\\
    &=\underbrace{{\alpha^2m^2} g_a g_b H_{a b} }_{\text{first-order terms \emph{w.r.t.} elements in } H}   \\
    &+ \underbrace{\left[ \frac{\alpha^3(m-1)m^2}{2}g_b\sum_{b'=1}^n (H_{a b'} g_{b'}) + \frac{\alpha^3(m-1)m^2}{2 }g_a\sum_{a'=1}^n (H_{a' b} g_{a'})\right] H_{a b}}_{\text{second-order terms \emph{w.r.t.} elements in } H}  \\
    &+ \underbrace{\left[\frac{\alpha^4 (m-1)^2 m^2}{4}\sum_{b'=1}^n (H_{a b'} g_{b'})\sum_{a'=1}^n (H_{a' b} g_{a'})H_{a b} + \dots\right]}_{{\mathcal{R}^{\textrm{multi}}_2(H)}} \\
    &+\underbrace{[\sum_{t=1}^m o(\delta^t_{\textrm{multi}, a})] H_{ab} \delta_{\textrm{multi}, b}^m + [\sum_{t=1}^m o(\delta^t_{\textrm{multi}, b})] H_{ab} \delta_{\textrm{multi}, a}^m + \hat{R}_2(\delta_{\textrm{multi}}^m)}_{\hat{R}'_2(\delta^m_{\textrm{multi}})}  \\
    &= \alpha^2m^2 g_a g_b H_{a b} + \frac{\alpha^3(m-1)m^2}{2} g_a H_{a b} \sum_{a'=1}^n (H_{a' b} g_{a'}) \\
    & +\frac{\alpha^3(m-1)m^2}{2} g_b H_{a b} \sum_{b'=1}^n (H_{a b'} g_{b'}) + \hat{R}'_2(\delta_{\textrm{multi}}^m) + \mathcal{R}^{\textrm{multi}}_2(H),
    \end{split}
\end{equation}
where $\mathcal{R}^{\textrm{multi}}_2(H)$ represents terms with elements in $H$ of higher than the second order, and $\hat{R}'_2(\delta^m_{\textrm{multi}})$ represents terms with elements in $\delta^m_{\textrm{multi}}$ of higher than the second order.

Let us consider the single-step attack.
When we compare interactions inside different perturbations, magnitudes of these perturbations should be similar, because the comparison of interactions between adversarial perturbations of different magnitudes is not fair.
For fair comparisons, in Section~\ref{sec:algo}, this paper controls the magnitude of the single-step attack, as follows.
The single-step attack only uses the gradient information on the original input $x$, which generates adversarial perturbations as:
\begin{align*}
    \delta_{\textrm{single}} &= \alpha m g.
\end{align*}
Therefore, according to Lemma~\ref{lemma:1}, the interaction between perturbation units $a, b$ of $\delta_{\textrm{single}}$ is given as follows.
\begin{equation} \label{eq:single}
    \begin{split}
         I_{ab}(\delta_{\textrm{single}}) &=  \delta_{\textrm{single},a} H_{ab} \delta_{\textrm{single}, b} + \hat{R}_2 (\delta_{\textrm{single}})\\
    &= m^2 \alpha^2 g_a g_b H_{a b} + \hat{R}_2 (\delta_{\textrm{single}}),
    \end{split}
\end{equation}
{where $\hat{R}_2 (\delta_{\textrm{single}})$ denotes terms with elements in $\delta_{\textrm{single}}$ of higher than the second order.}
In this way, according to Equation~(\ref{eq:multi}) and Equation~(\ref{eq:single}), the expectation of the difference between $I_{ab}(\delta_{\textrm{multi}}^m)$ and $I_{ab}(\delta_{\textrm{single}})$ is given as follows.

\begin{align*}
    &\mathbb{E}_{a, b}\left[I_{ab}(\delta_{\textrm{multi}}^m) - I_{ab}(\delta_{\textrm{single}})\right] \\
    &=   \mathbb{E}_{a,b}\Big[ \frac{\alpha^3(m-1)m^2}{2}  g_a H_{a b} \sum_{a'=1}^n (H_{a' b} g_{a'}) +  \frac{\alpha^3(m-1)m^2}{2} g_b H_{a b} \sum_{b'=1}^n (H_{a b'} g_{b'}) \\
    &+ \hat{R}'_2(\delta_{\textrm{multi}}^m)  + \mathcal{R}^{\textrm{multi}}_2(H) - \hat{R}_2(\delta_{\textrm{single}})\Big] \\
    &=\frac{\alpha^3(m-1)m^2}{2} \mathbb{E}_{a,b}\left[ \underbrace{g_a H_{a b} \sum_{a'=1}^n (H_{a' b} g_{a'})}_{{U_{a b}}} + \underbrace{g_b H_{a b} \sum_{b'=1}^n (H_{a b'} g_{b'})}_{{U_{b a}}}\right] + \mathbb{E}_{a,b}\left[ R_{a b}\right],
\end{align*}
where
\begin{align*}
     R_{a b} = \hat{R}'_2(\delta_{\textrm{multi}}^m) + \mathcal{R}^{\textrm{multi}}_2(H)  - \hat{R}_2(\delta_{\textrm{single}}).
\end{align*}

\begin{figure}[t!]
    \centering
    \includegraphics[width=0.5\linewidth]{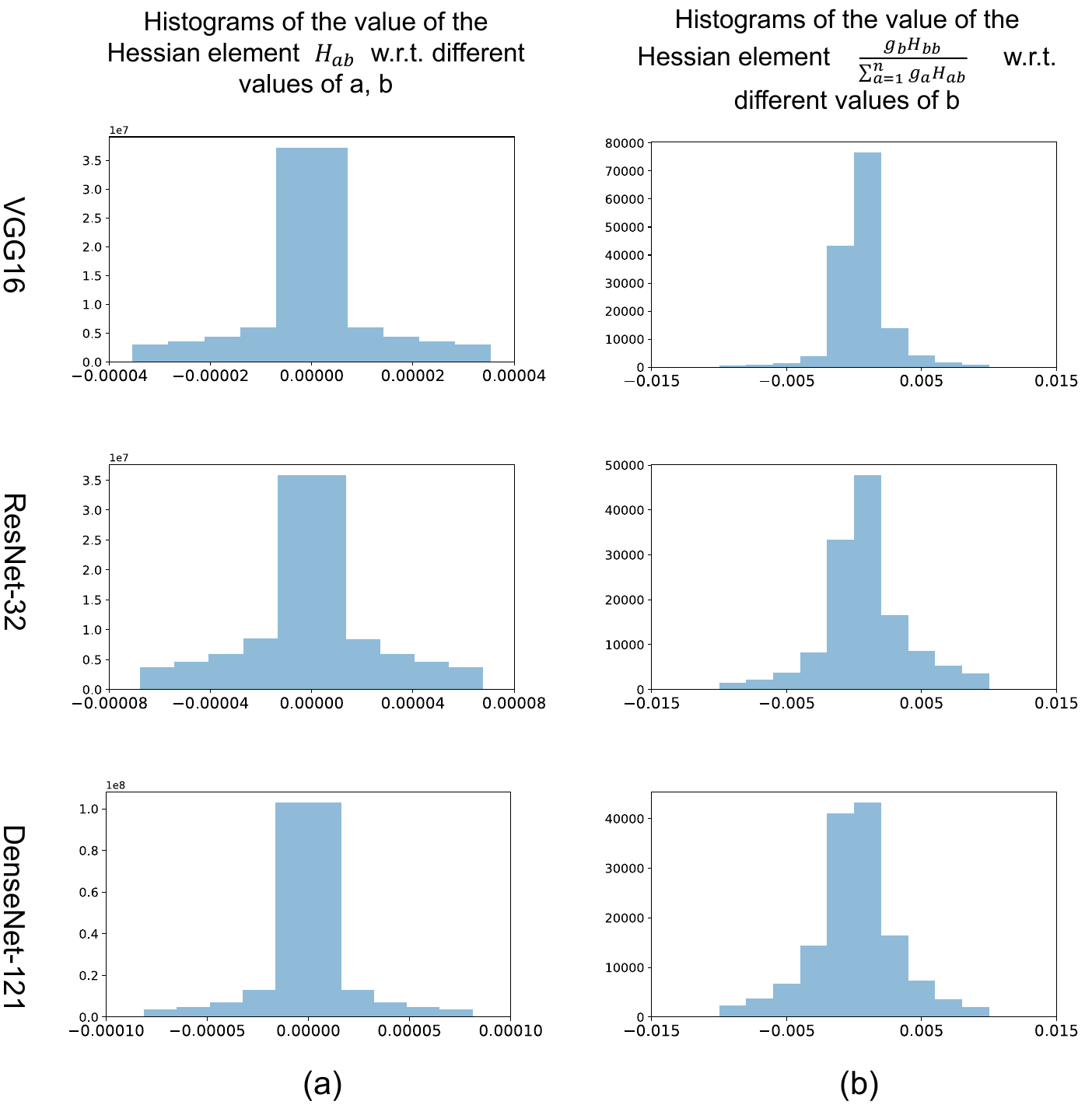}
    \caption{
    % (a) Histograms of the value of the gradient $g_i(x)$ of the classification loss. 
    (a) Histograms of the value of the Hessian element $H_{a b}(x)$ {\emph{w.r.t.} different values of $a, b$}.
    (b) Histograms of the value of $\frac{g_b H_{b b}}{\sum_{a=1}^n g_a H_{a b}}$ {\emph{w.r.t.} different values of $b$.}
    Because the Hessian of the DNN with the ReLU activation is not well defined, we replace the ReLU activation with the Softplus activation $f(x) = \frac{1}{\beta}\log(1+e^{-\beta x})$.
    We train VGG-16, ResNet-32, and DensetNet-121 on the CIFAR-10 dataset~\citep{krizhevsky2009learning}, and use the cross-entropy loss as the classification loss.
    }
    \label{fig:assumption}
\end{figure}

\textit{Assumption 1}:
Magnitudes of elements in the Hessian matrix $H(x)$ is small that $|H_{a b}(x)| \ll 1$, where $1\le a,b\le n$. Therefore, $H^k(x) \approx 0$, if $k>2$.

We verify the assumption by directly measuring the value of $H_{a b} (x)$.
As \Figref{fig:assumption}~(a) shows, the value of $H_{a b}(x)$ is very small that $|H_{a b}(x)| \ll 1$.

According to Assumption 1, we have $R^{\textrm{multi}}_2(H)\approx 0$.
Note that the magnitude of $\delta^m_{\textrm{multi}}$ and the magnitude of $\delta_{\textrm{single}}$ are small, then $\hat{R}'_2(\delta_{\textrm{multi}}^m)\approx 0$, and $R_2(\delta_{\textrm{single}}) \approx 0$. 
In this way, we have $\mathbb{E}_{a, b}[R_{a b}] =  \mathbb{E}_{a, b}[\hat{R}'_2(\delta_{\textrm{multi}}^m) + R_2(\delta_{\textrm{single}}) + R^{\textrm{multi}}_2(H)] \approx 0$.

Moreover, for the expectation of $U_{a b}$, we have
\begin{align*}
    \mathbb{E}_{a, b}[U_{a b}] &= \frac{1}{n(n-1)} \sum_{b=1}^{n} \sum_{a\ne b} g_a H_{a b} \sum_{a'=1}^n (g_{a'} H_{a' b} ) \\
    &= \frac{1}{n(n-1)} \sum_{b=1}^{n} \left\{ \left[\underbrace{ \left(\sum_{a=1}^n g_a H_{a b}\right)}_{A} -\underbrace{g_b H_{b b}}_{B}\right]  \underbrace{\left(\sum_{a'=1}^n g_{a'} H_{a' b}\right)}_{A}    \right\} 
\end{align*}

Let us focus on terms of $A$ and $B$. 
Note that $A$ is the sum of $n$ terms ($n$ is large).
In comparisons, $B$ is just a single term in $A$.
Therefore, the sign of $A-B$ is usually dominated by the term $A$.
In this way, we get $Prob\left[\textrm{sign}(A-B) = \textrm{sign}(A) \right] \approx 1$.
Therefore, $Prob\left[(A-B)A\ge 0 \right] \approx 1$.
We verify this assumption by  measuring the value of $\frac{g_b H_{b b}}{\sum_{a=1}^n g_a H_{a b}}$.
If $Prob\left[|\frac{g_b H_{b b}}{\sum_{a=1}^n g_a H_{a b}}| \ll 1\right]\approx 1$, then we have $Prob\left[\textrm{sign}(A-B) = \textrm{sign}(A) \right] \approx 1$.
As \Figref{fig:assumption}~(b) shows, the value of $\frac{g_b H_{b b}}{\sum_{a=1}^n g_a H_{a b}}$ is very small that $|\frac{g_b H_{b b}}{\sum_{a=1}^n g_a H_{a b}}| \ll 1$.
To this end, we have $(A-B)B\ge 0$, and we get
\begin{equation} \label{eq:Uge0}
    \mathbb{E}_{a, b}[U_{a b}] \ge 0
\end{equation}
Due to the symmetry of $a, b$, we have $\mathbb{E}_{a, b}[U_{b a}] = \mathbb{E}_{a, b}[U_{a b}]$.
Therefore,
\begin{align*}
    &\mathbb{E}_{a, b}\left[I_{ab}(\delta_{\textrm{multi}}) - I_{ab}(\delta_{\textrm{single}})\right] \\
    &= \frac{\alpha^3(m-1)m^2}{2} \mathbb{E}_{a,b}\left[U_{a b} + U_{b a} \right]  + \mathbb{E}_{a,b}\left[ R_{a b}\right]\\
    &\approx \alpha^3(m-1)m^2 \mathbb{E}_{a,b}[U_{a b}] + 0 \\
    &\ge 0.
\end{align*}
\end{proof}

\subsection{Verification of Proposition~{\ref{pro:1-1}}} 
\begin{figure}
    \centering
    \includegraphics[width=0.5\linewidth]{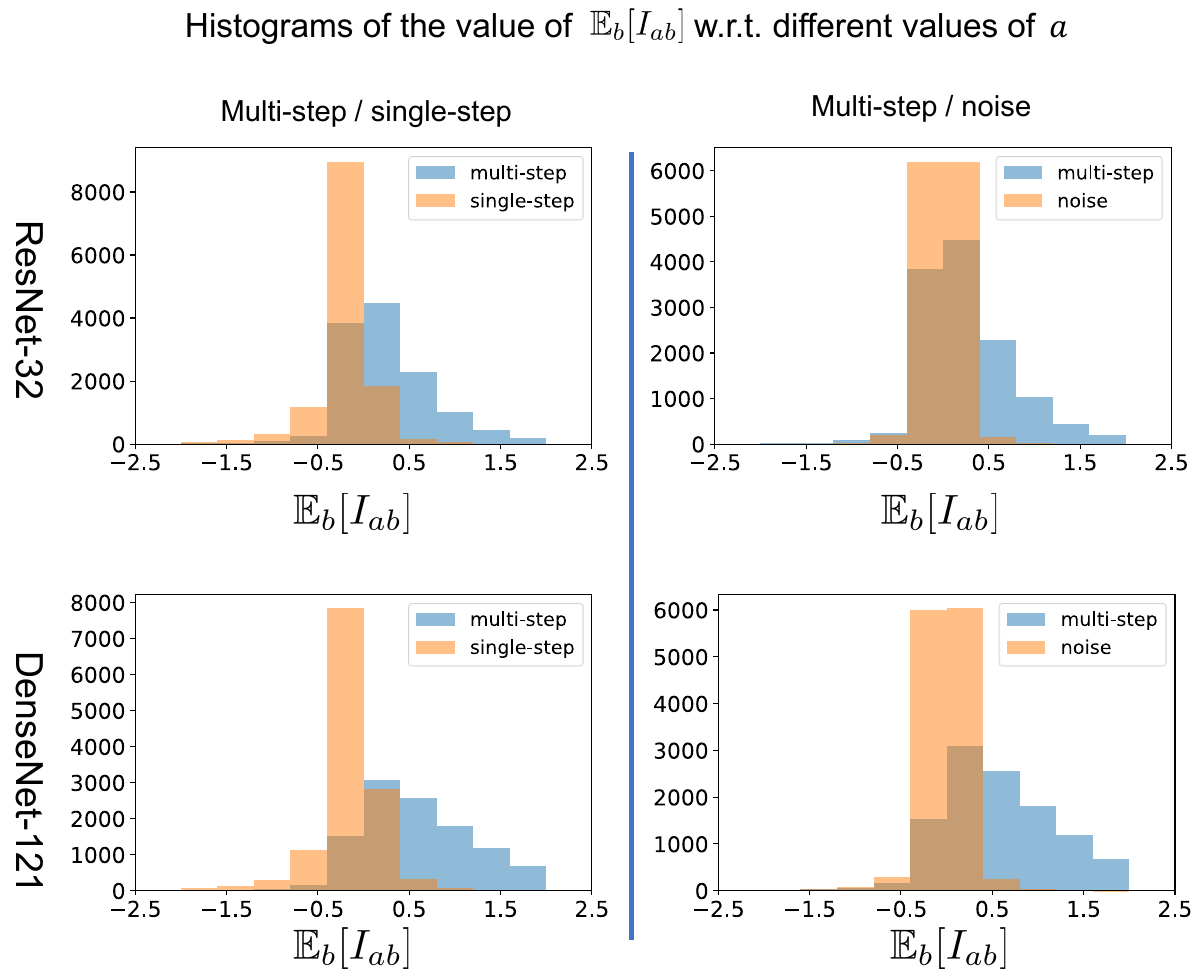}
    \caption{Histograms of the value of  $\mathbb{E}_{b}[I_{ab}]$ \emph{w.r.t.} different values of $a$ }
    \label{fig:verify_prop_1}
\end{figure}

We verify that perturbations generated by the multi-step attack tend to exhibit larger interaction than those generated by the single-step attack by measuring the value of $\mathbb{E}_{b}[I_{a b}]$.
As shown in Appendix~\ref{sec:expectation}, we prove that $\mathbb{E}_{b}[I_{a b}]=v(\Omega)-v(\Omega\setminus \{a\}) - v(\{a\}) + v(\emptyset)$.
Because the image data is high-dimensional, the cost of computing $\mathbb{E}_{b}[I_{a b}]$ is high.
As Appendix~\ref{sec:grid} demonstrates, given the input image, we can measure the interaction at the grid level, instead of the pixel level, to reduce the computational cost.
Therefore, we divide the input image into 16$\times$16 ($L=16$) grids, and use Equation~(\ref{eq:grid_interaction}) to compute the interaction as $\mathbb{E}_{(p', q')}\left[ I_{(p, q), (p' q')}(\delta)\right] = v(\Lambda)-v(\Lambda\setminus \{\Lambda_{p q}\}) - v(\{\Lambda_{p q}\}) + v(\emptyset)$, where $(p, q)$ denotes the coordinate of a grid.
The experiments were conducted with ImageNet validation images on ResNet-32 and DenseNet-121.

For fair comparisons, the magnitude of perturbations generated by the single-step attack is controlled to be same as that generated by the multi-step attack.
As \Figref{fig:verify_prop_1}~(left) shows, perturbations generated by the multi-step attack tend to exhibit larger interaction than those generated by the single-step attack.

\subsection{Perturbations generated by the multi-step attack tend to exhibit larger interaction than Gaussian noise} \label{sec:verify_prop_1}
Moreover, we compare the interaction inside perturbation units generated by the multi-step attack with the Gaussian noise perturbation.
Similarly, for fair comparisons, the magnitude of the Gaussian noise is controlled to be similar to that generated by the multi-step attack.
As \Figref{fig:verify_prop_1}~(right) shows, perturbations generated by the multi-step attack tend to exhibit larger interaction than Gaussian noise.

\section{Expectation of the Shapley Interaction} \label{sec:expectation}
In Equation~(\ref{eq:interaction}), the Shapley interaction between two perturbation units $i, j$ is given as follows.
\begin{equation*}
    I_{i j}(\delta) =  \phi(S_{i j}|\Omega\setminus\{i, j\}\cup S_{i j}) - \left(\phi(i|\Omega\setminus\{j\}) + \phi(j|\Omega\setminus\{i\})\right),
\end{equation*}
where $\phi(S_{i j}|\Omega\setminus\{i, j\}\cup S_{i j})$ is the Shapley value of the singleton unit $S_{i j}=\{i, j\}$, when perturbation units $i, j$ form a coalition. 
$\phi(i|\Omega\setminus\{j\})$ and $\phi(j|\Omega\setminus\{i\})$ are Shapley values of perturbation units $i, j$, when these two perturbation units work individually.
In this way, we can write the Shapley interaction in a closed form as follows.
\begin{equation}
\begin{aligned}
   I_{i j}(\delta)=\sum_{S \subseteq \Omega \backslash\{i, j\}} \frac{|S| !(n-|S|-2) !}{(n-1) !}\big[v(S \cup\{i, j\})-v(S \cup\{j\})-v(S \cup\{i\})+v(S)\big], \label{eq:close_interaction}
\end{aligned}
\end{equation}where $\forall S\subseteq\Omega, v(S)=\max_{y'\ne y}h_{y'}^{(s)}(x+\delta^{(S)})-h_{y}^{(s)}(x+\delta^{(S)})$.
The expectation of interaction is given as follows.
\begin{equation}
    \mathbb{E}_{i, j}\left[ I_{i j}(\delta)\right] = \frac{1}{n-1}\mathbb{E}_{i} \left[v(\Omega)-v(\Omega\setminus \{i\}) - v(\{i\}) + v(\emptyset)\right], \label{eq:expectation}
\end{equation}which is proved as follows.

\begin{proof}
As proved in Appendix~\ref{sec:eq_interaction}, $I_{i j}(\delta) = I'_{i j}(\delta).$
Therefore, the interaction between players $i$ and $j$ is given as follows.
\begin{align*}
    I_{i j}(\delta)&=\sum_{S \subseteq \Omega \backslash\{i, j\}} \frac{|S| !(n-|S|-2) !}{(n-1) !}\big[\left[v(S \cup\{i, j\})-v(S \cup\{j\})\right]-\left[v(S \cup\{i\})-v(S)\right]\big] \\
    &= \phi_{j, w/\, i} - \phi_{j, w/o\, i}.
\end{align*}
The expectation of the interaction can be written as follows.
\begin{align*}
    \mathbb{E}_{i, j}\left[ I_{i j}(\delta)\right]  =\frac{1}{ (n-1)}\mathbb{E}_{i}\left\{\sum_{j \in \Omega\setminus \{i\}} \left[\phi_{j, w/\, i} - \phi_{j, w/o\, i} \right]\right\}.
\end{align*}
According to the \textbf{\textit{efficiency property}} of Shapley values (please refer to Appendix~\ref{sec:aximos} for details):
\begin{align*}
   &\sum_{j\in \Omega\setminus\{i\}}\phi_{j, w/\, i} = v(\Omega) - v{\{i\}} \\
   &\sum_{j\in \Omega\setminus\{i\}}\phi_{j, w/o\, i} = v(\Omega\setminus \{i\}) - v(\emptyset).
\end{align*}
In this way,
\begin{align*}
    \mathbb{E}_{i, j}\left[ I_{i j}(\delta)\right] = \frac{1}{n-1}\mathbb{E}_{i} \left[v(\Omega)-v(\Omega\setminus \{i\}) - v(\{i\}) + v(\emptyset)\right].
\end{align*} 
\end{proof}

\section{Details of observing the negative correlation between the transferability and the interaction}\label{sec:setting}
In Section~\ref{sec:verify1}, we directly measure the transfer utility and interactions of different adversarial perturbations.
Here, we give more details of the experiments.
We measure the transfer utility as $\textrm{\textit{Transfer Utility}}=[\max_{y^\prime\ne y}h^{(t)}_{y^\prime}(x + \delta) - h^{(t)}_y(x + \delta)] - [\max_{y^\prime\ne y}h^{(t)}_{y^\prime}(x) - h^{(t)}_y(x)]$.
We measure the interaction as $ \mathbb{E}_{i, j}\left[ I_{i j}(\delta)\right] = \frac{1}{n-1}\mathbb{E}_{i} \left[v(\Omega)-v(\Omega\setminus \{i\}) - v(\{i\}) + v(\emptyset)\right]$.
As Appendix~\ref{sec:grid} demonstrates, to reduce the computational cost, given the input image, we can measure the interaction at the grid level, instead of the pixel level.
Therefore, we divide the input image into 16$\times$16 ($L=16$) grids, and use Equation~(\ref{eq:grid_interaction}) to compute the interaction as $\mathbb{E}_{(p, q), (p', q')}\left[ I_{(p, q), (p' q')}(\delta)\right] = \frac{1}{L^2-1}\mathbb{E}_{(p, q)} \left[v(\Lambda)-v(\Lambda\setminus \{\Lambda_{p q}\}) - v(\{\Lambda_{p q}\}) + v(\emptyset)\right]$, where $(p, q)$ denotes the coordinate of a grid.

Using the validation set of the ImageNet dataset~\citep{imagenet2015}, we generate adversarial perturbations on four types of DNNs, including ResNet-34/152(RN-34/152)~\citep{he2016deep} and DenseNet-121/201(DN-121/201)~\citep{huang2017densely}.
We transfer adversarial perturbations generated on each ResNet to DenseNets.
Similarly, we also transfer adversarial perturbations generated on each DenseNet to ResNets.
Given an input image $x$, adversarial perturbations are generated using Equation~(\ref{eq:optimization}), \emph{i.e.} $\min_\delta -\ell(h(x+\delta), y) + c\cdot \|\delta\|^p_p \; \text{s.t.} \; x+\delta\in[0, 1]^n$, where $c\in\mathbb{R}$ is a scalar constant.
In this way, we gradually change the value of $c$ as different hyper-parameters to generate different adversarial perturbations, \emph{i.e.} $c_k = k\beta + c_0$, where $\beta\in\mathbb{R}$ is a constant.
Moreover, to ensure adversarial perturbations generated with different values of $c_k$ change smoothly, we use the perturbation generated with ${c_{k-1}}$ to initialize the perturbation for $c_{k}$, \emph{i.e.} $\delta^{(c_{k})}_{\text{init}}=\gamma \delta^{(c_{k-1})}$, where $\gamma\in\mathbb{R}$ is a constant. In our experiments, we set $\gamma=0.6$.
For fair comparisons, we need to ensure adversarial perturbations generated with different hyper-parameters $c$ to be comparable with each other.
Thus, we select a constant $\tau$ and let $\|\delta\|_2=\tau$ as the stopping criteria of all adversarial attacks.
We set the number of steps as 1000.
The threshold $\tau$ is set to ensure that attacks with different hyper-parameters $\tau$ are almost converged when the $L_2$ norm of the perturbation $\|\delta\|_2$ reaches $\tau$.
Note that different attacking methods may successfully attack different sets of testing samples, so we select testing samples that can be successfully attacked by all attacking methods with different $c_k$ values (\emph{i.e.} those having reached the stopping criteria under all attacks). The interaction and the transfer utility reported in Figure~\ref{fig:relation} are measured on the selected samples for fair comparisons.

\section{Proof of Proposition~\ref{pro:3}} \label{sec:proof_vra}

To simplify the problem setting, we do not consider some tricks in adversarial attacking, such as gradient normalization and the clip operation.
In VR attack~\citep{wu2018understanding} , {the final perturbation generated after $t$ steps is given as follows.}
\begin{equation*}
\begin{split}
\delta_{\textrm{vr}}^t \defeq{\alpha\sum_{t'=0}^{t-1} \nabla_x\hat{\ell}(h(x+\delta^{t'}_{\textrm{vr}}}), y ),
\end{split}
\end{equation*}
where
\begin{equation} \label{eq:vr_def}
\begin{split}
\hat{\ell}(h(x), y) = \mathbb{E}_{\xi\sim\mathcal{N}(0, \sigma^2I)}\left[ \ell(h(x + \xi), y)\right] .
\end{split}
\end{equation}

According to Equation~(\ref{eq:vr_def}), the gradient and the Hessian matrix of $\hat{\ell}(h(x), y)$ is given as follows.
\begin{equation} \label{eq:g'H'}
\begin{split}
    \hat{g}(x) &= \nabla_x\hat{\ell}(h(x), y) \\
    &= \mathbb{E}_{\xi\sim\mathcal{N}(0, \sigma^2I)}\left[ \nabla_x \ell(h(x + \xi), y)\right], \\
    \hat{H}(x)&= \nabla^2_x\hat{\ell}(h(x), y)\\
    &= \mathbb{E}_{\xi\sim\mathcal{N}(0, \sigma^2I)}\left[ \nabla^2_x \ell(h(x + \xi), y)\right] .
\end{split}
\end{equation}
where $\alpha$ represents the step size.

\begin{lemma} \label{lemma:2}
Given the Gaussian smoothed loss $\hat{\ell}(x) = \mathbb{E}_{\xi\sim\mathcal{N}(0, \sigma^2I)}\left[ \ell(h(x), y)\right]$, where $\ell(h(x), y)$ is the original classification loss, {$\forall a\ne b, \forall c\ne a$,} we have 
$\mathbb{E}_x\left[ {\hat{g}}^2_a(x) {\hat{H}}^2_{ab}(x) - g^2_a(x) H_{ab}^2(x)\right]\le 0$, $\mathbb{E}_x\left[\hat{g}_a(x) \hat{g}_b(x) \hat{H}_{ab}(x)-g_a(x) g_b(x) H_{ab}(x)\right]=0$, and $\mathbb{E}_x \left[\hat{g}_a(x) \hat{g}_{c}(x) \hat{H}_{a b}(x) \hat{H}_{c b}(x) - g_a(x) g_{c}(x) H_{a b}(x) H_{c b}(x) \right] = 0$.
% $\mathbb{E}_x\left[{\hat{g}}^2_a(x)\right]\le \mathbb{E}_x\left[g^2_a(x)\right]$, and $\mathbb{E}_x\left[{\hat{H}}^2_{a b}(x)\right]\le \mathbb{E}_x\left[H^2_{a b}(x)\right]$.
\end{lemma}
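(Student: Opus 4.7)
The plan is to reduce all three sub-claims to integrals of $g_a, g_b, H_{ab}$ evaluated at jointly shifted arguments, by plugging in the representations $\hat{g}_a(x) = \mathbb{E}_\xi[g_a(x+\xi)]$ and $\hat{H}_{ab}(x) = \mathbb{E}_\xi[H_{ab}(x+\xi)]$ from Equation~(\ref{eq:g'H'}) and applying Fubini to interchange the outer $\mathbb{E}_x$ with the inner Gaussian expectations. Each smoothed factor will be written using an \emph{independent} Gaussian copy $\xi_1, \xi_2, \xi_3, \dots$, so that products of smoothed quantities become expectations of products of translates of the original $g$ and $H$.

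For the inequality in the first sub-claim, I would apply Jensen's inequality componentwise: $\hat{g}_a^2(x) \le \mathbb{E}_{\xi_1}[g_a^2(x+\xi_1)]$ and $\hat{H}_{ab}^2(x) \le \mathbb{E}_{\xi_2}[H_{ab}^2(x+\xi_2)]$, and multiply to obtain $\hat{g}_a^2(x)\hat{H}_{ab}^2(x) \le \mathbb{E}_{\xi_1,\xi_2}[g_a^2(x+\xi_1)\,H_{ab}^2(x+\xi_2)]$. Taking $\mathbb{E}_x$ and invoking translation invariance of the outer expectation (equivalently, the self-adjointness and $L^2$-contractivity of Gaussian convolution), the two independent shifts can be collapsed and the right-hand side is bounded above by $\mathbb{E}_x[g_a^2(x) H_{ab}^2(x)]$, yielding claim~1. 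For the two equalities (claims 2 and 3), the same substitution-plus-Fubini strategy applies but without any squaring, so there is no Jensen step and the intermediate inequality becomes an exact identity. Writing, e.g., $\mathbb{E}_x[\hat{g}_a \hat{g}_b \hat{H}_{ab}] = \mathbb{E}_{\xi_1,\xi_2,\xi_3}\mathbb{E}_x[g_a(x+\xi_1) g_b(x+\xi_2) H_{ab}(x+\xi_3)]$ and absorbing each Gaussian shift into $x$ by translation invariance, the averaging over $\xi_1, \xi_2, \xi_3$ reduces the expression exactly to $\mathbb{E}_x[g_a g_b H_{ab}]$. The quartic identity in claim~3 follows by the same argument with four independent copies, where the index conditions $a\ne b$ and $c\ne a$ keep the Hessian factors honest cross-derivatives rather than degenerating into squared gradients.

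The hard step will be making the translation-invariance argument rigorous, since the outer $\mathbb{E}_x$ is in principle taken over the data distribution rather than a literally translation-invariant measure. I would handle this either by promoting translation invariance to a modelling assumption or by appealing to the smoothness/stationarity of the data together with the local nature of Gaussian convolution (small $\sigma$). This same issue is also why claim~1 is an inequality rather than an equality: the correlation between $g_a^2(x+\xi_1)$ and $H_{ab}^2(x+\xi_2)$ at \emph{independent} shifts is generically smaller than at coincident shifts, and controlling this gap (via the $L^2$-contraction of Gaussian smoothing applied to the non-negative integrands $g_a^2$ and $H_{ab}^2$) is the main quantitative obstacle in the proof.
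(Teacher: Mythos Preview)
Your route is \emph{genuinely different} from the paper's. The paper does not use Jensen, Fubini, or translation invariance of $\mathbb{E}_x$ at all. Instead it introduces an explicit noise model: from $\hat g_a(x)=\mathbb{E}_{x'\sim\mathcal{N}(x,\sigma^2 I)}[g_a(x')]$ it posits that $g_a(x)$ is (approximately) Gaussian with mean $\hat g_a(x)$, i.e.\ $g_a(x)=\hat g_a(x)+\epsilon_{g_a}$ with $\epsilon_{g_a}\sim\mathcal{N}(0,\sigma_{g_a}^2)$, and likewise $H_{ab}(x)=\hat H_{ab}(x)+\epsilon_{H_{ab}}$, treating all the $\epsilon$'s as mutually independent and independent of the hatted quantities. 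With that ansatz, each claim is a direct expansion: squared terms pick up nonnegative variance contributions (giving the inequality in claim~1), and every cross term in claims~2--3 contains at least one factor $\mathbb{E}[\epsilon_{\cdot}]=0$. So the paper trades your translation-invariance assumption for an independence-of-noise assumption; its computation is entirely elementary once that assumption is granted.

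Your proposal is cleaner for claims~2 and~3 (Fubini plus a shift of the integration variable gives exact equality under a translation-invariant $\mathbb{E}_x$, no noise model needed), and you correctly flag the weak point. But for claim~1 your sketch has a real gap that you should not expect to close along the lines you indicate. After Jensen you reach
\[
\mathbb{E}_x\big[\hat g_a^2\,\hat H_{ab}^2\big]\ \le\ \mathbb{E}_{\eta}\,\mathbb{E}_x\big[g_a^2(x)\,H_{ab}^2(x+\eta)\big],
\]
and you then want this to be $\le \mathbb{E}_x[g_a^2(x)H_{ab}^2(x)]$. That last step is \emph{not} a consequence of $L^2$-contractivity or self-adjointness of Gaussian convolution: it is equivalent to $\mathbb{E}_x[g_a^2\cdot (T H_{ab}^2)]\le \mathbb{E}_x[g_a^2\cdot H_{ab}^2]$ for the smoothing operator $T$, which fails whenever $g_a^2$ and $H_{ab}^2$ are negatively correlated at coincident points (smoothing then \emph{raises} the inner product). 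So ``collapsing the two shifts'' is an extra correlation hypothesis, not a lemma you can prove from contractivity alone. The paper dodges precisely this obstacle by assuming the additive-noise decomposition, under which $\mathbb{E}_x[g_a^2 H_{ab}^2]-\mathbb{E}_x[\hat g_a^2\hat H_{ab}^2]$ expands into a sum of manifestly nonnegative variance terms plus mean-zero cross terms. If you want to keep your Fubini framework, you would need either (i) to add an explicit positive-correlation assumption on $(g_a^2,H_{ab}^2)$ under the data law, or (ii) to adopt the paper's noise ansatz for claim~1 while retaining your cleaner argument for claims~2 and~3.
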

\begin{proof}
According to Equation~(\ref{eq:g'H'}), we have
\begin{equation*}
\begin{split}
    \hat{g}_a(x)&=\mathbb{E}_{\xi\sim\mathcal{N}(0,\sigma^2I)}\left[g_a(x+\xi)\right]=\mathbb{E}_{x'\sim\mathcal{N}(x,\sigma^2I)}\left[g_a(x')\right], \\
    \hat{H}_{ab}(x)&=\mathbb{E}_{\xi\sim\mathcal{N}(0,\sigma^2I)}\left[\frac{\partial g_a(x+\xi)}{\partial x_b}\right]=\mathbb{E}_{x'\sim\mathcal{N}(x,\sigma^2I)}\left[\frac{\partial g_a(x')}{\partial x_b}\right] =\mathbb{E}_{x'\sim\mathcal{N}(x,\sigma^2I)}[H_{ab}(x')].
\end{split}
\end{equation*}
This indicates that the gradient and the Hessian matrix in the VR attack are both smoothed by the Gaussian noise. Because the Lipschitz constants of $g_a(x)$ and $H_{a b}(x)$ are {usually limited to a certain range}, we can ignore the tiny probability of large gradients and large elements in the Hessian matrix, and roughly assume that $g_a(x)\sim\mathcal{N}(\hat{g}_a(x),\sigma_{g_a}^2)$, and $H_{ab}(x)\sim\mathcal{N}(\hat{H}_{ab}(x),\sigma_{H_{ab}}^2)$, {where $\sigma_{g_a},\sigma_{H_{ab}} \in\mathbb{R}$ {are tow constants denoting} the standard deviation.}
Thus, $g_a(x)$ and $H_{ab}(x)$ can be written as follows.
\begin{equation}
\begin{split}
    g_a(x)&=\hat{g}_a(x)+\epsilon_{g_a},\quad {\epsilon_{g_a}}\sim\mathcal{N}(0,{\sigma_{g_a}^2}),\\
    H_{ab}(x)&=\hat{H}_{ab}(x)+\epsilon_{H_{ab}},\quad {\epsilon_{H_{ab}}}\sim\mathcal{N}(0,{\sigma_{H_{ab}}^2}).\\
    \label{eq:Gaussian of x}
\end{split}
\end{equation}

To simplify the notation without causing ambiguity, we write $\hat{g}(x)$ and $\hat{H}(x)$ as $\hat{g}$ and $\hat{H}$, respectively.
Moreover, we write $g(x)$ and $H(x)$ as $g$ and $H$, respectively.
In this way, we have
% \begin{small}
\begin{align*}
    &\mathbb{E}_x\left[{\hat{g}}^2_a{\hat{H}}^2_{ab} - g^2_a H_{ab}^2\right]\\
    &= \mathbb{E}_x\mathbb{E}_{\epsilon_{g_a},\epsilon_{H_{ab}}}\left[{\hat{g}}^2_a{\hat{H}}^2_{ab} -(\hat{g}_a+\epsilon_{g_a})^2(\hat{H}_{ab}+\epsilon_{H_{ab}})^2\right]\\
    &= -\mathbb{E}_x\mathbb{E}_{\epsilon_{g_a},\epsilon_{H_{ab}}}\left[\epsilon_{g_a}^2{\hat{H}}^2_{ab} + \epsilon_{H_{ab}}^2(\hat{g}_a+\epsilon_{g_a})^2+2\epsilon_{g_a}\hat{g}_a\hat{H}_{ab}+2\epsilon_{H_{ab}}\hat{H}_{ab}(\hat{g}_a+\epsilon_{g_a})^2 \right]\\
    &\le  -\mathbb{E}_x\mathbb{E}_{\epsilon_{g_a},\epsilon_{H_{ab}}}\left[2\epsilon_{g_a} \hat{g}_a\hat{H}_{ab}+2\epsilon_{H_{ab}}\hat{H}_{ab}(\hat{g}_a+\epsilon_{g_a})^2 \right]\\
    &= -\mathbb{E}_x\left\{\mathbb{E}_{\epsilon_{g_a}}\left[\epsilon_{g_a} \right]2 \hat{g}_a\hat{H}_{ab} + \mathbb{E}_{\epsilon_{g_a}}\left[ \mathbb{E}_{\epsilon_{H_{ab}}}[ \epsilon_{H_{ab}}]2\hat{H}_{ab}(\hat{g}_a+\epsilon_{g_a})^2 \right]\right\} \\
    &= -\mathbb{E}_x\left\{0\cdot2 \hat{g}_a\hat{H}_{ab} + \mathbb{E}_{\epsilon_{g_a}}\left[ 0\cdot 2\hat{H}_{ab}(\hat{g}_a+\epsilon_{g_a})^2 \right]\right\} \\
    &= 0.
\end{align*}
% \end{small}

According to Equation~(\ref{eq:Gaussian of x}), we have {$g_a=\hat{g}_a+\epsilon_{g_a}$}, {$g_b=\hat{g}_b+\epsilon_{g_b}$}. Thus, we have
% \begin{small}
\begin{align*}
      &\mathbb{E}_x\left[\hat{g}_a \hat{g}_b \hat{H}_{ab}-g_a g_b H_{ab}\right]\\
&= \mathbb{E}_x\left[\hat{g}_a(x) \hat{g}_b \hat{H}_{ab} - (\hat{g}_a + \epsilon_{g_a})(\hat{g}_b + \epsilon_{g_b})(\hat{H}_{ab} + \epsilon_{H_{ab}})\right]\\
 &= -\mathbb{E}_{x,\epsilon_{g_a},\epsilon_{g_a},\epsilon_{H_{ab}}}\left[\epsilon_{g_a}\epsilon_{g_b} \eps_H + \epsilon_{g_a}\epsilon_{g_b} \hat{H}_{ab} + \epsilon_{g_b}\epsilon_{H_{ab}}\hat{g}_a \right.\\
 &+ \left.\epsilon_{H_{ab}}\epsilon_{g_a} \hat{g}_b + \epsilon_{g_b} \hat{g}_b\hat{H}_{ab} + \epsilon_{g_b} \hat{g}_a\hat{H}_{ab} + \epsilon_{H_{ab}} \hat{g}_a\hat{g}_{b}\right]\\
 &= -\mathbb{E}_{x}\Big[ \mathbb{E}_{\epsilon_{g_a}}[\epsilon_{g_a}] \mathbb{E}_{\epsilon_{g_b}}[\epsilon_{g_b}] \mathbb{E}_{\epsilon_{H_{ab}}}[\eps_H] + \mathbb{E}_{\epsilon_{g_a}}[\epsilon_{g_a}] \mathbb{E}_{\epsilon_{g_b}}[\epsilon_{g_b}] \hat{H}_{ab}  +\mathbb{E}_{\epsilon_{g_b}}[\epsilon_{g_b}]\mathbb{E}_{\epsilon_{H_{ab}}}[\epsilon_{H_{ab}}] \hat{g}_a   \\
 &+ \mathbb{E}_{\epsilon_{H_{ab}}}[\epsilon_{H_{ab}}] \mathbb{E}_{\epsilon_{g_a}}[\epsilon_{g_a}] \hat{g}_b
 + \mathbb{E}_{\epsilon_{g_a}}[\epsilon_{g_a}] \hat{g}_b\hat{H}_{ab} + \mathbb{E}_{\epsilon_{g_b}}[\epsilon_{g_b}] \hat{g}_a\hat{H}_{ab} + \mathbb{E}_{\epsilon_{H_{ab}}}[\epsilon_{H_{ab}}] \hat{g}_a\hat{g}_{b}\Big]\\
 &=-\mathbb{E}_x\Big[0\cdot 0\cdot 0 + 0\cdot 0\cdot \hat{H}_{ab} + 0\cdot 0\cdot \hat{g}_a + 0\cdot 0\cdot \hat{g}_b + 0\cdot \hat{g}_b\hat{H}_{ab} + 0\cdot \hat{g}_a\hat{H}_{ab} + 0\cdot \hat{g}_a \hat{g}_{b} \Big]\\
 &= 0.
\end{align*}
% \end{small}

Moreover, according to Equation~(\ref{eq:Gaussian of x}), we have
% \begin{small}
\begin{align*}
    &\mathbb{E}_x \left[\hat{g}_a \hat{g}_{c} \hat{H}_{a b} \hat{H}_{c b} - g_a g_{c} H_{a b} H_{c b} \right] \\
&= \mathbb{E}_{x,\epsilon_{g_a},\epsilon_{g_c},\epsilon_{H_{a b}},\epsilon_{H_{c b}}} \left[\hat{g}_a \hat{g}_{c} \hat{H}_{a b} \hat{H}_{c b} - (\hat{g}_a+\epsilon_{g_a})(\hat{g}_{c}+\epsilon_{g_c})( \hat{H}_{a b} +\epsilon_{H_{a b}}) (\hat{H}_{c b} + \epsilon_{H_{c b}}) \right] \\
&= -\mathbb{E}_{x,\epsilon_{g_a},\epsilon_{g_c},\epsilon_{H_{a b}},\epsilon_{H_{c b}}} \Big[\epsilon_{g_a}\epsilon_{g_c}\epsilon_{H_{a b}}\epsilon_{H_{c b}}\\
&+ \epsilon_{g_c}\epsilon_{H_{a b}}\epsilon_{H_{c b}} \hat{g}_a + \epsilon_{g_a}\epsilon_{H_{a b}}\epsilon_{H_{c b}} \hat{g}_c+ \epsilon_{g_a}\epsilon_{g_c}\epsilon_{H_{c b}}\hat{H}_{a b} + \epsilon_{g_a}\epsilon_{g_c}\epsilon_{H_{a b}}\hat{H}_{c b}\\
&+ \epsilon_{g_a}\epsilon_{g_c} {\hat{H}_{a b}} {\hat{H}_{c b}} + \epsilon_{g_a} \epsilon_{H_{a b}} \hat{g}_c {\hat{H}_{c b}} + \epsilon_{g_a}\epsilon_{H_{c b}} \hat{g}_c {\hat{H}_{a b}}\\
&+ \epsilon_{g_c}\epsilon_{H_{a b}} \hat{g}_a{\hat{H}_{c b}}+\epsilon_{g_c}\epsilon_{H_{c b}} \hat{g}_a {\hat{H}_{a b}} +\epsilon_{H_{a b}}\epsilon_{H_{c b}}\hat{g}_a \hat{g}_c \\
& + \epsilon_{g_a} \hat{g}_c {\hat{H}_{a b}}{\hat{H}_{c b}} +\epsilon_{g_c}  \hat{g}_a {\hat{H}_{a b}}{\hat{H}_{c b}} + \epsilon_{H_{a b}} \hat{g}_a \hat{g}_c{\hat{H}_{c b}} + \epsilon_{H_{c b}} \hat{g}_a \hat{g}_c {\hat{H}_{a b}} \Big]\\
&= -\mathbb{E}_x \Big[\mathbb{E}_{\epsilon_{g_a}}[\epsilon_{g_a}]\mathbb{E}_{\epsilon_{g_c}}[\epsilon_{g_c}]\mathbb{E}_{\epsilon_{H_{a b}}}[\epsilon_{H_{a b}}]\mathbb{E}_{\epsilon_{H_{c b}}}[\epsilon_{H_{c b}}]  + \mathbb{E}_{\epsilon_{g_c}}[\epsilon_{g_c}]\mathbb{E}_{\epsilon_{H_{a b}}}[\epsilon_{H_{a b}}]\mathbb{E}_{\epsilon_{H_{c b}}}[\epsilon_{H_{c b}}] \hat{g}_a \\
&+ \mathbb{E}_{\epsilon_{g_a}}[\epsilon_{g_a}]\mathbb{E}_{\epsilon_{H_{a b}}}[\epsilon_{H_{a b}}]\mathbb{E}_{\epsilon_{H_{c b}}}[\epsilon_{H_{c b}}] \hat{g}_c+ \mathbb{E}_{\epsilon_{g_a}}[\epsilon_{g_a}]\mathbb{E}_{\epsilon_{g_c}}[\epsilon_{g_c}]\mathbb{E}_{\epsilon_{H_{c b}}}[\epsilon_{H_{c b}}] \hat{H}_{a b}\\ 
&+ \mathbb{E}_{\epsilon_{g_a}}[\epsilon_{g_a}]\mathbb{E}_{\epsilon_{g_c}}[\epsilon_{g_c}]\mathbb{E}_{\epsilon_{H_{a b}}}[\epsilon_{H_{a b}}]\hat{H}_{c b}\\
&+ \mathbb{E}_{\epsilon_{g_a}}[\epsilon_{g_a}] \mathbb{E}_{\epsilon_{g_c}}[\epsilon_{g_c}] {\hat{H}_{a b}} {\hat{H}_{c b}} + \mathbb{E}_{\epsilon_{g_a}}[\epsilon_{g_a}] \mathbb{E}_{\epsilon_{H_{a b}}}\epsilon_{H_{a b}} \hat{g}_c {\hat{H}_{c b}} + \mathbb{E}_{\epsilon_{g_a}}[\epsilon_{g_a}]\mathbb{E}_{\epsilon_{H_{c b}}}[\epsilon_{H_{c b}}] \hat{g}_c {\hat{H}_{a b}} \\
&+ \mathbb{E}_{\epsilon_{g_c}}[\epsilon_{g_c}]\mathbb{E}_{\epsilon_{H_{a b}}}[\epsilon_{H_{a b}}] \hat{g}_a{\hat{H}_{c b}}+\mathbb{E}_{\epsilon_{g_c}}[\epsilon_{g_c}]\mathbb{E}_{\epsilon_{H_{c b}}}[\epsilon_{H_{c b}}] \hat{g}_a {\hat{H}_{a b}} +\mathbb{E}_{\epsilon_{H_{a b}}}[\epsilon_{H_{a b}}]\mathbb{E}_{\epsilon_{H_{c b}}}[\epsilon_{H_{c b}}]\hat{g}_a \hat{g}_c \\
& + \mathbb{E}_{\epsilon_{g_a}}[\epsilon_{g_a}] \hat{g}_c {\hat{H}_{a b}}{\hat{H}_{c b}} +\mathbb{E}_{\epsilon_{g_c}}[\epsilon_{g_c}]  \hat{g}_a {\hat{H}_{a b}}{\hat{H}_{c b}} + \mathbb{E}_{\epsilon_{H_{a b}}}[\epsilon_{H_{a b}}] \hat{g}_a \hat{g}_c{\hat{H}_{c b}} + \mathbb{E}_{\epsilon_{H_{c b}}}[\epsilon_{H_{c b}}] \hat{g}_a \hat{g}_c {\hat{H}_{a b}} \Big]\\
&= -\mathbb{E}_x \Big[ 0 + 0\cdot \hat{g}_a + 0\cdot \hat{g}_c +0\cdot \hat{H}_{a b} + 0\cdot \hat{H}_{c b} \\
&+ 0\cdot{\hat{H}_{a b}} {\hat{H}_{c b}} + 0\cdot  \hat{g}_c {\hat{H}_{c b}} + 0\cdot \hat{g}_c {\hat{H}_{a b}} +0\cdot \hat{g}_a{\hat{H}_{c b}} + 0\cdot \hat{g}_a {\hat{H}_{a b}} + 0\cdot \hat{g}_a \hat{g}_c \\
&+0\cdot \hat{g}_c {\hat{H}_{a b}}{\hat{H}_{c b}} +0\cdot \hat{g}_a {\hat{H}_{a b}}{\hat{H}_{c b}} +0\cdot \hat{g}_a \hat{g}_c{\hat{H}_{c b}} +0\cdot \hat{g}_a \hat{g}_c {\hat{H}_{a b}}\Big]\\
&= 0.
\end{align*}
% \end{small}
\end{proof}

\begin{propositionappendix} 
% Given an input image $x\in\mathbb{R}^n$ and a DNN $h(\cdot)$ trained for classification, 
The adversarial perturbation generated by multi-step attack is denoted by $\delta_{\text{multi}}^m =\alpha\sum_{t=0}^{m-1}\nabla_x\ell(h(x+\delta^{t}_{\textrm{multi}}), y)$.
The adversarial perturbation generated by VR Attack is denoted by $\delta_{\text{vr}}^m =\alpha\sum_{t=0}^{m-1}\nabla_x\hat{\ell}(h(x+\delta^{t}_{\textrm{vr}}), y)$, where $\hat{\ell}(h(x+\delta^{t}_{\textrm{vr}}), y) = \mathbb{E}_{\xi\sim\mathcal{N}(0, \sigma^2I)}\left[\ell(h(x + \delta^{t}_{\textrm{vr}} + \xi), y)\right]$.
Perturbation units of $\delta_{\text{vr}}^m$ tend to exhibit smaller interaction than $\delta_{\text{multi}}^m$, \emph{i.e.}
$\mathbb{E}_x\mathbb{E}_{a,\, b}[I_{a b}(\delta_{\text{vr}}^m)] \le \mathbb{E}_x\mathbb{E}_{a,\, b}[I_{a b}(\delta_{\text{multi}}^m)]$.
\end{propositionappendix}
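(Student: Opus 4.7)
The plan is to mirror the argument of Proposition~\ref{pro:1-1} with the smoothed loss $\hat{\ell}$ replacing $\ell$, and then compare the two resulting closed-form expansions term by term using Lemma~\ref{lemma:2} (the Gaussian-smoothing inequalities). First I would establish an analog of Lemma~\ref{lemma:induction} for the VR update: because $\delta_{\text{vr}}^m$ satisfies the same recursion as $\delta_{\text{multi}}^m$ but driven by $\hat{g}$ and $\hat{H}$ in place of $g$ and $H$, the same induction yields
\begin{equation*}
\Delta x^t_{\text{vr}} \;=\; \alpha\bigl[I+\alpha \hat{H}(x)\bigr]^{t-1}\hat{g}(x) + \hat{R}_1^t,
\end{equation*}
so $\delta_{\text{vr}}^m = \alpha\bigl[mI+\tfrac{\alpha m(m-1)}{2}\hat{H}+\dots\bigr]\hat{g}+\sum_{t}\hat{R}_1^t$.

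Applying Lemma~\ref{lemma:1} to both perturbations then produces closed forms for $I_{ab}(\delta_{\text{multi}}^m)$ and $I_{ab}(\delta_{\text{vr}}^m)$ of exactly the shape derived in Equation~(\ref{eq:multi}), with the $(g,H)$ pair replaced by $(\hat{g},\hat{H})$ in the VR case. Subtracting, the difference $I_{ab}(\delta_{\text{multi}}^m) - I_{ab}(\delta_{\text{vr}}^m)$ collects three kinds of polynomial contributions: (i) a first-order-in-$H$ term proportional to $g_ag_bH_{ab} - \hat{g}_a\hat{g}_b\hat{H}_{ab}$, (ii) a second-order-in-$H$ term proportional to $g_aH_{ab}\sum_{a'}H_{a'b}g_{a'} - \hat{g}_a\hat{H}_{ab}\sum_{a'}\hat{H}_{a'b}\hat{g}_{a'}$ (plus its symmetric twin), and (iii) remainders $\hat{R}_2'$, $\mathcal{R}_2^{\text{multi}}$, $\mathcal{R}_2^{\text{vr}}$ that are higher-order in $\delta$ or in $H$ and hence negligible by Assumption~1 and the smallness of the perturbation magnitudes.

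Next I would split each second-order sum into the diagonal piece $a'=a$ and the off-diagonal piece $a'\neq a$, and take $\mathbb{E}_x$. Lemma~\ref{lemma:2} gives exactly the three facts we need: the first-order contribution $\mathbb{E}_x[g_ag_bH_{ab}-\hat{g}_a\hat{g}_b\hat{H}_{ab}]=0$ (for $a\neq b$, which is the relevant case under $\mathbb{E}_{a,b}$); the off-diagonal second-order contribution $\mathbb{E}_x[g_ag_{a'}H_{ab}H_{a'b}-\hat{g}_a\hat{g}_{a'}\hat{H}_{ab}\hat{H}_{a'b}]=0$ for $a'\neq a$; and the diagonal second-order contribution $\mathbb{E}_x[g_a^2H_{ab}^2 - \hat{g}_a^2\hat{H}_{ab}^2]\ge 0$. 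Together these imply
\begin{equation*}
\mathbb{E}_x\mathbb{E}_{a,b}\bigl[I_{ab}(\delta_{\text{multi}}^m) - I_{ab}(\delta_{\text{vr}}^m)\bigr] \;\ge\; \alpha^3(m-1)m^2\,\mathbb{E}_{a,b}\mathbb{E}_x\bigl[g_a^2H_{ab}^2-\hat{g}_a^2\hat{H}_{ab}^2\bigr] \;\ge\; 0,
\end{equation*}
which is the claim.

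The main obstacle is honestly justifying the remainder control: one must argue that $\mathcal{R}_2^{\text{vr}}(\hat{H})$ and the Gaussian-noise-induced perturbation of $\hat{R}_1^t$, $\hat{R}_2'(\delta_{\text{vr}}^m)$ are uniformly small under Assumption~1, so the inequality is not destroyed by the error terms. A secondary subtlety is the Gaussian modeling of $g_a$ and $H_{ab}$ underlying Lemma~\ref{lemma:2}: the cross-expectation identities depend on the zero-mean independent-noise assumption for $\epsilon_{g_a},\epsilon_{H_{ab}}$, and one should check that the indices $a,b,c$ in the off-diagonal second-order term indeed correspond to the hypotheses $a\neq b$ and $c\neq a$ under which the lemma was stated, handling the measure-zero diagonal cases separately.
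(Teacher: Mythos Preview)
Your proposal is correct and follows essentially the same route as the paper: derive the VR analogue of Equation~(\ref{eq:multi}) by replacing $(g,H)$ with $(\hat g,\hat H)$, subtract, split the second-order-in-$H$ sum into diagonal and off-diagonal pieces, and invoke the three identities of Lemma~\ref{lemma:2} to kill the first-order and off-diagonal terms while signing the diagonal term, with the higher-order remainders dismissed via Assumption~1. The paper organizes the bookkeeping slightly differently (it bundles the first-order and off-diagonal second-order contributions into a single residual $R_{ab}^{\textrm{vr}}$ and computes $I_{ab}(\delta_{\textrm{vr}})-I_{ab}(\delta_{\textrm{multi}})$ rather than the reverse), but the substance is identical.
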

\begin{proof}

To simplify the notation without causing ambiguity, we write $\hat{g}(x)$ and $\hat{H}(x)$ as $\hat{g}$ and $\hat{H}$, respectively.
Moreover, we write $g(x)$ and $H(x)$ as $g$ and $H$, respectively.

{Just like the conclusion in Equation~(\ref{eq:multi}), we can write the interaction between $\delta^m_{\textrm{vr}, a}$ and $\delta^m_{\textrm{vr}, b}$ as follows.}
\begin{equation} \label{eq:vr}
    \begin{split}
        I_{a b}(\delta_{\textrm{vr}}^m) &= \alpha^2m^2 \hat{g}_a \hat{g}_b \hat{H}_{a b} + \frac{\alpha^3(m-1)m^2}{2} \hat{g}_a \hat{H}_{a b} \sum_{a'=1}^n (\hat{H}_{a' b} \hat{g}_{a'}) \\
    & +\frac{\alpha^3(m-1)m^2}{2} \hat{g}_b \hat{H}_{a b} \sum_{b'=1}^n (\hat{H}_{a b'} \hat{g}_{b'}) + \hat{R}^{\textrm{vr}}_2(\delta_{\textrm{vr}}^m) + \mathcal{R}^{\textrm{vr}}_2(\hat{H}),
    \end{split}
\end{equation}
{where $\alpha$ denotes the step size, and  $m$ denotes the total number of steps.
To enable fair comparisons, we use the same step size $\alpha$ and number of steps $m$ as multi-step attack to make the magnitude of $\delta_{\textrm{vr}}$ match the magnitude of $\delta_{\textrm{multi}}$.
$\mathcal{R}^{\textrm{vr}}_2(\hat{H})$ represents terms with elements in $\hat{H}$ of higher than the second order, and $\hat{R}^{\textrm{vr}}_2(\delta^m_{\textrm{vr}})$ represents terms with elements in $\delta^m_{\textrm{vr}}$ of higher than the second order.}

In this way, according to Equation~(\ref{eq:vr}) and Equation~(\ref{eq:multi}), the expectation of the difference between $I_{ab}(\delta_{\textrm{vr}}^m)$ and $I_{ab}(\delta^m_{\textrm{multi}})$ is given as follows.
\begin{dmath*}
    \mathbb{E}_x\mathbb{E}_{a, b}\left[I_{ab}(\delta_{\textrm{vr}}) - I_{ab}(\delta_{\textrm{multi}})\right]
    =\frac{\alpha^3(m-1)m^2}{2} \mathbb{E}_{a,b}\left\{\mathbb{E}_x\left[{\left[{\hat{g}}^2_a {\hat{H}}^2_{ab}-g_a^2H^2_{ab}\right] + \left[{\hat{g}}^2_b {\hat{H}}^2_{ab}-g_b^2H^2_{ab}\right]} \right] \right\} + \mathbb{E}_{a, b}\mathbb{E}_{x}[R^{\textrm{vr}}_{a b} ],
\end{dmath*}
where
\begin{dmath*}
R^{\textrm{vr}}_{a b} = \frac{\alpha^3(m-1)m^2}{2}\left[\underbrace{ \sum_{a'\in \{1, 2, \dots, n\}\setminus \{a\}} \left[( \hat{g}_a \hat{g}_{a'} \hat{H}_{a b} \hat{H}_{a' b} - g_a g_{a'} H_{a b} H_{a' b}) \right]}_{V_{a b}}+  \underbrace{\sum_{b'\in \{1, 2, \dots, n\}\setminus \{b\}}\left[ ( \hat{g}_b \hat{g}_{b'} \hat{H}_{a b} \hat{H}_{a b'} - g_b g_{b'} H_{a b} H_{a b'}) \right]}_{V_{b a}} \right]  \\ 
    + {\alpha^2m^2 \left[\hat{g}_a \hat{g}_b \hat{H}_{a b} - g_a g_b H_{a b} \right] }+ \hat{R}^{\textrm{vr}}_2(\delta_{\textrm{vr}}^m) - \hat{R}'_2(\delta_{\textrm{multi}}^m) + \mathcal{R}^{\textrm{vr}}_2(H) - R^{\textrm{multi}}_2(H)
\end{dmath*}
The expectation of $R^{\textrm{vr}}_{a b}$ is give as follows.
\begin{align*}
    &\mathbb{E}_x\mathbb{E}_{a, b}[R^{\textrm{vr}}_{a b}] \\
    &= \frac{\alpha^3(m-1)m^2}{2}\left\{ \mathbb{E}_{a, b} \left[ \mathbb{E}_x[V_{a b}] +\mathbb{E}_x [V_{b a} ] +\frac{2}{\alpha(m-1)}\mathbb{E}_x\left[\hat{g}_a \hat{g}_b \hat{H}_{a b} - g_a g_b H_{a b} \right]\right]\right\} \\
    &+ \mathbb{E}_x\mathbb{E}_{a, b}[\hat{R}^{\textrm{vr}}_2(\delta_{\textrm{vr}}^m) - \hat{R}'_2(\delta_{\textrm{multi}}^m) + R^{\textrm{vr}}_2(H) - R^{\textrm{multi}}_2(H)]\approx 0.
\end{align*}

According to Assumption 1, we have $R^{\textrm{vr}}_2(H)\approx 0$, and $R^{\textrm{multi}}_2(H)\approx 0$.
Note that the magnitude of $\delta^m_{\textrm{mi}}$ and the magnitude of $\delta_{\textrm{multi}}$ are small, then $\hat{R}'_2(\delta_{\textrm{vr}}^m)\approx 0$, and $R_2(\delta^m_{\textrm{multi}}) \approx 0$.
According to Lemma~\ref{lemma:2}, we have $\mathbb{E}_x\left[\hat{g}_a \hat{g}_b \hat{H}_{ab}-g_a g_b H_{ab}\right]=0$, $\mathbb{E}_{x}\left[( \hat{g}_a \hat{g}_{a'} \hat{H}_{a b} \hat{H}_{a' b} - g_a g_{a'} H_{a b} H_{a' b}) \right]= 0$.
Therefore, we get $\mathbb{E}_{x}\left[V_{a b} \right]= 0$.
In this way, $\mathbb{E}_x\mathbb{E}_{a, b}[R^{\textrm{vr}}_{a b}] \approx 0$.

Furthermore, according to Lemma~\ref{lemma:2}, we have $\mathbb{E}_x\left[{\hat{g}}^2_a {\hat{H}}^2_{ab}\right]-\mathbb{E}_x\left[{g}^2_a {H}^2_{ab}\right]\le 0$.

Therefore,
\begin{align*}
    &\mathbb{E}_x\mathbb{E}_{a, b}\left[I_{ab}(\delta_{\textrm{vr}}) - I_{ab}(\delta_{\textrm{multi}})\right]\\
    &= \frac{\alpha^3(m-1)m^2}{2}\mathbb{E}_{a,b}\left\{\mathbb{E}_x\left[{\hat{g}}^2_a {\hat{H}}^2_{ab}-g_a^2H^2_{ab}\right] + \mathbb{E}_{x}\left[{\hat{g}}^2_b {\hat{H}}^2_{ab}-g_b^2H^2_{ab}\right]\right\}  + \mathbb{E}_{x} \mathbb{E}_{a, b}[R^{\textrm{vr}}_{a b}]\\
    &\approx \frac{\alpha^3(m-1)m^2}{2}\mathbb{E}_{a,b}\left\{\mathbb{E}_x\left[{\hat{g}}^2_a {\hat{H}}^2_{ab}-g_a^2H^2_{ab}\right] + \mathbb{E}_{x}\left[{\hat{g}}^2_b {\hat{H}}^2_{ab}-g_b^2H^2_{ab}\right]\right\}  + 0\\
    &\le 0
\end{align*}
\end{proof}

\section{Proof of Proposition~\ref{pro:2}} \label{sec:proof_mi}
To simplify the problem setting, we do not consider some tricks in adversarial attacking, such as gradient normalization and the clip operation.
Note that the original MI Attack and the multi-step attack cannot be directly compared, since that magnitudes of the generated perturbations cannot be fairly controlled.
The value of interactions is sensitive to the magnitude of perturbations. Comparing perturbations with different magnitudes is not fair.
Thus, we slightly revise the MI Attack as 
\begin{equation} \label{eq:revised_mi}
    g_{\text{mi}}^t \defeq{{\mu} g_{\textrm{mi}}^{t-1} + {(1-\mu)} \nabla_x\ell(h(x+\delta^{t-1}_{\textrm{mi}}), y)},
\end{equation} 
where $t$ denotes the step and $\mu={(t-1)} / {t}$.
$\ell(h(x), y)$ is referred as the classification loss. To simplify the notation, we use $g(x)$ to denote $\nabla_x\ell(h(x), y)$, \emph{i.e.} $g(x)\defeq{\nabla_x\ell(h(x), y)}$.

In MI attack, {the final perturbation generated after $t$ steps is given as follows.}
\begin{equation*}
    \delta_{\textrm{mi}}^t \defeq{\alpha\sum_{t'=0}^{t-1}g^{t'}_{\textrm{mi}}},
\end{equation*}
where $\alpha$ represents the step size.

Furthermore, we define the update of perturbation with the MI attack at each step $t$ as follows.
\begin{equation} \label{eq:delta_mi}
    \Delta x^t_{\textrm{mi}} \defeq{\alpha\cdot g_{\textrm{mi}}^t}.
\end{equation}
In this way, the perturbation can be written as follows.
\begin{equation} \label{eq:mi_update}
    \delta_{\textrm{mi}}^t = {\Delta x^1_{\textrm{mi}} + \Delta x^2_{\textrm{mi}} + \dots + \Delta x^{t-1}_{\textrm{mi}}}.
\end{equation}

\begin{lemma} \label{lemma:mi}
The update of the perturbation with the MI attack at step $t$ defined in Equation~(\ref{eq:delta_mi}) can be written as $\Delta x^t_{\textrm{mi}} =\alpha \left[I + \alpha \frac{t-1}{2} H(x) + \mathcal{R}^t_1(H(x)) \right] g(x) + \Tilde{R}_1^t$, where 
$\mathcal{R}^t_1(H(x))$ denotes terms of elements in $H(x)$ higher than the first order, and
$\Tilde{R}_1^t$ denotes terms with elements in $\delta_{\textrm{mi}}^{t-1}$ of higher than the first order.
\end{lemma}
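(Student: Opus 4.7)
The plan is to prove the formula by induction on $t$, using the recursion for $g_{\textrm{mi}}^t$ in Equation~(\ref{eq:revised_mi}) together with a first-order Taylor expansion of $g$ around $x$. Throughout, ``higher-order in $H$'' refers to the remainder $\mathcal{R}_1^t(H)$, and ``higher-order in $\delta$'' refers to the remainder $\tilde{R}_1^t$.

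First I would handle the base case $t=1$. Since $\delta_{\textrm{mi}}^{0}=0$ and $g_{\textrm{mi}}^0=0$, Equation~(\ref{eq:revised_mi}) with $\mu=0$ gives $g_{\textrm{mi}}^1 = g(x)$, so $\Delta x_{\textrm{mi}}^1=\alpha g(x)$, which matches the claimed formula because the coefficient $\frac{t-1}{2}$ of $H(x)$ vanishes and all remainder terms are zero.

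For the inductive step, assume the formula holds for every $t'<t$. I would then use the recursion
\begin{equation*}
\Delta x_{\textrm{mi}}^t = \alpha g_{\textrm{mi}}^t = \frac{t-1}{t}\Delta x_{\textrm{mi}}^{t-1} + \frac{\alpha}{t}\, g(x+\delta_{\textrm{mi}}^{t-1}),
\end{equation*}
and expand $g(x+\delta_{\textrm{mi}}^{t-1}) = g(x) + H(x)\,\delta_{\textrm{mi}}^{t-1} + o(\delta_{\textrm{mi}}^{t-1})$ by Taylor's theorem. By Equation~(\ref{eq:mi_update}) and the inductive hypothesis, the leading order of $\delta_{\textrm{mi}}^{t-1}=\sum_{t'=1}^{t-1}\Delta x_{\textrm{mi}}^{t'}$ with respect to $H$ is $\alpha(t-1)g(x)$, because $\sum_{t'=1}^{t-1}1=t-1$, while the $H$-linear contribution equals $\alpha^2\sum_{t'=1}^{t-1}\tfrac{t'-1}{2}H(x)g(x)=\alpha^2\tfrac{(t-2)(t-1)}{4}H(x)g(x)$, which is already second order in $H$ and so gets absorbed into $\mathcal{R}_1^t(H)$ after multiplication by $H$. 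Therefore
\begin{equation*}
\tfrac{\alpha}{t}g(x+\delta_{\textrm{mi}}^{t-1}) = \tfrac{\alpha}{t}g(x) + \tfrac{\alpha^2(t-1)}{t}H(x)g(x) + \mathcal{O}(H^2)g(x) + \tilde{R},
\end{equation*}
with $\tilde{R}$ collecting higher-order-in-$\delta$ terms. Combining this with the inductive expression $\tfrac{t-1}{t}\Delta x_{\textrm{mi}}^{t-1}=\tfrac{(t-1)\alpha}{t}g(x)+\tfrac{(t-1)(t-2)\alpha^2}{2t}H(x)g(x)+\cdots$ produces the identity coefficient $\tfrac{t-1}{t}+\tfrac{1}{t}=1$ on $\alpha g(x)$, and on $\alpha^2 H(x)g(x)$ the coefficient
\begin{equation*}
\tfrac{(t-1)(t-2)}{2t} + \tfrac{t-1}{t} = \tfrac{(t-1)\bigl[(t-2)+2\bigr]}{2t} = \tfrac{t-1}{2},
\end{equation*}
which is exactly the claimed prefactor. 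All remaining contributions are either higher order in $H$ (absorbed into $\mathcal{R}_1^t(H)$) or higher order in $\delta_{\textrm{mi}}^{t-1}$ (absorbed into $\tilde{R}_1^t$).

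The main obstacle I anticipate is bookkeeping: one must verify that the $\tfrac{t-1}{t}$-weighted inductive remainder $\tfrac{t-1}{t}\mathcal{R}_1^{t-1}(H)g(x)$, the cross term $\tfrac{\alpha}{t}H(x)\cdot\bigl[\alpha^2\tfrac{(t-2)(t-1)}{4}H(x)g(x)\bigr]$, and the Taylor residual of $g$ all fit into a single $\mathcal{R}_1^t(H)g(x)$ contribution, and similarly that the nonlinear $\delta_{\textrm{mi}}^{t-1}$-remainder from the Taylor expansion combines with $\tfrac{t-1}{t}\tilde{R}_1^{t-1}$ to give a legitimate $\tilde{R}_1^t$. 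Beyond this careful grouping of remainders, the key computation is the arithmetic identity $\tfrac{(t-1)(t-2)}{2t}+\tfrac{t-1}{t}=\tfrac{t-1}{2}$, which is what makes the induction close cleanly.
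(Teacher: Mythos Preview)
Your proposal is correct and follows essentially the same approach as the paper's proof: induction on $t$, the recursion $\Delta x_{\textrm{mi}}^t=\tfrac{t-1}{t}\Delta x_{\textrm{mi}}^{t-1}+\tfrac{\alpha}{t}g(x+\delta_{\textrm{mi}}^{t-1})$, a first-order Taylor expansion of $g$, and the same arithmetic identity $\tfrac{(t-1)(t-2)}{2t}+\tfrac{t-1}{t}=\tfrac{t-1}{2}$ for the coefficient of $\alpha^2 H(x)g(x)$. The paper leaves that identity implicit and tracks the remainders $\mathcal{R}_1^t(H)$ and $\tilde{R}_1^t$ in exactly the way you anticipate in your ``bookkeeping'' paragraph, so there is no substantive difference.
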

\begin{proof}
If $t=1$, $\Delta x^1_{\textrm{mi}} = \alpha\cdot g(x)$.

Let $\forall t' < t$, $\Delta x^{t'}_{\textrm{mi}}= \alpha \left[I + \alpha \frac{t'-1}{2} H(x) + R^{t'}_1(H(x)) \right] g(x) + \Tilde{R}_1^{t'}$.

According to Equation~(\ref{eq:revised_mi}) and Equation~(\ref{eq:delta_mi}), we have
\begin{equation*}
    \Delta x^t_{\textrm{mi}} = \alpha \cdot \left[\frac{t-1}{t} g_{\textrm{mi}}^{t-1} + \frac{1}{t} g(x+\delta_{\textrm{mi}}^{t-1}) \right].
\end{equation*}
{Applying the Taylor series to the term of $g(x+\delta_{\textrm{mi}}^{t-1})$}, we get
\begin{dmath} \label{eq:lemma_mi_step_2}
\Delta x^t_{\textrm{mi}} = \alpha \cdot \left[\frac{t-1}{t} g_{\textrm{mi}}^{t-1} + \frac{1}{t}\left[ g(x)+H(x)\delta_{\textrm{mi}}^{t-1}) \right]+ {r_1^{t-1}} \right],
\end{dmath}
where {$r_1^{t-1}$} denotes terms of elements in $\delta_{\textrm{mi}}^{t-1}$ of higher than the first order.

According to Equation~(\ref{eq:mi_update}) and Equation~(\ref{eq:lemma_mi_step_2}), we get
\begin{dmath*}
\Delta x^t_{\textrm{mi}} = \alpha \cdot \left\{\frac{t-1}{t} g_{\textrm{mi}}^{t-1} + \frac{1}{t}\left[ g(x)+H(x) \left[\Delta x^{1}_{\textrm{mi}} + \Delta x^{2}_{\textrm{mi}} + \dots + \Delta x^{t-1}_{\textrm{mi}}\right] \right]+ r_1^{t-1} \right\}.
\end{dmath*}

Because $\forall t' < t$ $\Delta x^{t'}_{\textrm{mi}}= \alpha \left[I + \alpha \frac{t'-1}{2} H(x) + \mathcal{R}^{t'}_1(H(x)) \right] g(x) + \Tilde{R}_1^{t'}$,
we have $\Delta x^{t-1}_{\textrm{mi}} = \alpha\cdot \left[I + \alpha \frac{t-2}{2} H(x) + \mathcal{R}^{t-1}_1(H(x)) \right] g(x) + \Tilde{R}_2^{t-1} $.
According to Equation~(\ref{eq:delta_mi}), we get $g_{\textrm{mi}}^{t-1} = \left[I + \alpha \frac{t-2}{2} H(x) + \mathcal{R}^{t-1}_1(H(x)) \right] g(x) + \Tilde{R}_2^{t-1}$.

In this way, we get
\begin{dmath*}
        \Delta x^t_{\textrm{mi}} 
    = \alpha \cdot \left\{\frac{t-1}{t} \left[I + \alpha \frac{t-2}{2} H(x) + R^{t-1}_1(H(x)) \right] g(x) + \frac{1}{t} \left[I + H(x)\left[ \alpha (t-1) + \alpha\frac{(t-2)(t-1)}{4}H(x) + \sum_{t'=1}^{t-1} \mathcal{R}_1^{t'}(H(x))  \right]g(x) + \sum_{t'=1}^{t-1}\Tilde{R}_1^{t'} \right] + {r_1^{t-1} }\right\} \\
    = \alpha \cdot \left[ \underbrace{\frac{t-1}{t}\mathcal{R}^{t-1}_1(H(x)) +  \frac{(t-2)(t-1)}{4t}H^2(x) + \frac{1}{t} H(x) \sum_{t'=1}^{t-1} \mathcal{R}_1^{t'}(H(x))}_{\mathcal{R}_1^t(H(x))}\\
    + I + \alpha \frac{t-1}{2} H(x) + \right]g(x) + \underbrace{\frac{1}{t} \sum_{t'=1}^{t-1}\Tilde{R}^{t'}_1 + {r_1^{t-1}}}_{ \Tilde{R}_1^{t}} \\ 
    = \alpha \left[I + \alpha \frac{t-1}{2} H(x) + \mathcal{R}^t_1(H(x)) \right] g(x) + \Tilde{R}_1^t.
\end{dmath*}
{where 
$\mathcal{R}^t_1(H(x))$ denotes terms of elements in $H(x)$ higher than the first order, and
$\Tilde{R}_1^t$ denotes terms with elements in $\delta_{\textrm{mi}}^{t-1}$ of higher than the first order.}
In this way, we have proved that $\forall t \ge 1, \Delta x^t_{\textrm{mi}} =\alpha \left[I + \alpha \frac{t-1}{2} H(x) + \mathcal{R}^t_1(H(x)) \right] g(x) + \Tilde{R}_1^t$.
\end{proof}

\begin{propositionappendix} 
% Given an input sample $x\in\mathbb{R}^n$ and a DNN $h(\cdot)$ trained for classification, 
The adversarial perturbation generated by multi-step attack is denoted by $\delta^m_{\text{multi}}=\alpha\sum_{t=0}^{m-1}\nabla_x\ell(h(x+\delta^{t}_{\textrm{multi}}), y)$.
The adversarial perturbation generated by multi-step attack incorporating the momentum is computed as $\delta^m_{\text{mi}} = \alpha\sum_{t=0}^{m-1} g^t_{\textrm{mi}}$
Perturbation units of $\delta^m_{\text{mi}}$ exhibit smaller interactions than $\delta^m_{\text{multi}}$, \emph{i.e.} $\mathbb{E}_{ij}[I_{ij}(\delta^m_{\text{mi}})] \le \mathbb{E}_{ij}[I_{ij}(\delta^m_{\text{multi}})]$. 
\end{propositionappendix}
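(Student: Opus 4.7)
The plan is to mimic the template of the proof of Proposition 1, exploiting that Lemma 4 gives essentially the same Taylor-style expansion as Lemma 2 but with a different scalar coefficient in front of the Hessian term. First, using Lemma 4 and $\delta_{\text{mi}}^m = \sum_{t} \Delta x_{\text{mi}}^t$, I would sum the per-step updates to obtain
\begin{equation*}
\delta_{\text{mi}}^m = \alpha\Bigl[m I + \tfrac{\alpha m(m-1)}{4} H(x)\Bigr] g(x) + \text{terms of order }{\ge 2}\text{ in }H\text{ or in }\delta_{\text{mi}},
\end{equation*}
because $\sum_{t=1}^{m}(t-1)/2 = m(m-1)/4$. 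The crucial structural fact is that this coefficient $\tfrac{\alpha m(m-1)}{4}$ is exactly \emph{half} of the coefficient $\tfrac{\alpha m(m-1)}{2}$ that appears for $\delta_{\text{multi}}^m$ in Equation~(21) of the proof of Proposition~1; this halving is the only place the momentum enters at leading order, and it drives the whole argument.

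Next, I would plug this expansion into Lemma 1 to write $I_{ab}(\delta_{\text{mi}}^m) = \delta_{\text{mi},a}^m\, H_{ab}\, \delta_{\text{mi},b}^m + \hat{R}_2(\delta_{\text{mi}}^m)$ and collect terms by order in $H$. Mirroring the expansion in Equation~(22), this yields
\begin{equation*}
I_{ab}(\delta_{\text{mi}}^m) = \alpha^2 m^2 g_a g_b H_{ab} + \tfrac{\alpha^3 m^2(m-1)}{4}\Bigl[g_a H_{ab}{\textstyle\sum_{a'}} H_{a'b} g_{a'} + g_b H_{ab}{\textstyle\sum_{b'}} H_{ab'} g_{b'}\Bigr] + R_{ab}^{\text{mi}},
\end{equation*}
where $R_{ab}^{\text{mi}}$ collects everything of higher order in $H$ or in $\delta_{\text{mi}}^m$. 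Subtracting the analogous expression for multi-step from Equation~(22), the leading $\alpha^2 m^2 g_a g_b H_{ab}$ cancels and the second-order-in-$H$ block picks up a net coefficient of $\tfrac{\alpha^3 m^2(m-1)}{4} - \tfrac{\alpha^3 m^2(m-1)}{2} = -\tfrac{\alpha^3 m^2(m-1)}{4}$, giving
\begin{equation*}
\mathbb{E}_{a,b}\bigl[I_{ab}(\delta_{\text{mi}}^m) - I_{ab}(\delta_{\text{multi}}^m)\bigr] = -\tfrac{\alpha^3 m^2(m-1)}{4}\, \mathbb{E}_{a,b}[U_{ab} + U_{ba}] + \mathbb{E}_{a,b}\bigl[R_{ab}^{\text{mi}} - R_{ab}^{\text{multi}}\bigr],
\end{equation*}
with $U_{ab} = g_a H_{ab}\sum_{a'} H_{a'b} g_{a'}$ the exact same quantity that appeared in the proof of Proposition~1. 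The conclusion then follows immediately from the inequality $\mathbb{E}_{a,b}[U_{ab}] \ge 0$ already established there (via Assumption 1 and the empirically justified $|g_b H_{bb} / \sum_a g_a H_{ab}| \ll 1$), together with the symmetry $\mathbb{E}_{a,b}[U_{ba}] = \mathbb{E}_{a,b}[U_{ab}]$.

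The main obstacle is bookkeeping of the residuals rather than any new inequality. Lemma 4 already absorbs higher-order-in-$H$ contributions into $\mathcal{R}_1^t(H(x))$ and higher-order-in-$\delta$ contributions into $\tilde{R}_1^t$; when these feed through the quadratic form $\delta_a H_{ab}\delta_b$ of Lemma 1, one has to verify that the resulting $R_{ab}^{\text{mi}}$ still has expectation negligible compared to the $O(\alpha^3)$ leading term, under the same small-$H$ Assumption 1 and the smallness of $\|\delta_{\text{mi}}^m\|$ and $\|\delta_{\text{multi}}^m\|$ used in Proposition~1. Since the revision $\mu = (t-1)/t$ keeps the magnitudes of $\delta_{\text{mi}}^m$ and $\delta_{\text{multi}}^m$ of the same order (this is precisely why the MI step was revised in the statement), the same truncation arguments used for Proposition~1 apply verbatim, and no further estimates are needed.
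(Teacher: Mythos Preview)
Your proposal is correct and follows essentially the same approach as the paper's own proof: summing Lemma~4 to get the $\tfrac{\alpha m(m-1)}{4}$ coefficient, plugging into Lemma~1, subtracting the multi-step expansion so that the first-order term cancels and the second-order term picks up the $-\tfrac{\alpha^3 m^2(m-1)}{4}$ factor, and then invoking $\mathbb{E}_{a,b}[U_{ab}]\ge 0$ and Assumption~1 exactly as in Proposition~1. The paper's treatment of the residuals is identical in spirit to what you outline, so nothing is missing.
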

\begin{proof}
% To simplify the problem setting, we do not consider some tricks in adversarial attacking, such as gradient normalization and the clip operation.

According to Lemma~\ref{lemma:mi}, the update of the perturbation with the MI attack at the step $t$ is given as follows. 
\begin{equation} \label{eq:lemma3}
    % \Delta x_k = A_k\left[I + H(x)A_{k-1} \right] \left[I + H(x)A_{k-2} \right]\dots \left[I+H(x)A_{1}\right] g(x)\\
    \Delta x^t_{\textrm{mi}} =\alpha \left[I + \alpha \frac{t-1}{2} H(x) + \mathcal{R}^t_1(H(x)) \right] g(x) + \Tilde{R}_1^t.
\end{equation}where 
$\mathcal{R}^t_1(H(x))$ denotes terms of elements in $H(x)$ of higher than the first order. 
$\Tilde{R}_1^t$ denotes terms with elements in $\delta_{\textrm{mi}}^{t-1}$ of higher than the first order.

To simplify the notation without causing ambiguity, we write $g(x)$ and $H(x)$ as $g$ and $H$, respectively.
In this way, according to Equation~(\ref{eq:mi_update}) and Equation~(\ref{eq:lemma3}), $\delta_{\textrm{mi}}^m$ can be written as follows.
\begin{align} \label{eq:delta_mi^m}
    \delta_{\textrm{mi}}^m = \alpha \left[m I + \frac{\alpha m(m-1)}{4} H + \sum_{t=1}^m \mathcal{R}^t_1(H) \right]g + \sum_{t=1}^m \Tilde{R}_1^t.
\end{align}
where $m$ represents the total number of steps.
According to Lemma~\ref{lemma:1}, the Shapley interaction between perturbation units $a, b$ in $\delta^m_{\textrm{mi}}$ is given as follows.

\begin{equation} \label{eq:mi_interaction}
    I_{a b}(\delta_{\textrm{mi}}^m) =  \delta_{\textrm{mi},a}^m H_{ab} \delta_{\textrm{mi}, b}^m + \Tilde{R}_2(\delta_{\textrm{mi}}^m),
\end{equation}
where $\Tilde{R}_2(\delta_{\textrm{mi}}^m)$ denotes terms with elements in $\delta_{\textrm{mi}}$ of higher than the second order.

According to Equation~(\ref{eq:delta_mi^m}) and Equation~(\ref{eq:mi_interaction}), we get
\begin{equation} \label{eq:mi}
    \begin{split}
        I_{a b}(\delta_{\textrm{mi}}^m)
    &=H_{ab} \big[\alpha m g_a + \frac{\alpha^2 m(m-1)}{4}\sum_{b'=1}^n (H_{a b'} g_{b'}) + \dots + \sum_{t=1}^m {\underbrace{o(\delta^t_{\textrm{mi}, a})}_{
    \begin{array}{cc}
         &\textrm{terms of $\delta^t_{\text{mi}, a}$}  \\
         & \textrm{of higher than the first order,} \\ 
         & \textrm{which corresponds to the term of } \\
         & \textrm{$\Tilde{R}_1^t$ in Equation~(\ref{eq:delta_mi^m})}
    \end{array}
    }} \big]  \big[\\
    &\alpha m g_b + \frac{\alpha^2 m(m-1)}{4}\sum_{a'=1}^n (H_{a' b} g_{a'}) + \dots + \sum_{t=1}^m {\underbrace{o(\delta^t_{\textrm{mi}, b})}_{
    \begin{array}{cc}
         &\textrm{terms of $\delta^t_{\text{mi}, b}$}  \\
         & \textrm{of higher than the first order,} \\ 
         & \textrm{which corresponds to the term of } \\
         & \textrm{$\Tilde{R}_1^t$ in Equation~(\ref{eq:delta_mi^m})}
    \end{array}
    }} \big] + \Tilde{R}_2(\delta_{\textrm{mi}}^m)\\
    &=\underbrace{{\alpha^2m^2} g_a g_b H_{a b} }_{\text{first-order terms \emph{w.r.t.} elements in } H}   \\
    &+ \underbrace{\left[ \frac{\alpha^3(m-1)m^2}{4}g_b\sum_{b'=1}^n (H_{a b'} g_{b'}) + \frac{\alpha^3(m-1)m^2}{4 }g_a\sum_{a'=1}^n (H_{a' b} g_{a'})\right] H_{a b}}_{\text{second-order terms \emph{w.r.t.} elements in } H}  \\
    &+ \underbrace{\left[\frac{\alpha^4 (m-1)^2 m^2}{16}\sum_{b'=1}^n (H_{a b'} g_{b'})\sum_{a'=1}^n (H_{a' b} g_{a'})H_{a b} + \dots\right]}_{\mathcal{R}^{\textrm{mi}}_2(H)} \\
    &+\underbrace{[\sum_{t=1}^m o(\delta^t_{\textrm{mi}, a})] H_{ab} \delta_{\textrm{mi}, b}^m + [o(\delta^t_{\textrm{mi}, b})] H_{ab} \delta_{\textrm{mi}, a}^m + \Tilde{R}_2(\delta_{\textrm{mi}}^m)}_{\Tilde{R}'_2(\delta^m_{\textrm{mi}})}  \\
    &= \alpha^2m^2 g_a g_b H_{a b} + \frac{\alpha^3(m-1)m^2}{4} g_a H_{a b} \sum_{a'=1}^n (H_{a' b} g_{a'}) \\
    & +\frac{\alpha^3(m-1)m^2}{4} g_b H_{a b} \sum_{b'=1}^n (H_{a b'} g_{b'}) + \Tilde{R}'_2(\delta_{\textrm{mi}}^m) + \mathcal{R}^{\textrm{mi}}_2(H),
    \end{split}
\end{equation}
{where $\mathcal{R}^{\textrm{mi}}_2(H)$ denotes terms of elements in $H$ higher than the second order, and $\Tilde{R}'_2(\delta_{\textrm{mi}}^m)$ denotes terms of elements in $\delta^m_{\textrm{mi}}$ higher than the second order}

According to Equation~(\ref{eq:multi}) and Equation~(\ref{eq:mi}), the expectation of the difference between $I_{ab}(\delta_{\textrm{mi}}^m)$ and $I_{ab}(\delta^m_{\textrm{multi}})$ is given as follows.
\begin{align*}
    &\mathbb{E}_{a, b}\left[I_{ab}(\delta_{\textrm{mi}}^m) - I_{ab}(\delta^m_{\textrm{multi}})\right] \\
    &= -\frac{\alpha^3(m-1)m^2}{4}  \mathbb{E}_{a,b}\Big[ \underbrace{g_a H_{a b} \sum_{a'=1}^n (H_{a' b} g_{a'})}_{{U_{a b}}} + \underbrace{g_b H_{a b} \sum_{b'=1}^n (H_{a b'} g_{b'})}_{{U_{b a}}}\Big] + \mathbb{E}_{a,b}\left[ R^{\textrm{mi}}_{a b}\right] ,
\end{align*}
where
\begin{dmath*}
    R^{\textrm{mi}}_{a b} = \Tilde{R}'_2(\delta_{\textrm{mi}}^m) - \Tilde{R}'_2(\delta_{\textrm{multi}}^m) + \mathcal{R}^{\textrm{mi}}_2(H) - \mathcal{R}^{\textrm{multi}}_2(H).
\end{dmath*}
According to Assumption 1, we have $\mathcal{R}^{\textrm{mi}}_2(H)\approx 0$, and $\mathcal{R}^{\textrm{multi}}_2(H)\approx 0$.
Note that the magnitude of $\delta^m_{\textrm{mi}}$ and the magnitude of $\delta^m_{\textrm{multi}}$ are small, then $\Tilde{R}'_2(\delta_{\textrm{mi}}^m)\approx 0$, and $R_2(\delta^m_{\textrm{multi}}) \approx 0$.
Therefore, $\mathbb{E}_{a, b}\left[R^{\textrm{mi}}_{a b} \right] = \mathbb{E}_{a, b}[\hat{R}'_2(\delta_{\textrm{mi}}^m) - \Tilde{R}'_2(\delta_{\textrm{multi}}^m) + \mathcal{R}^{\textrm{mi}}_2(H) - \mathcal{R}^{\textrm{multi}}_2(H)]\approx 0$.
Moreover, similar to Equation~(\ref{eq:Uge0}) in the proof of Proposition~\ref{prop:1_append}, we have $\mathbb{E}_{a, b}\left[U_{a b} \right] = \mathbb{E}_{a, b}\left[U_{b a} \right]\ge 0$.

Therefore,
\begin{align*}
    &\mathbb{E}_{a, b}\left[I_{ab}(\delta_{\textrm{mi}}) - I_{ab}(\delta_{\textrm{multi}})\right] \\
    &= -\frac{\alpha^3(m-1)m^2}{4} \mathbb{E}_{a,b}\left[U_{a b} + U_{b a}\right]  + \mathbb{E}_{a,b}\left[ R^{\textrm{mi}}_{a b}\right]\\
    &\approx -\frac{\alpha^3(m-1)m^2}{2} \mathbb{E}_{a,b}\left[U_{a b}\right] + 0 \le 0.
\end{align*}
\end{proof}
Note that Proposition 3 just shows the revised MI Attack usually decreases the interaction between perturbation units. 
The proof towards all types of MI Attacks is still a challenge.

\section{Implementation of the Interaction-Reduced Attack (IR Attack)} \label{sec:implementation}

\subsection{Grid-level interactions for image data} \label{sec:grid}
Although the computation of $\mathbb{E}_{i,j}[I_{ij}(\delta)]$ can be simplified using Equation~(\ref{eq:expectation}), the computational cost of $\mathbb{E}_{i,j}[I_{ij}(\delta)]$ is still high.
% Even with the simplification of Equation~(\ref{eq:expectation}), for high dimensional data like images, the cost of computing the expectation of the interaction is still high.
Therefore, as \Figref{fig:grid} shows,  using the local property of images \citep{Chen2018L}, we can divide the entire image into $L\times L$ grids, and compute interactions at the grid level, instead of the pixel level.
Let $\Lambda = \{\Lambda_{11}, \Lambda_{12}, \dots, \Lambda_{LL}\}$ denote the set of grids. 
We use $(p, q)$ to denote the coordinate of a grid.
In this way, the expectation of interactions between perturbation grids is given as follows.
\begin{equation} \label{eq:grid_interaction}
    \mathbb{E}_{(p, q), (p', q')}\left[ I_{(p, q), (p' q')}(\delta)\right] = \frac{1}{L^2-1}\mathbb{E}_{(p, q)} \left[v(\Lambda)-v(\Lambda\setminus \{\Lambda_{p q}\}) - v(\{\Lambda_{p q}\}) + v(\emptyset)\right],
\end{equation}
\begin{figure}[h!]
    \centering
    \includegraphics[scale=0.6]{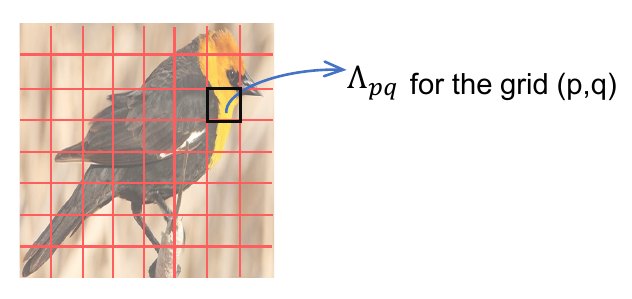}
    \caption{For the input image, we can divide the image into grids, and compute interactions at the grid level.}
    \label{fig:grid}
\end{figure}

\subsection{Scalability of the interaction loss}
In this section, we discuss about two kinds of scalability of the interaction loss.

\textbullet{ \textbf{Is the computational cost of the interaction loss affordable when the number of players is large?}}
To this end, we have proved in Equation~(\ref{eq:expect}) that the computational complexity of the expectation of the interaction is linear, which is scalable. In fact, we do not directly compute interaction using Equation~(\ref{eq:interaction}). Instead, we compute the expectation of interactions with Equation~(\ref{eq:expect}).
The computational cost of the IR Attack can be further reduced by calculating the grid-level interactions of images.

We further conducted experiments to measure the  time cost of generating perturbations using the IR Attack.
We conducted the IR attack for 100 steps on the ImageNet dataset.
The time cost was measured using PyTorch 1.6~\citep{pytorch} on Ubuntu 18.04, with the Intel(R) Core(TM) i7-9800X CPU @ 3.80GHz and a Titan RTX GPU. Table~\ref{tab:time} shows the average computational cost of generating adversarial perturbations on an input image with size $224\times224$ by the IR Attack for 100 steps.
{It shows that the IR Attack is computationally applicable to high-dimensional data and deep neural networks.
}
\begin{table}[h!]
    \centering
    
    \caption{Average computational cost of generating adversarial perturbations over an input image by the IR Attack for 100 steps on different source DNNs.}
    \vspace{1pt}
    \begin{tabular}{c|r|r|r|r} \hline
         & RN-34 & RN-152 & DN-121 & DN-201 \\ \hline
        % PGD & 1.718  & 4.046 & 4.069 & 5.701 \\ 
        % MI & 1.670 & 4.119 & 4.061 &  5.767 \\ 
        % SGM & 2.135  & 4.371 & 4.129 & 6.421 \\ 
        % VR & 4.137 & 13.546 & 7.378 &  11.790 \\
        Time (seconds) & 12.882 & 48.774 & 27.519 & 44.812 \\ \hline
    \end{tabular}
    \label{tab:time}
\end{table}

{
\textbullet{ \textbf{Is the computation cost of the interaction loss affordable when we consider the continuous space of adversarial perturbations?}}
It has been widely discussed~\citep{ancona2019explaining,sundararajan2019many} that when applying the Shapley value, the feature space is regarded as binary. It is because as \citep{sundararajan2019many} shows that although there exist the Shapley-value-like attribution in a continuous space, only the Shapley value in the binary space is the unique attribution that satisfies \textit{the linearity axiom, the dummy axiom, the symmetry axiom, and the efficiency axiom} that only in the binary space. Thus, when we compute the interaction, the perturbation can be regarded in the binary space, i.e., whether the perturbation unit is added to the input or not, which enables scalability.
}

\section{Evaluation of the transferability via leave-one-out validation} \label{sec:loo}

As~\Figref{fig:loo} shows, the highest transferability of the MI Attack is achieved in an intermediate step, rather than in the last step.
This phenomenon presents a challenge for fair comparisons of the transferability between different attacking methods.

To this end, in order to enable fair comparisons of transferability between different methods, we estimate the adversarial perturbations with the highest transferability for each input image via the leave-one-out (LOO) validation as follows.
Given a set of clean examples $\{(x_i, y_i)\}_{i=1}^N$, where $y_i\in\{1, 2, \dots, C\}$, we use ${x}^t_i$ to denote the adversarial example at step $t$ \emph{w.r.t.} the clean example $x_i$, where $t\in\{1, 2, \dots, T\}$, and $T$ is the number of total step.
Given a target DNN $h(\cdot)$ and an input example $x$, where $h(\cdot)$ denotes the output before the softmax layer, we use $\mathcal{C}(x)=\arg\max_{k} h_k(x),k\in\{1,\dots,C\}$ to denote the prediction of the example $x$.
\begin{equation*}
\hat{x}_i\defeq{x_i^{t^*_i}},  \; s.t. \quad t^*_i = \arg\max_t \mathbb{E}_{i'\in\{1,2,\dots, N\}\setminus \{i\}}\left[{\mathbbm{1}}[\mathcal{C}({x}^t_{i'})\ne y_{i'}] \right],
\end{equation*}
where $\mathbbm{1}[\cdot]$ is the indicator function.
Then the average transferability is given as follows.
\begin{equation*}
    \textit{Transferability} = \mathbb{E}_i \left[ \mathbbm{1}[\mathcal{C}(\hat{x}_i)\ne y_{i}]\right].
\end{equation*}

\begin{figure}[h!]
    \centering
    \includegraphics[width=0.4\linewidth]{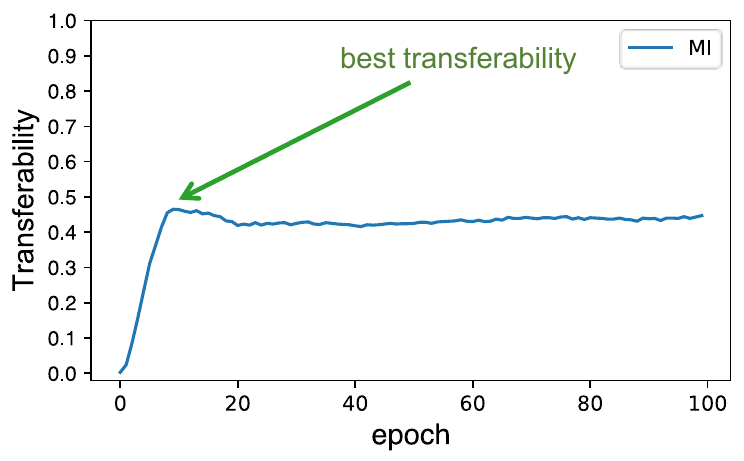}
    % \vspace{-5pt}
    \caption{The curve of transferability in different steps.}
    \label{fig:loo}
    % \vspace{-10pt}
\end{figure}

\section{Additional Related Work} \label{sec:add_related}
Some studies paid attention to intermediate features to improve transferability.
Activation Attack~\citep{inkawhich2019feature} forced the intermediate features of the input image to be similar with the intermediate features of a target image, in order to generate highly transferable targeted example.
Distribution Attack~\citep{Inkawhich2020Transferable} explicitly modeled the feature distribution of each class, and improve the targeted transferability by driving the feature of perturbed input image into the distribution of a specific target class.
Intermediate Level Attack~\citep{huang2019enhancing} improved the transferability of an adversarial example by maximizing the feature perturbation of  a pre-specified layer.
In comparison, we explain and improve the transferability based on game theory.
Moreover, we discover the negative correlation between the transferability and interactions.

\section{Additional Experiments on Interaction-Reduced Loss}

\subsection{Interaction Reduction on Other Attacks} \label{sec:ir+}
To further demonstrate the effectiveness of the interaction loss, we have applied the interaction loss on other attacks besides the PGD Attack, including the MI Attack, the SGM Attack, and the VR Attack.
More specifically, we added the interaction loss on the MI Attack (namely the MI+IR Attack), the SGM Attack (namely the SGM+IR Attack), and the VR Attack (namely the VR+IR Attack), respectively.

For the MI Attack and the SGM Attack, we directly applied Equation~(\ref{eq:attack_appendix}) to these attacks, because these attacks were compatible with the interaction loss. Besides, for the VR Attack, its objective function is given as follows.
\begin{equation}
\underset{\delta}{\text{maximize}}\quad \mathbb{E}_{\xi \sim \mathcal{N}\left(0, \sigma^{2}\right)}\left[\ell(h(x+\delta+\xi), y)\right] \quad \text{s.t.} \quad \|\delta\|_p\le \epsilon,\; x+\delta\in[0, 1]^n,
\end{equation}
Therefore, the VR+IR Attack was implemented via sampling as follows.
\begin{equation}
\begin{split}
    \underset{\delta}{\text{maximize}}\quad &\frac{1}{K}\sum_{k=1}^K  \left[\ell(h(x+\delta+\xi_k), y) -\mathbb{E}_{i j}\left[ I_{i j}(\delta)\right]\right],\quad \xi_k \sim \mathcal{N}\left(0, \sigma^{2}\right) \\ &\text{s.t.}  \quad \|\delta\|_p\le \epsilon,\; x+\delta\in[0, 1]^n,
\end{split}
\end{equation}
where the interaction loss was computed by considering the input image as $x+\xi_k$, rather than $x$ in Equation~(\ref{eq:expectation}).
The VR Attack reported in Table~\ref{table:ir+} followed the original paper~\citep{wu2018understanding} to set $K=20$.
However, a crucial issue for applying the interaction loss to the VR attack was its extremely high computational cost. Therefore, for the implementation of the VR+IR Attack, we set $K=5$ and reduce the number of steps from 100 to 50. Just like experiments in Table~\ref{table:ir}, we also used the LOO strategy for evaluation.

Table~\ref{table:mi+ir}, Table~\ref{table:sgm+ir}, and Table~\ref{table:vr+ir} compare the success rates of attacks with and without the interaction loss. The results demonstrated that the performance of the MI Attack,  the SGM Attack, and the VR Attack can be further enhanced by directly adding the interaction loss to reduce interactions inside perturbations.
\vspace{-10pt}
\begin{table}[!h]
\renewcommand{\arraystretch}{1.1}
\renewcommand\tabcolsep{3.0pt}
\small
\centering
\caption{
The success rates of $L_\infty$ black-box attacks crafted by MI and MI+IR on four source models (RN-34/152, DN-121/201) against seven target models. The interaction loss can boost the transferability of MI.
}
\resizebox{0.9\linewidth}{!}{
\begin{tabular}{c|c|ccccccc}
\hline
  Source& Method & VGG-16 & RN152 & DN-201 & SE-154 & IncV3 & IncV4 & IncResV2  \\ \hline
  \multirow{2}{*}{{RN-34}} 
  & MI & 80.1$\pm$0.5 & 73.0$\pm$2.3 & 77.7$\pm$0.5 & 48.9$\pm$0.8 & 46.2$\pm$1.2 & 39.9$\pm$0.5 & 34.8$\pm$2.5 \\ 
  & MI+IR & \textbf{90.0$\pm$0.5} & \textbf{85.7$\pm$0.3} & \textbf{88.5$\pm$0.6} & \textbf{67.0$\pm$0.1} & \textbf{66.9$\pm$1.8} & \textbf{60.2$\pm$0.7} & \textbf{53.9$\pm$2.3} \\ \hline
  \multirow{2}{*}{{RN-152}}
  & MI & 70.3$\pm$0.6 & -- & 74.8$\pm$1.4 & 51.7$\pm$0.8 & 47.1$\pm$0.9 & 40.5$\pm$1.6 & 36.8$\pm$2.7 \\ 
  & MI+IR & \textbf{78.9$\pm$1.4} & -- & \textbf{82.2$\pm$2.0} & \textbf{68.3$\pm$0.3} & \textbf{63.6$\pm$1.2} & \textbf{59.0$\pm$0.4} & \textbf{56.3$\pm$1.0} \\\hline
  \multirow{2}{*}{{DN-121}}
  & MI & 83.0$\pm$4.9 & 72.0$\pm$0.7 & 91.5$\pm$0.2 & 58.4$\pm$2.6 & 54.6$\pm$1.6 & 49.2$\pm$2.4 & 43.9$\pm$1.5 \\ 
  & MI+IR & \textbf{89.0$\pm$0.8} & \textbf{83.2$\pm$1.5} & \textbf{93.4$\pm$0.6} & \textbf{74.2$\pm$0.7} & \textbf{69.6$\pm$0.9} & \textbf{64.7$\pm$0.5} & \textbf{58.2$\pm$2.3} \\\hline
  \multirow{2}{*}{{DN-201}} 
  & MI & 77.3$\pm$0.8 & 74.8$\pm$1.4 & -- & 64.6$\pm$1.0 & 56.5$\pm$2.5 & 51.1$\pm$2.1 & 47.8$\pm$1.9 \\ 
  & MI+IR & \textbf{87.3$\pm$0.3} & \textbf{81.6$\pm$2.0} & -- & \textbf{75.4$\pm$0.6} & \textbf{66.6$\pm$3.3} & \textbf{60.0$\pm$1.0} & \textbf{62.1$\pm$0.7} \\ \hline
\end{tabular}}
\vspace{-10pt}
\label{table:mi+ir}
\end{table}

\begin{table}[!h]
\renewcommand{\arraystretch}{1.1}
\renewcommand\tabcolsep{3.0pt}
\small
\centering
\vspace{-10pt}
\caption{
The success rates of $L_\infty$ black-box attacks crafted by SGM and SGM+IR on four source models (RN-34/152, DN-121/201) against seven target models. The interaction loss can boost the transferability of SGM.
}
\resizebox{0.9\linewidth}{!}{
\begin{tabular}{c|c|ccccccc}
\hline
  Source& Method & VGG-16 & RN152 & DN-201 & SE-154 & IncV3 & IncV4 & IncResV2  \\ \hline
  \multirow{2}{*}{{RN-34}}
  & SGM & 91.8$\pm$0.6 & 89.0$\pm$0.9 & 90.0$\pm$0.4 & 68.0$\pm$1.4 & 63.9$\pm$0.3 & 58.2$\pm$1.1 & 54.6$\pm$1.2 \\ 
  & SGM+IR & \textbf{94.7$\pm$0.6} & \textbf{91.7$\pm$0.6} & \textbf{93.4$\pm$0.8} & \textbf{72.7$\pm$0.4} & \textbf{68.9$\pm$0.9} & \textbf{64.1$\pm$1.3} & \textbf{61.3$\pm$1.0} \\ \hline
  \multirow{2}{*}{{RN-152}}
  & SGM & 88.2$\pm$0.5 & -- & 90.2$\pm$0.3 & 72.7$\pm$1.4 & 63.2$\pm$0.7 & 59.1$\pm$1.5 & 58.1$\pm$1.2 \\
  & SGM+IR & \textbf{92.0$\pm$1.0} & -- & \textbf{92.5$\pm$0.4} & \textbf{79.3$\pm$0.1} & \textbf{69.6$\pm$0.8} & \textbf{66.2$\pm$1.0} & \textbf{63.6$\pm$0.9} \\ \hline
  \multirow{2}{*}{{DN-121}} 
  & SGM & 88.7$\pm$0.9 & 88.1$\pm$1.0 & 98.0$\pm$0.4 & 78.0$\pm$0.9 & 64.7$\pm$2.5 & 65.4$\pm$2.3 & 59.7$\pm$1.7 \\ 
  & SGM+IR & \textbf{91.7$\pm$0.2} & \textbf{90.4$\pm$0.4} & \textbf{94.3$\pm$0.1} & \textbf{87.0$\pm$0.4} & \textbf{78.8$\pm$1.3} & \textbf{79.5$\pm$0.2} & \textbf{75.8$\pm$2.7} \\ \hline
  \multirow{2}{*}{{DN-201}} 
  & SGM & 87.3$\pm$0.3 & 92.4$\pm$1.0 & -- & 82.9$\pm$0.2 & 72.3$\pm$0.3 & 71.3$\pm$0.6 & 68.8$\pm$0.5 \\ 
  & SGM+IR & \textbf{89.5$\pm$0.9} & \textbf{91.8$\pm$0.7} & -- & \textbf{87.3$\pm$1.2} & \textbf{82.5$\pm$0.8} & \textbf{80.3$\pm$0.3} & \textbf{81.5$\pm$0.5} \\ \hline
\end{tabular}}
\label{table:sgm+ir}
\end{table}

\begin{table}[!h]
\renewcommand{\arraystretch}{1.1}
\renewcommand\tabcolsep{3.0pt}
\small
\centering
\caption{
The success rates of $L_\infty$ black-box attacks crafted by VR and VR+IR on four source models (RN-34/152, DN-121/201) against seven target models. The interaction loss can boost the transferability of VR.
}
\vspace{1pt}
\resizebox{0.7\linewidth}{!}{
\begin{tabular}{c|c|ccccccc}
\hline
  Source& Method & VGG-16 & RN152 & DN-201 & SE-154 & IncV3 & IncV4 & IncResV2  \\ \hline
  \multirow{2}{*}{{RN-34}}
  & VR & 85.1 & 85.3 & 87.0 & 55.7 & 54.3 & 50.7 & 43.7 \\
  & VR+IR & \textbf{90.8} & \textbf{92.2} & \textbf{93.3} & \textbf{75.4} & \textbf{75.4} & \textbf{67.5} & \textbf{66.1} \\ \hline
  \multirow{2}{*}{{DN-121}}
  & VR & 88.8 & 88.4 & \textbf{98.2} & 72.9 & 73.5 & 72.5 & 63.6 \\
  & VR+IR & \textbf{93.0} & \textbf{93.5} & 96.2 & \textbf{83.7} & \textbf{82.8} & \textbf{84.0} & \textbf{79.8} \\ \hline
\end{tabular}}
\label{table:vr+ir}
\end{table}

\subsection{Attacks on the CIFAR-10 dataset} \label{sec:cifar}
In Table~\ref{table:ir}, we have shown that reduction of interactions could improve the adversarial transferability on the ImageNet dataset~\citep{imagenet2015}.
To further demonstrate the broad applicability of such a negative correlation, we also conducted the targeted attack on the CIFAR-10 dataset~\citep{cifar} to test the transferability of perturbations generated with the interaction loss.

Following \citet{wu2018understanding}, we chose three DNNs as the source DNN or the target DNN, which included: LeNet~\citep{mnist}, RN-20~\citep{he2016deep}, and DN-121~\citep{huang2017densely}.
We conducted the targeted attack under the $L_\infty$ norm constraint, and chose he \textit{plane} class as the target category.
The norm constraint $\epsilon$ was set to 16/255, and the step size was set to 2/255.
% We set $\lambda=1$ for the IR Attack for all source DNNs.
The transferability was computed based on the best adversarial perturbation during 50 steps via the leave-one-out (LOO) validation., which has been introduced in Appendix~\ref{sec:loo}.

As Table~\ref{table:cifar} shows, the transferability could be enhanced by reducing interactions on the targeted attack on the CIFAR-10 dataset.
Particularly, when the source DNN is RN-20 and the target DNN is DN-121, the transferability improvement was about 30\%, which was a considerable gain.

\begin{table}[!h]
\renewcommand{\arraystretch}{1.1}
\renewcommand\tabcolsep{3.0pt}
\small
\centering
\caption{The success rates of $L_\infty$ targeted black-box attacks on three source models, including LeNet, RN-20, DN-121, against three target models.
}
\resizebox{0.5\linewidth}{!}{
\begin{tabular}{c|c|ccc}
\hline
  Source& Method & LeNet & RN-20 & DN-121  \\ \hline
  \multirow{2}{*}{LeNet} & PGD & -- & 34.1$\pm$0.1 & 19.6$\pm$0.4 \\ 
  & PGD+IR & -- & \textbf{44.2$\pm$0.3} & \textbf{29.7$\pm$1.1} \\ \hline
  \multirow{2}{*}{RN-20} & PGD & 10.8$\pm$0.8 & -- & 41.9$\pm$1.3 \\ 
  & PGD+IR & \textbf{19.7$\pm$0.3} & -- & \textbf{71.8$\pm$1.0} \\ \hline
  \multirow{2}{*}{DN-121} & PGD & 10.0$\pm$0.7 & 44.2$\pm$0.3 & -- \\ 
  & PGD+IR & \textbf{18.9$\pm$0.7} & \textbf{58.5$\pm$1.0} & -- \\ \hline
\end{tabular}}
\label{table:cifar}
\end{table}

\begin{table}[h!]
 \caption{The average interaction inside adversarial perturbations generated by PGD, DI and TI.}
    \centering
    \begin{tabular}{c|cc}
    \hline
         Method & RN-34 & DN-121  \\ \hline
        Baseline~(PGD Attack)  & \textbf{0.422} & \textbf{0.926} \\  
        DI Attack  & 0.241 & 0.499  \\  
        TI Attack  & 0.379 & 0.618  \\ \hline
    \end{tabular}
    \label{tab:diti}
\end{table}
\section{Empirical Verification of Other Transferability-boosting Attacks}
We have theoretically analyzed the MI Attack, the VR Attack, and the SGM Attack. 
However, for other methods of improving adversarial transferability, such as Diversity Input (DI)~\citep{xie2019improving}, which uses random data augmentation during attacking, it is difficult to mathematically prove that they essentially reduce interactions.
Nevertheless, as Table~\ref{tab:diti} shows, we empirically demonstrated that two widely-used transferability-boosting attacks, DI and TI~\citep{dong2019evading}, also reduced interactions. 

\section{Additional Experiments on Effects of the Interaction Loss}
We conducted additional experiments to test the effects of the interaction loss.
We conducted attacks on two source DNNs (RN-34, DN-121), and transferred adversarial perturbations to seven target DNNs (VGG16, RN-152, DN-201, SE-154, IncV3, IncV4, IncResV2).

We used the following two experimental settings to compare the transferability of adversarial perturbations generated with different $\lambda$ values.

First, we re-drew the curves in \Figref{fig:lambda}(a)  by extending the $\lambda$  from the range of $[0,1.2]$ to the range of $[0,2.0]$, in order to show the performance of different $\lambda$ values. We simply changed the $\lambda$ value in the objective function (i.e. Equation~(\ref{eq:inte loss})) without any other revisions. This was the most direct way to test the effects of $\lambda$. Experimental results are shown in \Figref{fig:append_lambda_setting1}.

Besides above experimental settings, we also compared adversarial perturbations generated with different $\lambda$ values, when we controlled each perturbation to have the same attacking utility. The attacking utility was defined as follows.
\begin{equation*}
    Attacking\, Utility=\max_{y^\prime\ne y}h_{y^\prime}(x+\delta) - h_y(x+\delta)
\end{equation*}where $y$ denote the label of the input image $x$.
This setting also ensured the fairness of comparisons from a new perspective. Please see \Figref{fig:append_lambda_setting2} for experimental results.

In sum, under both experimental settings, we found that the large $\lambda$ value usually yielded a high adversarial transferability in our experiments.

\begin{figure}[!h]
    \centering
    \includegraphics[width=0.5\linewidth]{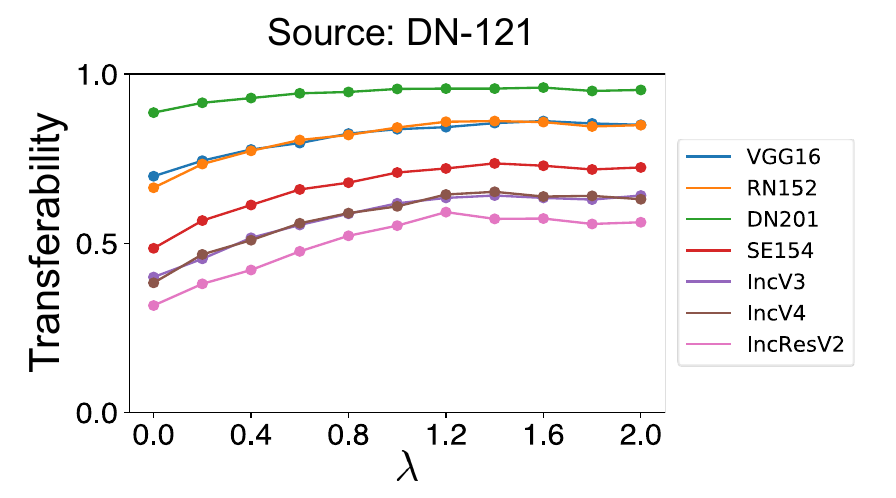}
    \caption{The success rates of black-box attacks with the IR Attack using different values of $\lambda$ under the first experimental setting.}
    \label{fig:append_lambda_setting1}
\end{figure}

\begin{figure}[!h]
    \centering
    \includegraphics[width=0.8\linewidth]{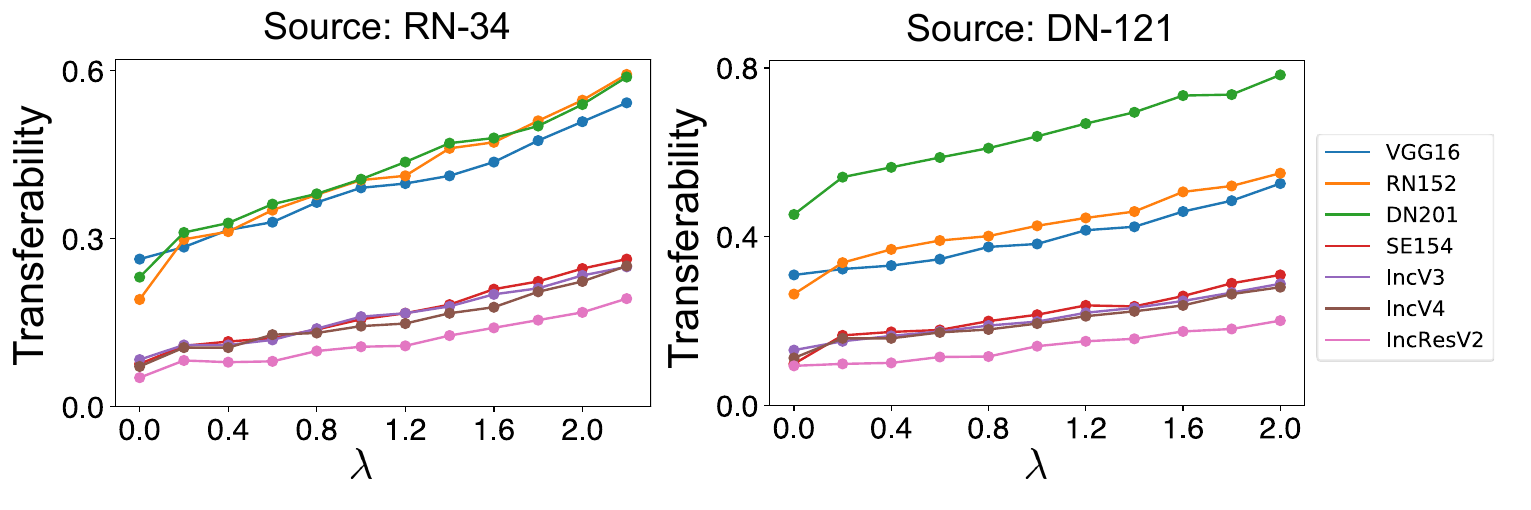}
    \caption{The success rates of black-box attacks with the IR Attack using different values of $\lambda$ under the second experimental setting.}
    \label{fig:append_lambda_setting2}
\end{figure}

\end{document}